\newtheorem{theorem}{Theorem}[section]
\newtheorem{lemma}[theorem]{Lemma}
\newtheorem{proposition}[theorem]{Proposition}
\newtheorem{corollary}[theorem]{Corollary}
\newtheorem{rmk}[theorem]{Remark}
\newtheorem{obs}[theorem]{Observation}
\newtheorem{gp}{Suggested General Principle}
\theoremstyle{remark}
\newtheorem{ex}[theorem]{Example}
\newcommand{\ToVerTwo}[1]{}
\newcommand{\FromRan}[1]{\footnote{\textcolor{orange}{RT: #1}}}
\newcommand{\Tr}{{\rm Tr}}
\newcommand{\HD}{{\rm HD}}
\newcommand{\bscs}{{\rm bscs}}
\newcommand{\rank}{{\rm rank}}
\newcommand{\brank}{{\rm rank}_\square}
\newcommand{\argbrank}{{\rm argrank}_\square}
\newcommand{\mmin}{\textstyle\min_2}
\newif\iffullversion
\def\RT#1{\textcolor{orange}{[RT: #1]}}
\newcommand{\bbZ}{{\mathbb{Z}}}
\newcommand{\CF}{{\mathcal{CFC}}}
\newcommand{\bbC}{{\mathbb{C}}}
\newcommand{\bbR}{{\mathbb{R}}}
\newcommand{\bbN}{{\mathbb{N}}}
\newcommand{\cD}{\mathcal{D}}
\newcommand{\cS}{\mathcal{S}}
\newcommand{\cW}{\mathcal{W}}
\newcommand{\cB}{\mathcal{B}}
\begin{document}
\title{Learning words in groups: fusion algebras, tensor ranks and Grokking}
\author{Maor Shutman}
\email{maorshut@protonmail.com}
\author{Oren Louidor$^{1}$}
\email{oren.louidor@gmail.com}
\thanks{The work of O.L. is supported by ISF grant nos.~2870/21 and~3782/25
, and by the BSF award 2018330.}
\address{$^{1}$ Technion - Israel Institute of Technology}
\author{Ran Tessler$^{2}$}
\address{$^{2}$ Weizmann Institute}
\thanks{The work of R.T. is supported by ISF grant no.~1729/23.}
\email{ran.tessler@weizmann.ac.il}

\begin{abstract}
In this work, we demonstrate that a simple two-layer neural network with standard activation functions can learn an arbitrary word operation in any finite group, provided sufficient width is available and exhibits grokking while doing so. To explain the mechanism by which this is achieved, we reframe the problem as that of learning a particular $3$-tensor, which we show is typically of low rank. A key insight is that low-rank implementations of this tensor can be obtained by decomposing it along triplets of basic self-conjugate representations of the group and leveraging the fusion structure to rule out many components. Focusing on a phenomenologically similar but more tractable surrogate model, we show that the network is able to find such low-rank implementations (or approximations thereof), thereby using limited width to approximate the word-tensor in a generalizable way. In the case of the simple multiplication word, we further elucidate the form of these low-rank implementations, showing that the network effectively implements efficient matrix multiplication in the sense of Strassen. Our work also sheds light on the mechanism by which a network reaches such a solution under gradient descent. 
\end{abstract}

\maketitle

\section{Introduction, contribution}
\subsection{Background and motivation}
Studying the means by which statistical models learn and represent operations on finite sets is receiving quite a lot of attention these days. By an operation we refer to a bi-variate function $f: G \times G \to G$, where $G$ is a general finite set. While there are many real-world examples which fit into this framework, considerable effort is centered on studying the more tractable case when there is an explicit mathematical formulation which governs the result of the operation. Questions of interest here focus, as usual, on expressibility and interpretability, generalization and phenomenological aspects of the dynamics. As the literature shows, such algorithmic setups have proved to be a fruitful soil for reconstructing real-world phenomena, while keeping the overall complexity and analytic tractability of the problem low and high respectively.

Perhaps the most natural of such operations, at least from a mathematical point of view, is the multiplication operation of a mathematical group. The simplest case of a cyclic group of order $p$, a canonical representative of which is $\bbZ_p := \{0,\dots, p-1\}$ with the operation being addition modulo $p$, was studied by Power et al in~\cite{power2022grokking}. This worked showed that a simple
decoder-only transformer architecture is able to represent and, moreover, learn such an operation based on a a fraction of all $p^2$ examples. Interestingly, the authors observed that during training, both train and test accuracy transitioned very sharply from a trivial level to $100\%$, with the transition in the test-set lagging behind and occurring well beyond the interpolation threshold. This phenomenon, which the authors termed ``Grokking'' was later found to occur in many other architectures and learning tasks (see related work section).

In an effort to understand this phenomenon better, Gromov~\cite{gromov2023grokking} studied the simpler setup of a standard Two Layer Perceptron (henceforth TLP, i.e. an MLP with one hidden layer) and demonstrated that the network still exhibits Grokking given the same task of addition modulo $p$. Moreover, he proposed a  ``solution-ansatz'' for the weights of the network, composed of Fourier basis vectors, and showed that this solution achieves asymptotically zero test loss and perfect accuracy. Under his ansatz, the rows of the weight matrices are multiples of real-valued Fourier basis vectors whose frequency is the same for matching rows across all weight matrices. His work then verified empirically that the network converges to this solution under standard first order optimization algorithms, given a partial subset of all examples. Convincing evidence of the convergence to suitable variants of this solution have been presented for a simple transformer-based architectures as well~\cite{nanda2023progress} at around the same time.

The case of a general group was studied shortly after in~\cite{chughtai2023toy}. The authors showed that a similar architecture as that in~\cite{gromov2023grokking} is able to learn and generalize many other groups, including non-cyclic and non-abelian ones. Moreover, they proposed and empirically verified, a generalization of the Fourier-based solution for the general case, using (real versions of) irreducible representations, which are the analogs of the Fourier vectors from the cyclic case. Lastly, they showed that the system exhibits ``Grokking'' in the same sense as before.

\subsection{Contribution}

In this work we go a step further and generalize the class of bi-variate operations to that of \emph{group words}. Given a group $G$ with a multiplication operator $\cdot$, a word $w$ is a non-empty string of finite length over the literals $a,b,a^{-1},b^{-1}$, which represents an expression involving two arguments $a$ and $b$. For example, $w=aba^{-1}$ represents the expression $a \cdot b \cdot a^{-1}$. In what follows we identify a word with the bi-variate operation defined by the expression it represents, so that the word in the last example is also the operation $w(a,b) := a \cdot b \cdot a^{-1}$. This is clearly a natural extension of the usual group multiplication.

Using the same simple TLP model used by Gromov in~\cite{gromov2023grokking}, we first verify that the network is able to learn and generalize arbitrary words and groups and that grokking is still exhibited as before. The affirmative results are summarized in Figure~\ref{f:A1}. The required fraction of examples and how pronounced the grokking turns out to be, depends on the 
underlying group and word, as well as on the width of the model. 

Next, we turn to study this problem theoretically. Our analysis relies on {\em representation theory}, as in~\cite{chughtai2023toy}. However, we also appeal to two new mathematical notions: the \emph{fusion algebra} associated with the group $G$, and the {\em rank of the tensor} representing the learning task. We start by recasting the learning task as that of realizing a $3$-tensor in $(\bbR^{|G|})^{\otimes 3}$, with the first two components being the one hot encodings of the operation arguments and the last being the one hot encoding of the output element. We call such tensor a word tensor. We then use irreducible representations, or more precisely their real-valued analogs, basic self-conjugate (bsc) representations, to find low rank (or sparse) representations of this tensor. Existence of such sparse representations should be the key reason for the ability of the network to represent and find high accuracy solutions which generalize well.

To this end, we project the word-tensor onto tensor-products of the sub-spaces corresponding to triplets of bscs and use fusion rules to rule out triplets where the projection is trivial. We find that often there are relatively few such triplets in the ``bsc-support'' of the word-tensor. We then use this decomposition to find classes of low-rank representations which implement the word tensor. By definition, each such class gives a bound on the rank of the word-tensor, and thus an optimal bound can be obtained by solving the combinatorial optimization problem of finding the minimum among them. Considering several examples, we observe that the rank of the word tensor is often much lower than the a-priori upper bound of $|G|^2$.

Next, we check whether the network is indeed able to find such low rank representations. To make the connection with the theory more straightforward, we replace the TLP model with a variant, which we call the Hadamard, or HD, model. The activation function applied to the neurons in the hidden layer in the case of TLP is replaced by taking products of pairs of matched neurons in the latter. We explain why this model is likely to capture the phenomenology of the original model, and show that it is comparable to one considered by  Gromov~\cite{gromov2023grokking}.
The advantage of working with this model is that at width $m$, it can implement any $3$-tensor of rank at most $m$ in a straightforward way, which can also be read directly from its weight configuration. We find that under standard first order optimization schemes, the HD network is able to find $100\%$ accuracy solutions given the full dataset of many groups and words. 

To study the terminal weight configuration, we project the rows of the weight matrices onto the subspaces associated with the bscs of the group. We observe that, in alignment with the theoretical study (and in generalization of the case of the simple group multiplication), the model indeed finds a low rank implementation of the word-tensor (or an approximation thereof, if the width of the model is too small) by representing it as sums of $3$-tensors whose bsc-support is relatively small. Remarkably, in many cases the terminal weight configuration of the model is one of the local minima of the combinatorial optimization problem mentioned above (again, sometimes only an approximation thereof).
Upon verifying that the outcome remains qualitatively the same under a partial dataset and the original TLP model, we conclude that word operations are learned through a representation and discovery of a low rank version of the required tensor and that this representation relies on a decomposition of the tensor along the bscs of the group. 

Next we turn to apply our theory to the case of the group multiplication studied in previous works. In this case, the bsc-support of the word tensor is composed of triplets of bscs where all the components are the same, and our general theory gives rise to a class of low rank representations, which coincides with the ones found by Nanda~\cite{chughtai2023toy} and Gromov~\cite{gromov2023grokking}. We then derive theoretical bounds on the rank of the word tensor, by bounding the tensor rank of the components in its bsc-support. The latter involve the (generally unknown) tensor rank of matrix multiplication as an implicit constant. Bounds on the latter are known since the work of Strassen~\cite{strassen1969gaussian} (see also \cite{ottaviani2020tensor} for a more modern survey). The rank of word tensor directly relates to the width of the model, required to fulfill the learning task.

Turning to experiments, we again switch first to the HD model where the correspondence with the theory is more direct. The class of representations mentioned above is implemented by this model via, what we call, mono-bsc-aligned weight configurations. In such weight configurations corresponding rows of the weight matrices belong to the subspace of the same unique bsc. We show that the space of such weight configurations is stable under a step of gradient based optimization algorithms. We use this to argue that, starting from an initial weight configuration, the dynamics effectively decouples, with each subset of rows of the weight-matrices, corresponding to the same bsc, evolving independently as a stand alone model, minimizing the corresponding projection of the total loss.

This observation has two consequences. First, it allows us to study each bsc-component of the low-rank representation of the word tensor independently. Remarkably, again, we find that our rank bounds are often met (at least in the dimension where this can be verified), showing that the network discovers the minimal-rank matrix multiplication tensor on its way to finding a low rank solution to the full problem. Second, the observation reveals aspects of the mechanism by which the model reaches its terminal weight configuration. Starting from a random initialization, as the model evolves, the rows of the weight matrices partition in tandem to subsets which correspond to different bscs. Then for each such subset the model effectively evolves independently, by minimizing the corresponding bsc-loss, using as many rows as assigned under this partition. As in the general case, we show that this also happens with a strict subset of the dataset and under the TLP model as well. 
	
\section{Learning task, model and preliminary empirical study}
\subsection{Setup}
\label{s:10.1}
\subsubsection{Notation}
Henceforth we shall index arrays which correspond to the elements of the group $G$ via the group elements themselves, thus we may write $x \in \bbR^G$, in place of $x \in \bbR^{|G|}$ and denote by $(x_g : g \in G)$ the elements of such vector. The only place where we must resort to integer indices is when the model is implemented. In this case, we shall fix an arbitrary ordering of the elements of $G$ and use the place in this ordering as the bijection between an element in $G$ and a number in $\{0,\dots, |G|-1\}$, which will be consistently used throughout the implementation of the model.

For $g \in G$, we shall write $1_g$ for the one-hot-encoding of $g$, namely the vector in $\bbR^G$ satisfying $(1_g)_h = \delta_{g=h}$ for all $h \in G$, where $\delta_{x,y}$ is the usual Kronecker delta function. Henceforth, all vectors are taken to be column vectors by default. Given two multi-dimensional arrays $A$ and $B$, we shall write $A|B$ for the concatenation of the two along an axis, which will be implicitly understood from the context, or otherwise specified.

\subsubsection{Learning task}
A \emph{word} $w$ in letters $a,b,a^{-1},b^{-1},$ is a finite string made of the letters $a$,$b$,$a^{-1}$ and $b^{-1}$. We shall identify such word with the operation it induces naturally on a group by interpreting the word as an expression in the arguments $a$, $b$ with $a^{-1}$, $b^{-1}$ being their group inverses and concatenation taken as applying the group multiplication. For example, $w=abab^{-1}aaa,$ is identified with the operation $w: G \times G \to G$, given by $w(a,b) = a \cdot b \cdot a \cdot b^{-1} \cdot a^3$. We shall occasionally refer to such group operation as a word operation. The learning task is that of learning group word operations.

\subsubsection{Encoding, decoding and dataset}
As a model's input and output are real valued vectors, we shall 
use one hot encoding to encode the group elements of the operations arguments and result. Under this encoding, the task becomes that of learning the full dataset:
\begin{equation}
\label{e:12.1}
\mathcal{D}_{G,w} = \Big\{(u,v) :\: u=1_a|1_b,\, v=1_c\,,\,\, c = w(a,b)\,,\,\, a,b \in G\Big\} \subseteq \bbR^{2|G|} \times \bbR^G
\,,
\end{equation}
where $u$ is the input and $v$ is the output (or label). In the other direction, the $\bbR^G$-output of a model is decoded via the argmax function. Thus, a model which computes the function $f: \mathbb{R}^{2|G|} \to \mathbb{R}^{|G|}$ is considered as implementing the operation,
\begin{equation}
\label{e:12.2}
	w_f(a,b) := {\rm argmax}_{c \in G}\, f(1_a|1_b)_c \,.
\end{equation}

\subsubsection{Loss and accuracy}
Loss will be computed using the MSE function, so that given samples 
$\cS = \big((u_i=1_{a_i}|1_{b_i},\, v_i=1_{c_i}): i=1,\dots,n \big)\subseteq \cD_{G,w}$, the total loss (empirical risk) is
\begin{equation}
\label{e:1.15}
	L(\cS; W) \equiv L_{f}(\cS; W) := \frac{1}{|G|n} \sum_{i=1}^n \big\|f(u_i;W) - v_i\big\|_2^2 \,,
\end{equation} 
and the accuracy is given by
\begin{equation}
A(\cS;W) \equiv A_{f}(\cS;W) := \frac{1}{n} \sum_{i=1}^n \delta_{w_f(a_i,\,b_i)=c_i} \,.
\end{equation}
Normalization by the size of the group in the total loss permits comparison between losses when the model is run on different groups.
\begin{figure}[t!]
\begin{minipage}{0.5\linewidth}
    \centering
    \fbox{\includegraphics[scale=0.5]{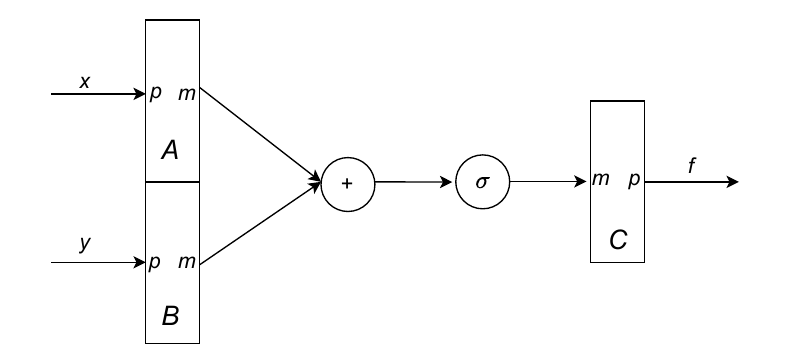}}
\end{minipage}\hfill
\begin{minipage}{0.5\linewidth}
    \centering
    \fbox{\includegraphics[scale=0.5]{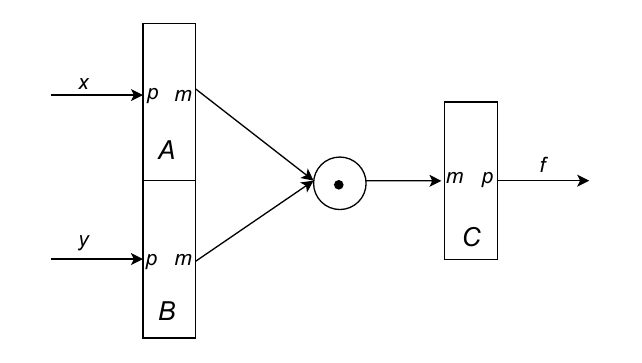}}
\end{minipage}
\captionsetup{width=0.98\linewidth}
\caption{\small A schematic diagram of the TLP ({\em Left)} and HD ({\em Right}) networks. Rectangles denote linear fully connected layers with input and output dimensions indicated inside. Circles denote element-wise operations. }
\label{f:0}
\end{figure}

\subsubsection{Model: Two-Layer Perceptron}
\label{s:10.2}
We shall consider first a standard two layer perceptron with one hidden layer of width $m \geq 1$, and activation function $\sigma: \mathbb{R} \to \mathbb{R}$. Given weights $W = (W^{(1)}, W^{(2)})$, where $W^{(1)} \in \bbR^{m \times 2|G|}$, $W^{(2)} \in \bbR^{m \times |G|}$, the {\em Two-Layer Perceptron} Model (TLP) implements
 $f_{\rm TLP, \sigma}(\,\cdot\,;W): \mathbb{R}^{2|G|} \to \mathbb{R}^{|G|}$, given by
\begin{equation}
\label{e:3.1}
	f_{\rm TLP, \sigma}(u; W) :=  W^{(2)} \sigma \big(W^{(1)} u\big) 
	\quad , \qquad u \in \bbR^{2|G|} \,,
\end{equation}
with $\sigma$ applied entry-wise. 
Interpreting the input as $u = x|y$ for $x,y \in \bbR^G$ and the weights as 
\begin{equation}
\label{e:1}
W^{(1)} = A|B \,,\,\, \qquad W^{(2)} = C^T \quad;\qquad A,B,C \in \mathbb{R}^{m\times G}\,,
\end{equation}
we shall also think of $f_{\rm TLP, \sigma}$ as a function from $\bbR^G \times \bbR^G$ to $\bbR^G$ with weights $A,B,C$, via the identification
\begin{equation}
\label{e:3.2}
f_{\rm TLP, \sigma}\big(x,y\,;\; A,B,C\big) \equiv f_{\rm TLP, \sigma}\big(x|y\;;\; (A|B, C)\big) =  C^T\sigma(Ax+By\big) 	\,.
\end{equation}
See Figure~\ref{f:0}
for a schematic diagram of the network.
The set of all weight assignments for the model is 
\begin{equation}
\label{e:20}
	\cW_G = \Big \{W = (W^{(1)}, W^{(2)}) = (A,B,C) \  : \ \ A,B,C \in \bbR^{m \times G},\, m \geq 1 \Big\} \,.
\end{equation}
Given $W = (A,B,C) \in \cW_G$, we shall write $|W|$ for the {\em width} of $W$, namely $m$ such that $A,B,C \in \bbR^{m \times G}$. We shall also write $\cW_{G,m}$ for the restriction of $\cW_G$ to all $W$ with $|W|=m$.

\subsubsection{Optimization and initialization}
\label{s:2.1.6}
To avoid unrelated effects, in most experiments we use pure Gradient Descent without acceleration or additional regularization. Formally, 
given learning rate $\eta > 0$, and sample set $\cS$,  the one-step gradient descent evolution is the function
${\rm GD} \equiv {\rm GD}_{f,\eta,\cS} : \cW \to \cW$ given by 
\begin{equation}
\label{e:1.17}
{\rm GD}_{f, \eta,\cS}(W) := W - \eta \nabla_w L_{f}(\cS;W) \,.
\end{equation}
For $t \in \bbN$, We shall write ${\rm GD}^t$ for the $t$-time composition of ${\rm GD}$ with itself, so that ${\rm GD}^t(W)$ is the the weights of the model after $t$ steps of gradient descent, starting from $W$.

Initial weights are chosen from the centered Gaussian distribution with variance inversely proportional to the width, as customary.

\subsection{Empirical study}
We first tested whether the TLP model with standard activation functions can learn a word operation when trained on a a subset of the full dataset. We tried several groups and words. The former includes cyclic groups, abelian and non-abelian (see Section~\ref{s:3} for the precise definitions). The latter includes words of different lengths. Initialization and optimization as indicated in Subsection~\ref{s:2.1.6}.
The results, a partial account of which is given in Figure~\ref{f:A1}, clearly showed that the model is able to robustly learn all groups and words, once the width of the network is sufficiently large (depending on the group, word and activation function). Grokking was also frequently observed.

\begin{figure}[!t]	
\begin{minipage}{0.75\linewidth}
\resizebox{\textwidth}{!}{
\begin{tabular}{|c|c|c|c|c|c|c|c|}
\hline
Group & Word & $N$ & Activation & $\alpha$ & \thead{Median final test \\accuracy} & \thead{Max. final test \\accuracy} & \thead{Max. final train\\ loss}  \tabularnewline 
\hline
\hline
$D_{8}$ & $a^2 b$ & 48 & ReLU & 0.8 & 0.923077 & 1 & 0.00022\tabularnewline
\hline
$D_{8}$ & $aba^{-1}ba^2b^3ab^{-1}$ & 32 & square & 0.6 & 1 & 1 & 7.6e-05\tabularnewline
\hline
$D_{8}$ & $aba^{-1}ba^2b^3ab^{-1}$ & 48 & sigmoid & 0.7 & 1 & 1 & 0.00027\tabularnewline
\hline
$M_{5}(2)$ & $aba^{-1}ba^2b^3ab^{-1}$ & 64 & ReLU & 0.5 & 0.996094 & 1 & 0.00089\tabularnewline
\hline
$M_{5}(2)$ & $aba$ & 64 & ReLU & 0.7 & 1 & 1 & 0.0038\tabularnewline
\hline
$M_{5}(2)$ & $a^2 b$ & 64 & square & 0.5 & 1 & 1 & 0.00034\tabularnewline
\hline
$S_{4}$ & $aba$ & 64 & square & 0.7 & 0.979769 & 1 & 0.00061\tabularnewline
\hline
$S_{4}$ & $a^2 b$ & 196 & sigmoid & 0.8 & 0.982759 & 1 & 3.7e-05\tabularnewline
\hline
\end{tabular}
}
\end{minipage}\hfill
\begin{minipage}{0.25\linewidth}
    \includegraphics[width=1\textwidth]{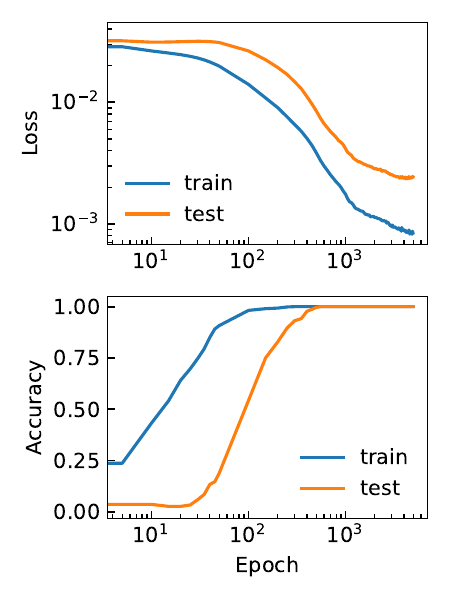}
\end{minipage}
\captionsetup{width=0.98\linewidth}
\caption{\small {\em Left:} Training results using the TLP model on various words and groups. $\alpha$ is fraction of samples given during test. Learning rate is 0.005, optimizer is AdamW. 20 runs per configuration.
{\em Right:} Loss and accuracy evolution during training for the  group $M_5(2)$, word $aba$ with the TLP model of width $N=64$, and the ReLU activation function, and with $\alpha=0.7$ fraction of the samples as the training set.
}
\label{f:A1} 
\vspace{0.3cm}
\end{figure}
 
\section{Background on groups, representations, tensors and fusion}
\label{s:3}
Next we briefly recall the necessary mathematical background on group, representation and fusion.
\subsection{Group theory}
A \emph{group} is a non-empty set $G$ with an binary operation $(x,y) \mapsto x \cdot y \equiv xy$, usually referred to as the \emph{group multiplication}, such that:
\begin{enumerate}
	\item The multiplication operation is {\em associative}, namely $(xy)z = x(yz)$, for all $x,y,z\in G$.
	\item There exists a \emph{unit element} $e\in G$, such that
$ex=xe=x$ for all $x \in G$.
	\item For all $x \in G$ there exists an {\em inverse} $x^{-1}$ such that $xx^{-1}=x^{-1}x = e$.
\end{enumerate}
A group is called {\em Abelian} if the operation is \emph{commutative}, namely $xy = yx$, for all $x,y \in G$. A subset $H \subseteq G$ {\em generates} $G$ if its closure under the group operation is $G$. An abelian group is called {\em cyclic} if it is generated by $\{g\}$ for some $g \in G$, which is then called a {\em generator}. A group is called {\em finite} if $|G| < \infty$. Figure~\ref{f:A2} includes various common finite groups and their properties. The set ${\rm GL}(\bbC^d)$, which includes all $d\times d$ invertible complex-valued matrices, forms an infinite group under the matrix multiplication as the group operation. This group will play an important role in what comes next.

\begin{figure}[t!]
\setlength{\tabcolsep}{1pt}
\scriptsize
\begin{tabular}{|p{0.05\linewidth}|p{0.2\linewidth}|p{0.3\linewidth}|p{0.03\linewidth}| p{0.1\linewidth}|p{0.28\linewidth}|}
\hline 
{\bf Sym} & {\bf Name} & {\bf Description} & {\bf Size} & {\bf Properties} & {\bf Notes} \tabularnewline
\hline 
\hline 
$\bbZ_p$ & Additive group mod $p$ & $\bbZ_p = \{0,\dots, p-1\}$ with  addition mod $p$. & p & Cyclic &  \tabularnewline
\hline 
$\bbZ_p^\star$ & Multiplcative group mod $p$ & 
    $\mathbb{Z}^\star_p = \{1,\dots, p-1\}$ for $p$ prime, with the multiplication mod $p$. & p-1 & Cyclic & Isomorphic to $\bbZ_{p-1}$ under the isomorphism $\bbZ_{p-1} \ni k \mapsto g^k \in \mathbb{Z}^\star_p$, for any generator $g \in \mathbb{Z}^\star_p$.\tabularnewline
\hline 
$S_n$ & Symmetric Group & 
The set of all bijections from $\{1,\dots,n\}$ to itself, with the operation being composition. & $n!$ &  Non-Abelian for $n \geq 3$ .& \tabularnewline	
\hline 
$D_n$ & The Dihedral Group & Includes all symmetries of a regular $n$-gon, with the operation being composition. & $2n$ & Non-Abelian for $n \geq 3$. &  The group is generated by two elements a $2\pi/n$ rotation and a reflection by a symmetry axis. \tabularnewline	
\hline
$Q_8$ & The Quaternionic Group & $Q_8=\{\pm 1,\pm i,\pm j,\pm k\}$ with $i^2=j^2=k^2=ijk=-1$. & 8 & Non Abelian & Has quaternionic representations.\tabularnewline	
\hline
$M_5(2)$ & Modular maximal cyclic \newline group of order 32 & Generated by $a,b$ whose only relations is $a^16=b^2=1,~bab=a^9$.  & 32 & Non Abelian & Has $2$ dimensional non self conjugate representations.\tabularnewline	
\hline
\end{tabular}
\captionsetup{width=0.98\linewidth}
\caption{\small Various finite groups and their properties.}
\label{f:A2} 
\vspace{0.3cm}
\end{figure}

\subsection{Representation theory}
Next, let us briefly recall the theory of group representation, We follow  \cite[Sections 1-3]{fulton2013representation}, and all lemmas in this subsection either appear there, or are straight forward to derive.
\subsubsection{Group representation}
Given a (finite) group $G$ and $d \geq 1$ a \emph{group representation} (over $\bbC$) $\phi$ is a \emph{homomorphism} between $G$ and the group ${\rm GL}(\bbC^d)$. That is, $\phi:G\to {\rm GL}(\bbC^d)$ satisfies 
\begin{equation}
	\phi(gh) = \phi(g) \phi(h)
	\quad ; \qquad g,h \in G
\end{equation}
The {\em dimension} of $\phi$ is $\dim(\phi) \equiv d_\phi = d$.
Two representations $\phi,\psi:G\to {\rm GL}(\bbC^d)$ are \emph{isomorphic}, or {\em versions} of each other, if 
there exists a change-of-basis matrix $P \in {\rm GL}(\bbC^d)$ such that 
$\phi(g) = P \psi(g) P^{-1}$ for all $g \in G$. 
The \emph{conjugate representation} $\bar\phi$ is defined as $g\to\overline{\phi(g)}$, where the latter means the conjugation of every entry of the matrix $\phi(g).$
A representation is \emph{self conjugate} (sc)  if it 
is isomorphic to its conjugate representation. We shall often omit the word representation and just write sc for a self-conjugate representation.

\subsubsection{Basic self conjugate representations}
The \emph{direct sum} of $\phi:G\to {\rm GL}(\bbC^d),\psi:G\to {\rm GL}(\bbC^{d'})$ is the representation $\phi\oplus\psi:G\to {\rm GL}(\bbC^{d+d'})$, given by
\begin{equation}
\label{e:2.1}
(\phi\oplus\psi)(g) = \begin{pmatrix}
    \phi(g)&0\\
    0&\psi(g)
\end{pmatrix} \,
\end{equation}
A representation is called \emph{basic self conjugate} or {\em bsc} if it is sc, but not isomorphic to the direct sum of two sc representations. We shall write ${\rm bscs}(G)$ for the set of all bscs of $G$ (up-to isomorphisms). Note that, the latter always includes the trivial representation $\phi \equiv 1$, which we henceforth denote by {\rm Triv}. Figure~\ref{f:A3} includes explicit examples of bscs of various groups and their properties. 
bscs are the sc-analogs of the (more familiar) {\em irreducible representations} or {\em irreps}, which are defined in the same way, albeit without the sc requirement. bscs are more suitable when working over vectors spaces over the reals, as is necessitated by our (real-valued) models. 

\begin{figure}[t!]
\setlength{\tabcolsep}{1pt}
\scriptsize
\begin{minipage}[t]{0.7\linewidth}
\vspace{0pt}
\begin{tabular}{|c|c|c|c|c|p{0.25\linewidth}|}
\hline 
$G$ & $\phi$ & $d$ & T & $D$ 
& Notes \tabularnewline
\hline 
\hline 
$\bbZ_p$ & $\phi_0 = {\rm Triv}$ & $1$ & I & $1$  & \tabularnewline
& $\phi_{p/2}(k) = (-1)^k$ & $1$ & I & $1$ 
& if $p$ even \tabularnewline
& $\phi_j(k) :=
\begin{pmatrix}
\cos \frac{2\pi j}{p}k &-\sin\frac{2\pi j}{p}k\\
\sin\frac{2\pi j}{p}k&\cos\frac{2\pi j}{p}k \,,
\end{pmatrix}
\ ;\ \  j \in \big[1, \lfloor (p-1)/2 \rfloor\big]
$ & $2$ & II & $2$ 
& \tabularnewline
\hline
$\bbZ_p^\star$ & $\phi^\star_j(g^k) = \phi_j(k)
\ \ ;\ \ \  j = 0, \dots, \lfloor p/2 \rfloor \,.$
& & &  
 & $\phi_j$ bsc of $\bbZ_p$\newline $g$ generates $\bbZ_p^\star$ \tabularnewline
\hline
$Q_8$ & 
$ 1\mapsto\begin{pmatrix}1&0\\0&1
\end{pmatrix},~i\mapsto\begin{pmatrix}\iota &0\\0&-\iota
\end{pmatrix},~j\mapsto\begin{pmatrix}0&1\\-1&0
\end{pmatrix},~k\mapsto\begin{pmatrix}0&\iota\\\iota&0
\end{pmatrix}$
& $2$ & III & $4$ & $\iota = \sqrt{-1}$ \tabularnewline
\hline
$S_3$& 
$\phi_1 = {\rm Triv}$, $\phi_2(\sigma) = sgn(\sigma)$
& $1$ & I & & $\simeq D_3$ 
\tabularnewline
& $\phi_3(a)=\begin{pmatrix}\cos(\frac{2\pi}{3}) &-\sin(\frac{2\pi}{3})\\
\sin(\frac{2\pi}{3}) &\cos(\frac{2\pi}{3})\end{pmatrix}
\,,~~\phi_3(b)=\begin{pmatrix}
    1&0\\
    0&-1
\end{pmatrix}\, $
& $2$ & II & $4$ &  $a=(123)$, $b=(12)$ generators
\tabularnewline
\hline
\end{tabular}
\end{minipage}\hfill
\begin{minipage}[t]{0.3\linewidth}
\vspace{0pt}
\flushright
\begin{tabular}{|c|c|c|c|}
\hline 
$G$ & $d$ & Type & $\#$ 
\tabularnewline
\hline 
\hline 
$\mathbb{Z}_n$ &1 &I &2~\text{if $2|n$, otherwise }1 
\tabularnewline
&2&II&$\lfloor\frac{n-1}{2}\rfloor$
\tabularnewline
\hline
$S_4$ & 1 & I & 2 
\tabularnewline
& 2 & I & 1 
\tabularnewline
& 3 & I & 2 
\tabularnewline
\hline
$M_5(2)$ & 1 & I & 4 
\tabularnewline
 & 2& II& 5
 \tabularnewline
 & 4& II& 2
 \tabularnewline
\hline
$Q_8$ & 1 & I & 4 
\tabularnewline
 &2 & III& 1
 \tabularnewline
\hline
$D_{n}$ & 1 & I & 4~\text{if $2|n$, otherwise }2 
\tabularnewline
 &2 &I &$\lfloor\frac{n-1}{2}\rfloor$ 
\tabularnewline
\hline
\end{tabular}
\end{minipage}
\captionsetup{width=0.98\linewidth}
\caption{\small {\em Left:} Explicit examples of bscs for various groups.
{\em Right:} Summary information on the bscs of various groups, namely the number of bscs of a given dimension $d$ and type.}
\label{f:A3} 
\vspace{0.3cm}
\end{figure}

\subsubsection{The space of matrix coefficients of a representation}
The {\em space $R_\phi$ of (real) matrix coefficients} associated with a $d$-dimensional sc representation $\phi$ is the subspace of $\bbR^{G}$ spanned by the real and imaginary parts of the $d^2$ matrix entries of (a version of) $\phi$, viewed as vectors in $\bbC^G$, namely
\begin{equation}
\label{e:6}
R_\phi := {\rm span}\, \Big\{
	\Re(\phi_{i,j}), \Im(\phi_{i,j}) :\: i,j \in [d_\phi] \Big\}
	\subseteq \bbR^{G} \,,
\end{equation} 
We will occasionally refer to $R_\phi$ just as the subspace {\em associated} with the representation $\phi$.
$R_\phi$ is invariant under isomorphisms of the representation and may have a smaller dimension than $2d^2$. 

If $\psi,\phi$ are two different irreps or bscs, then $R_{\phi \oplus \psi} = R_\phi \oplus R_\psi$, and
their corresponding subspaces are orthogonal with respect to the standard inner product
\[\langle u,v\rangle:=\sum_{g\in G}u_g \bar{v}_g.\]
In particular, every sc $\phi$ can be uniquely decomposed into a direct sum of bscs, and we shall denote the set of such bscs by $\mathrm{bscs}(\phi)$.
Similarly, the subspaces corresponding to all bscs of $G$ form an orthogonal decomposition of $\bbR^{|G|}$, namely
\begin{equation}
\label{e:7}
	\bbR^{|G|}=\bigoplus_{\phi \in {\rm bscs}(G)} R_{\phi} \,.
\end{equation}
In particular, every $v \in \bbR^G$ can be uniquely written as an orthogonal sum of its projections onto the subspaces corresponding to all bscs of $G$, and we shall write $\bscs(v)$ for the set of bscs for which this projection is non null.

\subsubsection{Types of bsc representations}
A representation
is {\em real} if it is isomorphic to a representation whose matrix entries are real. 
While every real representation is clearly sc, the converse is not true. A representation is called \emph{pseudoreal}, or \emph{quaternionic} if it is sc but not real. We will refer to a non sc representation as  \emph{complex}. 
\begin{lemma}\label{lem:triad}Every bsc is either real and irreducible (\emph{type I}), real but of the form $\phi\oplus\bar{\phi}$ for complex and irreducible $\phi$ (\emph{type II}) or quaternionic and irreducible (\emph{type III}). Conversely, evrery representation of one of these types is a bsc.    
\end{lemma}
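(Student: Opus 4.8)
The plan is to reduce the statement to the unique decomposition of a complex representation into irreducibles (Maschke's theorem, available from the cited sections of \cite{fulton2013representation}) together with the completely elementary trichotomy of \emph{irreducible} representations. First I would record that trichotomy: an irrep $\phi$ is either not self conjugate (``complex''), or self conjugate and realizable over $\bbR$ (type I), or self conjugate but not realizable over $\bbR$, i.e.\ quaternionic (type III); these three cases are exhaustive and mutually exclusive straight from the definitions, so no Frobenius--Schur machinery is needed. I would also isolate two auxiliary facts used repeatedly below: (i) if $\phi$ is complex then $\phi\not\cong\bar\phi$; and (ii) a cancellation property --- if $\rho\cong\psi\oplus\rho'$ and $\rho\cong\psi\oplus\rho''$ then $\rho'\cong\rho''$ --- which follows from the uniqueness of the irreducible decomposition.

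For the converse direction (a representation of one of the three types is a bsc) I would argue directly. A type I or type III representation $\phi$ is self conjugate by definition and irreducible, hence indecomposable; in particular it is not a direct sum of two nonzero subrepresentations, let alone of two self conjugate ones, so it is a bsc. For a type II representation $\phi\oplus\bar\phi$ with $\phi$ complex and irreducible: it is self conjugate since $\overline{\phi\oplus\bar\phi}=\bar\phi\oplus\phi\cong\phi\oplus\bar\phi$, and it is realizable over $\bbR$ since the realification of $\phi$ (viewing $\bbC^{d_\phi}$ as $\bbR^{2d_\phi}$) is a real representation with complexification $\phi\oplus\bar\phi$; moreover, by uniqueness of the irreducible decomposition and fact (i), the only way to write $\phi\oplus\bar\phi$ as a direct sum of two nonzero subrepresentations is as $\phi\oplus\bar\phi$ itself, and neither summand is self conjugate, so $\phi\oplus\bar\phi$ is not a sum of two self conjugate representations --- hence it is a bsc.

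For the forward direction, let $\rho$ be a bsc. It is self conjugate and contains some irreducible subrepresentation $\phi$, which falls into one of the three classes above. If $\phi$ is of type I or III, then $\phi$ is itself self conjugate; writing $\rho\cong\phi\oplus\rho'$ by Maschke, the chain $\phi\oplus\rho'\cong\rho\cong\bar\rho\cong\bar\phi\oplus\bar\rho'\cong\phi\oplus\bar\rho'$ together with cancellation yields $\rho'\cong\bar\rho'$, so $\rho'$ is self conjugate; were $\rho'$ nonzero this would exhibit $\rho$ as a direct sum of two self conjugate representations, contradicting the bsc property, so $\rho=\phi$, of type I or III respectively. If $\phi$ is complex, then $\bar\phi$ is a subrepresentation of $\bar\rho\cong\rho$ and, since $\phi\not\cong\bar\phi$, the self conjugate subrepresentation $\phi\oplus\bar\phi$ embeds in $\rho$; the same complement-plus-cancellation argument forces $\rho=\phi\oplus\bar\phi$, of type II. Since the three classes are exhaustive, every bsc is of type I, II or III, and mutual exclusivity of the types follows as above (I versus II by irreducibility, III versus I and II by non-realizability over $\bbR$).

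I do not expect a real obstacle here --- the lemma is essentially a repackaging of the standard real/complex/quaternionic classification of representations of finite groups --- but the step needing the most care is the bookkeeping in the forward direction: verifying that ``removing'' a self conjugate (respectively conjugation-closed) subrepresentation from a self conjugate $\rho$ leaves a self conjugate complement. This is exactly where cancellation for the irreducible decomposition does the work, and I would state it as an explicit sublemma rather than leave it implicit.
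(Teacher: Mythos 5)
Your proof is correct. The paper itself does not prove this lemma --- it is covered by the blanket remark that all lemmas in that subsection ``either appear in \cite[Sections 1--3]{fulton2013representation} or are straightforward to derive'' --- so there is no in-paper argument to compare against; the classical route in the cited source goes through invariant bilinear forms and the Frobenius--Schur indicator, whereas you take the more elementary path of complete reducibility plus uniqueness of the irreducible decomposition (cancellation). Both work; your version has the advantage of using nothing beyond Maschke and character-theoretic uniqueness, and the key steps all check out: the definitional trichotomy for irreps is indeed exhaustive, the cancellation sublemma is valid over $\bbC$ for finite groups, the complement $\rho'$ of a self-conjugate (or conjugation-closed) summand inside a self-conjugate $\rho$ is again self-conjugate by exactly the chain $\phi\oplus\rho'\cong\rho\cong\bar\rho\cong\phi\oplus\bar\rho'$ you write, and in the complex case the multiplicity of $\bar\phi$ in $\rho\cong\bar\rho$ is forced to be at least that of $\phi$, so $\phi\oplus\bar\phi$ does embed. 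The only point worth making fully explicit when writing this up is the one you already flag: that the realification of a complex irreducible $\phi$ complexifies to $\phi\oplus\bar\phi$, which is what certifies that type~II representations are real in the paper's sense (isomorphic to a representation with real matrix entries), not merely self-conjugate.
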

The next lemma gives the dimension $D$ of the space associated with bsc $\phi$ of dimension $d$ and a given type.
\begin{lemma}\label{obs:dims_reps}Let $\phi$ be a bsc of dimension $d.$ Then 
\[D_\phi := \dim(R_{\phi}) =\begin{cases}d^2,&\phi~\text{is of types I,III}\\
    \frac{1}{2}d^2,&\phi~\text{is of type II}
\end{cases}.\]
\end{lemma}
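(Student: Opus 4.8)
The plan is to reduce the dimension count to Schur's orthogonality relations for the matrix coefficients of complex irreducible representations, using the classification of bscs from Lemma~\ref{lem:triad} together with one elementary fact about complexification of real subspaces.

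The reduction step is this. For any finite family of vectors $w_1,\dots,w_k\in\bbC^G$, put $U:={\rm span}_\bbR\{\Re w_j,\Im w_j:1\le j\le k\}\subseteq\bbR^G$. Then the complexification $U+\iota U\subseteq\bbC^G$ equals ${\rm span}_\bbC\{w_j,\overline{w_j}:1\le j\le k\}$: indeed each $w_j=\Re w_j+\iota\,\Im w_j$ and each $\overline{w_j}=\Re w_j-\iota\,\Im w_j$ lie in $U+\iota U$, and conversely $\Re w_j=\tfrac12(w_j+\overline{w_j})$ and $\Im w_j=\tfrac{1}{2\iota}(w_j-\overline{w_j})$ lie in the complex span on the right. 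Since $\dim_\bbR U=\dim_\bbC(U+\iota U)$, taking $\{w_j\}$ to be the matrix entries of a version of $\phi$ (the answer being independent of the version, as $R_\phi$ is) gives
\[
\dim R_\phi=\dim_\bbC\big({\rm span}_\bbC\{\phi_{i,j}\}+{\rm span}_\bbC\{\overline{\phi_{i,j}}\}\big),
\]
where the second summand is precisely the span of the matrix coefficients of the conjugate representation $\bar\phi$. So it suffices to compute this complex dimension in each of the three types of Lemma~\ref{lem:triad}.

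For type I and type III, $\phi$ is irreducible and self-conjugate, so $\bar\phi\cong\phi$; isomorphic representations have the same matrix-coefficient span, so the two summands above coincide and $\dim R_\phi=\dim_\bbC{\rm span}_\bbC\{\phi_{i,j}\}=d^2$, the last equality being the linear independence of the $d^2$ matrix coefficients of an irreducible representation (Schur orthogonality, see \cite[Sections~1--3]{fulton2013representation}). For type II, write $\phi=\psi\oplus\bar\psi$ with $\psi$ irreducible and complex, i.e.\ $\psi\not\cong\bar\psi$, necessarily of dimension $d/2$. In this version the nonzero matrix entries of $\phi$ are the $\psi_{i,j}$ and the $\overline{\psi_{i,j}}$, so the first summand above is the matrix-coefficient space of $\psi$ and the second that of $\bar\psi$. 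Since $\psi$ and $\bar\psi$ are non-isomorphic irreducibles, Schur orthogonality makes their matrix-coefficient spaces intersect trivially, hence $\dim R_\phi=(d/2)^2+(d/2)^2=\tfrac12 d^2$. This handles all three cases.

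The step I expect to require the most care is the complexification reduction as it applies to type III: a quaternionic $\phi$ has no version with real entries, so one might worry that the $2d^2$ vectors $\Re\phi_{i,j},\Im\phi_{i,j}$ span a space larger than $d^2$. The point is exactly that self-conjugacy forces ${\rm span}_\bbC\{\overline{\phi_{i,j}}\}={\rm span}_\bbC\{\phi_{i,j}\}$, so passing to conjugates adds nothing, and the $d^2$ bound is recovered. Everything else is bookkeeping plus a direct appeal to Schur's relations.
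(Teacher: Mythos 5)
Your proof is correct. The paper does not actually supply an argument for this lemma---it is stated as one of the facts that ``either appear [in \cite{fulton2013representation}] or are straightforward to derive''---and your derivation is precisely the standard one it implicitly relies on: complexify $R_\phi$ to ${\rm span}_\bbC\{\phi_{i,j}\}+{\rm span}_\bbC\{\overline{\phi_{i,j}}\}$, then apply Schur orthogonality of matrix coefficients together with the type I/II/III classification of Lemma~\ref{lem:triad}; your remark that self-conjugacy is what keeps the type~III count at $d^2$ despite the absence of a real version is exactly the right point to flag.
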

\noindent
Figure~\ref{f:A3} includes a summary of the bscs of various groups and their types.

\subsubsection{Characters}
The {\em character} of a representation $\phi$ is defined as
\begin{equation}
	\chi_\phi(g) := \Tr \big(\phi(g)\big) = \sum_{i=1}^{d_\phi} \big(\phi(g)\big)_{i,i} \,.
	\quad ; \qquad g \in G \,,
\end{equation}
and is invariant under isomorphisms of $\phi$. If $\phi$ is sc then its character is real-valued.

\subsection{Tensors and Fusion}
\subsubsection{Tensors}
Recall that the \emph{tensor product} $V_1\otimes V_2\otimes \cdots\otimes V_m$ (over $\bbR$) of the (real) vector spaces $V_1,\ldots,V_m$ is the vector space spanned by all elements $v_1\otimes\cdots \otimes v_m,~v_i\in V_i,$ and subject to the relations generated by 
\[
v_1\otimes\cdots \otimes(\lambda v_i+\mu v'_i)\otimes\cdots \otimes v_m=\lambda v_1\otimes\cdots \otimes v_i\otimes\cdots \otimes v_m \, +\, \mu v_1\otimes v'_i\otimes\cdots \otimes v_m \,,
\]
for $\lambda, \mu \in \bbR$, $v_i, v_i' \in V_m$ and $i=1, \dots m$.
An element of the above space is a called a {\em tensor} (of order $m$). It is called {\em pure} or {\em elementary} if it can be written as $v_1\otimes\cdots \otimes v_m$ with $v_i$ as above. In this case, we also say that $T$ is the {\em the tensor product} of $v_i, \dots, v_m$. Tensor product and direct sum are associative:
\begin{equation}
\label{e:A23}
V_1 \otimes \dots (V_i \oplus V_i') \otimes \cdots \otimes V_m
= (V_1 \otimes \dots V_i \otimes \cdots \otimes V_m) \oplus
(V_1 \otimes \dots V_i' \otimes \cdots \otimes V_m) \,.
\end{equation}
Also, if $V_i$ are equipped with inner products $\langle \cdot, \rangle_i$, then a unique inner product may be defined on the tensor product space via
\begin{equation}
\label{e:A223}
	\langle v_1 \otimes \dots \otimes v_m\,,\,\, v'_1 \otimes \dots \otimes v'_m \rangle
	:= \langle v_1, v'_1 \rangle_1 \cdot \ldots \dot \langle v_m, v'_m \rangle_m \,,
\end{equation}
for all $v_i, v_i' \in V_i$ and $i \leq m$.

The space $V_1^* \otimes \cdots \otimes V_m^*$, where $V_i^*$ is the dual space of $V_i$, can be identified with the space of linear forms on $V_1 \times \cdots \times V_m$, via the isomorphism which maps
$v_1^*\otimes\cdots \otimes v^*_m$, for $v^*_i \in V_i^*$ to the linear form
\begin{equation}
 l_{v_1^*\otimes\cdots \otimes v^*_m}\big((v_1, \dots, v_m)\big)
= {v_1^*}(v_1) \cdot \ldots \cdot {v_m^*}(v_m)\,.
\end{equation}
We shall often also identify $v \in \bbR^d$ with the the linear functional $l_v := \langle v, \,\cdot \,\rangle \in (\bbR^d)^* \equiv \bbR^d$, given by the usual Euclidean inner product. Combining the two, if $v_i \in \bbR^{d_i}$ for $i = 1, \dots, m$, then $v_1 \otimes \dots \otimes v_m$ is identified with the linear form
\begin{equation}
\label{e:A1}
	l_{v_1 \otimes \dots \otimes v_m}(w_1, \dots, w_m) =
	\langle v_1, w_1 \,\rangle \cdot \ldots \cdot \langle v_m, w_m \,\rangle \,.
\end{equation}

The \emph{rank} of a tensor $T$ is the minimal number $N$ such that $T$ can be written as the sum of $N$ pure tensors. In general it is difficult to determine the rank of a tensor of order $m \geq 3$~\cite{haastad1990tensor,hillar2013most}. Nevertheless, the following is a straightforward bound on the rank of $3$-tensors:
\begin{lemma}\label{obs:tensor_rank_naiv1_bound}
    The rank of a $3$-tensor $T \in V_1 \otimes V_2 \otimes V_3$ is upper bounded by $\min(d_1d_2,d_1d_3,d_2d_3)$, where $d_i = \dim(V_i)$, $i \leq 3$.
\end{lemma}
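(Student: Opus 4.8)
The plan is to prove the bound by symmetry: it suffices to show $\rank(T) \le d_2 d_3$, since the roles of $V_1, V_2, V_3$ in a $3$-tensor are interchangeable (one can permute the factors without changing the rank), and then the bound $\min(d_1d_2, d_1d_3, d_2d_3)$ follows by applying the same argument to the two other groupings of factors.

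To show $\rank(T) \le d_2 d_3$, I would fix bases $\{f^{(2)}_j\}_{j=1}^{d_2}$ of $V_2$ and $\{f^{(3)}_k\}_{k=1}^{d_3}$ of $V_3$. Then $\{f^{(2)}_j \otimes f^{(3)}_k : j \le d_2,\, k \le d_3\}$ is a basis of $V_2 \otimes V_3$, which has dimension $d_2 d_3$. Using the natural identification $V_1 \otimes V_2 \otimes V_3 \cong V_1 \otimes (V_2 \otimes V_3)$, the tensor $T$ can be written as
\begin{equation}
T = \sum_{j=1}^{d_2} \sum_{k=1}^{d_3} v_{jk} \otimes f^{(2)}_j \otimes f^{(3)}_k
\end{equation}
for suitable vectors $v_{jk} \in V_1$ (the coefficients of $T$ expanded in the chosen basis of the last two factors, collected into the first factor). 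Each summand $v_{jk} \otimes f^{(2)}_j \otimes f^{(3)}_k$ is a pure tensor, so this exhibits $T$ as a sum of at most $d_2 d_3$ pure tensors, giving $\rank(T) \le d_2 d_3$.

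I do not expect any serious obstacle here; the only point requiring a little care is the associativity/identification $V_1 \otimes V_2 \otimes V_3 \cong V_1 \otimes (V_2 \otimes V_3)$ and the fact that expanding in a basis of the second factor and gathering coefficients in the first is legitimate — this is exactly the multilinearity relation stated in the definition of the tensor product in the excerpt. One should also note explicitly that if some $v_{jk}$ happens to be zero the corresponding term simply drops, so the bound is $d_2 d_3$ and not exactly $d_2 d_3$; this does not affect the inequality. Then a single sentence invoking the permutation-invariance of tensor rank closes out the other two cases and hence the $\min$.
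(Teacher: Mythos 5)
Your proof is correct and is essentially the same argument as the paper's: both exhibit $T$ as a sum of pure tensors indexed by a product of bases of two of the three factors, yielding the bound $d_id_j$ and hence the minimum. The only cosmetic difference is that the paper slices along the smallest factor and invokes the matrix-rank bound for each $2$-tensor slice, whereas you expand directly in a product basis of $V_2\otimes V_3$ and then appeal to permutation symmetry of the rank.
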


\subsubsection{Tensor product of representations}
The {\em tensor product} of $\phi_1:G\to {\rm GL}(\bbC^d),\phi_2:G\to {\rm GL}(\bbC^{d'})$ is the representation $\phi_1\otimes\phi_2:G\to {\rm GL}(\bbC^d \otimes \bbC^{d'})$, given by
\begin{equation}\label{eq:def_tensor}
(\phi_1 \otimes \phi_2) (g)  = \phi_1(g) \otimes \phi_2(g) \,.
\end{equation}
If $\phi_1$ and $\phi_2$ are sc then so is $\phi_1 \otimes \phi_2$. \eqref{eq:def_tensor} implies  that $R_{\phi_1 \otimes \phi_2} = {\rm span}\{\nu_1\odot\nu_2|\nu_1\in R_{\phi_1},\nu_2\in R_{\phi_2}\}$, where $v_1 \odot v_2$ is the \emph{Hadamard} (element-wise) product 
of $v_1$ and $v_2$.
In particular, 
if $v_1 \in R_{\phi_1}$, $v_2 \in R_{\phi_2}$ then
\begin{equation}
\label{obs:product_gives_fusion}
v_1 \odot v_2 \in R_{\phi_1 \otimes\phi_2} \,.	
\end{equation}
In view of~\eqref{e:7},~\eqref{e:A23} and~\eqref{e:A223}, we have
\begin{equation}
\label{e:A226}
(\bbR^G)^{\otimes m} = \bigoplus_{(\phi_1, \dots, \phi_m) \in \bscs(G)^m} R_{\phi_1} \otimes \dots \otimes R_{\phi_m} \,,
\end{equation}
where subspaces in the above direct sum are orthogonal w.r.t. the natural (Euclidean) inner product on $(\bbR^G)^{\otimes m}$. As before, we define the {\em bsc$^m$-support} of a tensor $T \in  (\bbR^G)^{\otimes m}$ as the collection of triplets $(\phi_1, \dots, \phi_m)$ for which 
the projection of $T$ onto $R_{\phi_1} \otimes \dots \otimes R_{\phi_m}$, henceforth $T_{\phi_1 \otimes \dots \phi_m}$ is non-trivial. In particular, for elementary tensors we have
\begin{equation}
\bscs^m(v_1 \otimes \dots \otimes v_m) = \bscs(v_1) \times \dots \times \bscs(v_m) \,.
\end{equation}

\subsubsection{Fusion}
In general the tensor product of two (sc) representations is not bsc and as such it decomposes into a direct sum of bscs. The \emph{(sc) fusion structure} of (the representation category of) a group $G$ is the explicit isomorphisms between $\phi_1\otimes\phi_2$ and their decomposition into direct sum of bscs, for any pair of bscs $\phi_1,\phi_2$. The bscs which participate in the decomposition for each pair, form the combinatorial part of this structure, and are collectively referred to as 
the \emph{fusion table} of the group. The table in Figure~\ref{f:A5} shows the fusion tables of groups $S_4$,$D_8$ and $M_5(2)$, as examples.

\begin{figure}[t!]
\setlength{\tabcolsep}{3.5pt}
\scriptsize
\begin{minipage}[t]{0.6\linewidth}
\vspace{0pt}
\begin{tabular}{ |c|c| c| c| c| c| c|c|c|c|c|c|c|c|}
 \hline
 $\bf{M_5(2)}$ &$D_\phi$&0&1&2&3&4&5&6&7&8&9&10&11\\
 \hline
 0&1&0&1&2&3&4&5&6&7&8&9&10&11\\
 \hline
 1&1&&0&3&2&5&4&7&6&9&8&10&11\\
 \hline
 2&1&&&0&1&8&9&6&7&4&5&10&11\\
 \hline
 3&1&&&&0&9&8&7&6&5&4&10&11\\
 \hline
 4&2&&&&&0,6&1,7&4,8&5,9&2,6&3,7&10,11&10,11\\
 \hline    
 5&2&&&&&&0,6&5,9&4,8&3,7&2,6&10,11&10,11\\
 \hline    
 6&2&&&&&&&0,2&1,3&4,8&5,9&11&10\\
 \hline    
 7&2&&&&&&&&0,2&5,9&4,8&11&10\\
 \hline
 8&2&&&&&&&&&0,6&1,7&10,11&10,11\\
 \hline
 9&2&&&&&&&&&&0,6&10,11&10,11\\
 \hline
 10&8&&&&&&&&&&&0-5,8,9&4-9\\
 \hline
 11&8&&&&&&&&&&&&0-5,8,9\\
 \hline
\end{tabular}
\end{minipage}\hfill
\begin{minipage}[t]{0.4\linewidth}
\vspace{0pt}
\flushright
\begin{tabular}{ |c|c| c| c| c| c| c|}
 \hline
 $\bf{S_4}$&$D_\phi$& 0&1&2&3&4\\
 \hline
 0&1&0&1&2&3&4\\
 \hline
 1&1&&0&2&4&3\\
 \hline
 2&4&&&0,1,2&3,4&3,4\\
 \hline
 3&9&&&&0,2,3,4&1-4\\
 \hline
 4&9&&&&&0,2,3,4\\
 \hline    
\end{tabular}
\medskip

\begin{tabular}{|c|c| c| c| c| c| c|c|c|}
\hline
  $\bf{D_8}$& $D_\phi$ & 0&1&2&3&4&5&6\\
 \hline
 0&1&0&1&2&3&4&5&6\\
 \hline
 1&1&&0&3&2&4&5&6\\
 \hline
 2&1&&&0&1&6&5&4\\
 \hline
 3&1&&&&0&6&5&4\\
 \hline
 4&4&&&&&0,1,5&4,6&2,3,5\\
 \hline    
 5&4&&&&&&0,1,2,3&4,6\\
 \hline    
 6&4&&&&&&&0,1,5\\
 \hline    
\end{tabular}

\end{minipage}
\captionsetup{width=0.98\linewidth}
\caption{\small Fusion tables for groups $M_5(2)$, $S_4$ and $D_8$. bscs are indexed in non decreasing order of dimensions, starting from the trivial representation $0$. The $(i,j)$-th slot contains the indices of all bscs which are included in the tensor project of bsc $i$ and $j$. The second column contains the dimension of the subspace associated with the bsc in that row.
}
\label{f:A5} 
\vspace{0.3cm}
\end{figure}

\section{Analysis}
\label{s:4}
\subsection{Learning task as implementing a word tensor}
Fix $G$ and $w$. A sufficient condition for a model to achieve zero loss on the full set $\cD_{G,w}$, is for it to implement the $3$-tensor:
\begin{equation}
\label{eq:delta_word}
\delta_{c=w(a,b)}=\sum_{a,b\in G}1_a\otimes 1_b\otimes 1_{w(a,b)}\ \in\ (\mathbb{R}^G)^{\otimes 3} \,,
\end{equation} 
where the tensor product is interpreted as the product of the corresponding inner products, as in~\eqref{e:A1}. We shall call the latter the {\em word tensor} corresponding to $G$ and $w$ and abbreviate it as $\delta_{G,w}$
\subsection{Bsc$^3$-support of word tensors}

In view of~\eqref{eq:delta_word}, a word tensor acting on group $G$ has rank at most $|G|^2$. It therefore follows from the discussion above, that an HD model of width $m=|G|^2$ can achieve zero loss on the corresponding dataset. We wish to claim, however, that for many words $w$, the rank of this tensor is much lower, and thus considerably less width is required to implement it. To this end, we begin by showing that the bsc$^3$-support of word tensors is typically small.

A key point is that the fusion structure of $G$ restricts the above set considerably. 
Recall that $\bscs^m(T)$ and $\bscs(\phi)$ denote the (subspaces associated with the) bscs in the direct sum decomposition of (the subspace associated with) $m$-tensor $T$ and sc representation $\phi$. Define 
\begin{equation}\label{eq:bscs3_contained_fusion}
{\rm bscs}^3_\CF(\delta_{G,w}) := 
\Big\{(\phi,\psi,\zeta) \in \bscs(G)^3 :\: \phi\in{\rm bscs}\big(\zeta^{\otimes n_a(w)}\big)\,,\,\,\psi\in{\rm bscs}\big(\zeta^{\otimes n_b(w)}\big)\Big\} \,,
\end{equation}
where $n_a(w)$ and $n_b(w)$ are the number of appearances of $a^{\pm1},$ and $b^{\pm1}$ in $w$, respectively. We shall call the above set the {\em Combinatorial-Fusion-Cover} (or CFC) of the bsc$^3$-support of $\delta_{G,w}$. The name is explained by,

\begin{proposition}\label{prop:fusion_rep_of_tensor_short}
For any group $G$ and word $w$,
\begin{equation}
\label{e:24}
\bscs^3_\CF\big(\delta_{G,w}\big) \supseteq \bscs^3\big(\delta_{G,w}\big) \,.
\end{equation}
\end{proposition}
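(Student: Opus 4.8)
The plan is to show that if the projection $\big(\delta_{G,w}\big)_{\phi\otimes\psi\otimes\zeta}$ is non-trivial, then $(\phi,\psi,\zeta)$ lies in the CFC set, i.e. $\phi \in \mathrm{bscs}(\zeta^{\otimes n_a(w)})$ and $\psi \in \mathrm{bscs}(\zeta^{\otimes n_b(w)})$. Equivalently, I will show the contrapositive: if either of these membership conditions fails, the projection vanishes. The mechanism is that the word tensor is built entirely out of one-hot vectors $1_g$, whose Hadamard products compute the group multiplication, and the Hadamard product of matrix-coefficient spaces lands in the matrix-coefficient space of the tensor-product representation (equation~\eqref{obs:product_gives_fusion} and the surrounding discussion).

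First I would rewrite $\delta_{G,w}$ in a form that exposes its third-component structure. Fix $v_1 \in R_\phi$ and $v_2 \in R_\psi$ arbitrary; the projection onto $R_\phi \otimes R_\psi \otimes R_\zeta$ is determined by the collection of inner products
\[
\big\langle \delta_{G,w},\; v_1 \otimes v_2 \otimes z \big\rangle
= \sum_{a,b \in G} (v_1)_a\, (v_2)_b\, z_{w(a,b)}
\]
as $z$ ranges over $R_\zeta$. Now I claim the map $(a,b) \mapsto w(a,b)$ can be ``pushed'' onto the first two slots: more precisely, for a fixed word $w$, the function $g \mapsto z_{w(a,b)}$ should be re-expressed using the fact that $z \in R_\zeta$ and that $w(a,b)$ is a product of copies of $a^{\pm 1}$ and $b^{\pm 1}$. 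The key algebraic identity is that for any representation $\zeta$, evaluating a matrix coefficient of $\zeta$ at a product $g_1 g_2 \cdots g_k$ expands, via $\zeta(g_1\cdots g_k) = \zeta(g_1)\cdots\zeta(g_k)$, into a sum of products of matrix coefficients of $\zeta$ evaluated at the individual $g_i$'s. Grouping the $a^{\pm1}$-factors and the $b^{\pm1}$-factors (using that a matrix coefficient of $\zeta$ at $a^{-1}$ is a matrix coefficient of $\zeta$ at $a$, since $\zeta(a^{-1}) = \zeta(a)^{-1}$ and for sc representations one can pass through the conjugate/adjoint), one sees that $z_{w(a,b)}$, as a function of $(a,b)$, is a linear combination of functions of the form $(\text{matrix coeff of } \zeta^{\otimes n_a(w)} \text{ at } a) \cdot (\text{matrix coeff of } \zeta^{\otimes n_b(w)} \text{ at } b)$ — possibly after reordering factors, which only permutes tensor legs and does not change which bscs appear. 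Hence, as a vector in $\bbR^G$ in the variable $a$ (with $b$ fixed), $z_{w(a,b)}$ lies in $R_{\zeta^{\otimes n_a(w)}}$, and symmetrically in $b$ it lies in $R_{\zeta^{\otimes n_b(w)}}$.

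Given that, the sum $\sum_{a,b} (v_1)_a (v_2)_b z_{w(a,b)}$ is an inner product in the $a$-variable between $v_1 \in R_\phi$ and something in $R_{\zeta^{\otimes n_a(w)}}$ (after pairing off $b$), and likewise in $b$. By the orthogonality of the bsc-decomposition~\eqref{e:7} and its tensor version~\eqref{e:A226}, if $\phi \notin \mathrm{bscs}(\zeta^{\otimes n_a(w)})$ then $R_\phi \perp R_{\zeta^{\otimes n_a(w)}}$, forcing every such inner product to vanish; the same argument handles the $\psi$ condition. Therefore the projection $\big(\delta_{G,w}\big)_{\phi\otimes\psi\otimes\zeta}$ is zero whenever $(\phi,\psi,\zeta) \notin \bscs^3_\CF(\delta_{G,w})$, which is exactly the claimed inclusion~\eqref{e:24}.

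The main obstacle I anticipate is making the ``push the word onto the first two slots'' step fully rigorous for an \emph{arbitrary} word, including the handling of inverses and of the interleaving of $a$'s and $b$'s. Concretely: expanding $\zeta(w(a,b))$ via the homomorphism property yields matrix coefficients in an order dictated by $w$, so one must argue carefully that regrouping all $a$-dependence together and all $b$-dependence together — which is what lets us land in $R_{\zeta^{\otimes n_a(w)}} \otimes R_{\zeta^{\otimes n_b(w)}}$ rather than some more entangled space — is legitimate at the level of which bscs can appear. This is where the self-conjugacy of $\zeta$ and the identity $R_{\zeta_1 \otimes \zeta_2} = \mathrm{span}\{\nu_1 \odot \nu_2\}$ from~\eqref{obs:product_gives_fusion} do the real work: each individual matrix-coefficient factor at $a$ lies in $R_\zeta$, the product of $n_a(w)$ of them lies in $R_{\zeta^{\otimes n_a(w)}}$ by iterating~\eqref{obs:product_gives_fusion}, and inverses cause no problem because $\mathrm{bscs}(\bar\zeta) = \mathrm{bscs}(\zeta)$ for sc $\zeta$. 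Once this bookkeeping is set up cleanly the rest is just orthogonality.
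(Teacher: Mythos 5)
Your proposal is correct and follows essentially the same route as the paper: the paper's proof expands $1_g$ in orthonormal bases of the $R_\phi$'s and proves (by induction on word length, using the structure constants $v^k_{gh}=r^k_{ij}v^i_gv^j_h$ and $v^k_{g^{-1}}=s^k_iv^i_g$) that $v^k_{w(g,h)}(\zeta)$ splits as a sum of products of a degree-$n_a(w)$ polynomial in the $a$-coordinates times a degree-$n_b(w)$ polynomial in the $b$-coordinates, then concludes by iterated use of~\eqref{obs:product_gives_fusion} and orthogonality of distinct bscs — exactly the "regroup the $a$- and $b$-factors, land in $R_{\zeta^{\otimes n_a(w)}}\otimes R_{\zeta^{\otimes n_b(w)}}$, and pair against $R_\phi\otimes R_\psi$" argument you describe. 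The bookkeeping step you flag as the main obstacle is precisely the paper's Observation on expressing $v^k_{w(g,h)}$ via structure constants, and your stated ingredients (closure of $R_\zeta$ under inversion, iterated Hadamard products) are the ones the paper uses to carry it out.
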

\noindent
The proposition thus provides a way to bound the bsc$^3$-support of $\delta_{G,w}$ using the fusion table, without explicit computation, which in general is quite tedious. More importantly, it shows that the fusion structure of the group, its combinatorial part in particular, is the core reason for the sparsity of the bsc$^3$-support of the word tensor, and thus for the ability of the network to learn the word. We remark that the the full fusion structure of the group, which determines the true bsc$^3$-support, may imply that more components in the orthogonal decomposition of the word tensor are zero and thus the inclusion in~\eqref{e:24} can be a strict one for certain groups and words.

The table in Figure~\ref{f:A6} lists the CFCs obtained via Proposition~\ref{prop:fusion_rep_of_tensor_short} and the fusion tables of various groups and words, along-side the true bsc-support of the word tensor along-side. The table clearly shows that that CFC of the bsc$^3$-support of $\delta_{G,w}$ and therefore the bsc$^3$-support itself can be much smaller set than $\bscs^3(G)$. It also shows that the CFC can be a proper superset of the true-support, as in the case of the group $M_5(2)$ and all words considered.

\renewcommand{\arraystretch}{1.3}
\begin{figure}[t!]
\setlength{\tabcolsep}{1pt}
\scriptsize
\begin{tabular}{|c|c|p{.39	\textwidth}|p{.39	\textwidth}|}
\hline 
w & G & $\bscs^3_\CF(\delta_{G,w})$ & $\bscs^3(\delta_{G,w})$
\tabularnewline
\hline
\hline
$a^2b$ or $aba$ & $S_4$ &  
\raggedright
$(0,0,0)$, $(0,1,1)$, $(0,2,2)$, $(1,2,2)$, $(2,2,2)$, $(0,3,3)$, $(2,3,3)$, $(3,3,3)$, $(4,3,3)$, $(0,4,4)$, $(2,4,4)$, $(3,4,4)$, $(4,4,4)$ & Same.
\tabularnewline
& $D_8$ & 
\raggedright
$(0,0,0)$, $(0,1,1)$, $(0,2,2)$, $(0,3,3)$, $(0,4,4)$, $(1,4,4)$, $(5,4,4)$, $(0,5,5)$, $(1,5,5)$, $(2,5,5)$, $(3,5,5)$, $(0,6,6)$, $(1,6,6)$, $(5,6,6)$ & Same.
\tabularnewline
& $M_5(2)$ & 
\raggedright
$(0,i,i),\, i=0-11$, $(2,i,i),\, i=6,7$,\newline $(6,i,i),\, i=4,5,8,9$,\newline $(j,i,i), j=1-5,8,9, i=10,11$ & 
$(0,i,i),i=0-3$, $(2,i,i),\, i=6,7$,\newline$(6,i,i),\, i=4,5,8,9$,\newline$(j,i,i), j=4,5,8,9, i=10,11$
\tabularnewline
\hline
$aba^{-1}ba^2b^3ab^{-1}$ & $S_4$ & 
\raggedright
$(0,0,0)$, $(1,0,1)$, $(i,j,2),~i,j=0,1,2$, $(i,j,k),~k=3,4,~i,j=0-4$ & Same.
\tabularnewline
& $D_8$ & 
\raggedright
$(i,0,i),~i=0-3$, $(5,i,5),~i=0-3$, $(i,j,k),~i,k=4,6,~j=0-3,5$ & Same.
\tabularnewline
& $M_5(2)$ & 
\raggedright
$(i,0,i),~i=0-3$, $(i,j,k),~i,k=4,8,~j=0,2,6$, $(i,j,k),~i,k=5,9,~j=0,2,6$, $(i,j,i),~i=6,7,~j=0,2$, $(i,j,k),~i,k=10,11,~j=0-9$
& $(i,0,i),~i=0,1,2,3,6,7$,\newline
$(4,2,8)$,$(8,2,4)$,$(5,2,9)$,$(9,2,5)$,$(10,6,11)$,\newline $(11,6,10)$
\tabularnewline
\hline
\end{tabular}
\captionsetup{width=0.98\linewidth}
\caption{\small bsc$^3$-support and its combinatorial fusion cover, for the word tensor in various groups and words. The numbers in the triplets are indices of bscs, under the same indexing scheme as that of Figure~\ref{f:A5}.
}
\label{f:A6} 
\vspace{0.3cm}
\end{figure} 
\renewcommand{\arraystretch}{1}

Proposition \ref{prop:fusion_rep_of_tensor_short} has the following two immediate consequences. 
\begin{corollary}\label{cor:g_or_h_power_1}
If $n_a(w)=1$ then only terms with $\phi=\zeta$ may appear in $\bscs^3(\delta_{G,w})$.
Similarly, if $n_b(w)=1$ then only terms with $\psi=\zeta$ may appear in $\bscs^3(\delta_{G,w})$.
In particular, if $n_a(w)=n_b(w)=1$ then $\bscs^3(\delta_{G,w}) =\{(\phi, \phi, \phi) :\: \phi \in \bscs(G)\}$.
\end{corollary}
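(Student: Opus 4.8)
The plan is to dispatch the first two assertions directly from Proposition~\ref{prop:fusion_rep_of_tensor_short}, and to prove the ``in particular'' equality by combining that inclusion with a classical non-vanishing fact about the group algebra. For the first part I would simply unwind the definition~\eqref{eq:bscs3_contained_fusion} of the CFC: if $n_a(w)=1$, then a triplet $(\phi,\psi,\zeta)$ can lie in $\bscs^3_\CF(\delta_{G,w})$ only if $\phi\in\bscs(\zeta^{\otimes 1})=\bscs(\zeta)$, and since $\zeta$ is itself a bsc its decomposition into bscs is $\zeta$ alone, so $\bscs(\zeta)=\{\zeta\}$ and hence $\phi=\zeta$. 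By Proposition~\ref{prop:fusion_rep_of_tensor_short} the same restriction passes to $\bscs^3(\delta_{G,w})$. The statement for $n_b(w)=1$ is identical with $\psi$ in place of $\phi$, and intersecting the two restrictions when $n_a(w)=n_b(w)=1$ yields $\bscs^3(\delta_{G,w})\subseteq\{(\phi,\phi,\phi):\phi\in\bscs(G)\}$.

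It then remains to prove the reverse inclusion when $n_a(w)=n_b(w)=1$. In that case $w$ has length two, so $w(a,b)$ is one of the eight strings $a^{\varepsilon}b^{\varepsilon'}$ or $b^{\varepsilon'}a^{\varepsilon}$ with $\varepsilon,\varepsilon'\in\{\pm1\}$, and each of these is obtained from $ab$ by some composition of swapping the first two tensor factors of $\delta_{G,\,ab}$ and applying the ``inversion'' map $P:1_g\mapsto 1_{g^{-1}}$ to the first or the second factor. Since $P$ preserves every $R_\phi$ (the coefficients of $g\mapsto\phi(g^{-1})$ span the same space) and since $R_\phi\otimes R_\phi\otimes R_\phi$ is symmetric under permutations of the factors, none of these operations changes whether the projection onto $R_\phi^{\otimes 3}$ vanishes; so it suffices to treat $w(a,b)=ab$. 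For this word $\delta_{G,w}=\sum_{a,b}1_a\otimes 1_b\otimes 1_{ab}$ is precisely the structure tensor of the group algebra $\bbR[G]$. By Artin--Wedderburn, $\bbR[G]\cong\bigoplus_i M_{n_i}(\mathbb{D}_i)$ with $\mathbb{D}_i\in\{\bbR,\bbC,\mathbb{H}\}$; the simple factors are in bijection with $\bscs(G)$ via the trichotomy of Lemma~\ref{lem:triad}, and the subspace of $\bbR^G=\bbR[G]$ underlying the $i$-th factor is exactly $R_{\phi_i}$ for the corresponding bsc $\phi_i$. Hence $\delta_{G,w}$ is the orthogonal sum over $i$ of the structure tensors of the $M_{n_i}(\mathbb{D}_i)$, each living in $R_{\phi_i}^{\otimes 3}$; and the structure tensor of a nonzero unital algebra is itself nonzero, so $(\phi_i,\phi_i,\phi_i)\in\bscs^3(\delta_{G,w})$ for every bsc, which together with the first paragraph gives the claimed equality. (One can also check the non-vanishing by hand: using $\phi_{mn}(ab)=\sum_r\phi_{mr}(a)\phi_{rn}(b)$ and Schur orthogonality, $\langle\delta_{G,w},\phi_{ij}\otimes\phi_{kl}\otimes\phi_{mn}\rangle$ equals a nonzero multiple of $\delta_{im}\delta_{jk}\delta_{ln}$, which does not vanish for $i=j=k=l=m=n$.)

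The steps in the first paragraph are pure bookkeeping. The point that needs care is the reverse inclusion, and within it two things: justifying the reduction to $w=ab$ (i.e. that each $R_\phi$ is invariant under $g\mapsto g^{-1}$, which follows from every representation being isomorphic to a unitary one), and correctly identifying the real Wedderburn blocks of $\bbR[G]$ with the subspaces $R_\phi$ across all three bsc types. If one prefers to avoid the Wedderburn language, the alternative is to run the Schur-orthogonality computation separately for type~I (orthogonal matrix coefficients), type~II (a complex irrep $\rho$ with $\phi\cong\rho\oplus\bar\rho$, tracking real and imaginary parts) and type~III (the quaternionic Schur relations). I expect this matching — or, equivalently, the case analysis in the direct computation — to be the only non-routine step; everything else follows immediately from Proposition~\ref{prop:fusion_rep_of_tensor_short} and the definition of $\bscs(\cdot)$.
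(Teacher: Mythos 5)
Your first paragraph is exactly the argument the paper has in mind: the corollary is presented as an ``immediate consequence'' of Proposition~\ref{prop:fusion_rep_of_tensor_short}, and unwinding~\eqref{eq:bscs3_contained_fusion} with $\bscs(\zeta^{\otimes 1})=\{\zeta\}$ is all that is needed for the two one-sided claims and for the inclusion $\bscs^3(\delta_{G,w})\subseteq\{(\phi,\phi,\phi):\phi\in\bscs(G)\}$. Where you genuinely diverge is the reverse inclusion. The paper never argues it at the point of the corollary; the non-vanishing of every diagonal component is only established later, and only for $w=ab$, by Proposition~\ref{p:5.1}, whose proof computes the projection explicitly as $\frac{\dim(R_\phi)}{d_\phi|G|}\chi_\phi(abc^{-1})$ via the decomposition of the regular representation (manifestly nonzero at $a=b=c=e$). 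Your route --- reduce the eight length-two words to $ab$ using the inversion-invariance of each $R_\phi$ (the paper's observation on the map $\mathrm{inv}$ in Appendix~\ref{s:AA}) together with invertibility of the factor-wise operations, identify $\delta_{G,ab}$ with the structure tensor of $\bbR[G]$, and invoke Artin--Wedderburn plus the fact that a nonzero unital algebra has nonzero structure tensor --- establishes the same non-vanishing without any computation. It is the regular-representation fact in different clothing, but your version is self-contained at this point in the paper and covers all eight words with $n_a=n_b=1$ explicitly, whereas the paper's stated equality is, strictly speaking, only backed up for $w=ab$ and only in the following section. The one step you rightly flag as non-routine --- matching the real Wedderburn blocks of $\bbR[G]$ with the subspaces $R_\phi$ across types I, II, III --- is standard but nowhere proved in the paper; if one wants to stay entirely within the paper's toolkit, the shorter path to the reverse inclusion is to quote Proposition~\ref{p:5.1} (or the identity $\sum_{\phi}\frac{\dim(R_\phi)}{d_\phi|G|}\chi_\phi(g)=\delta_{g=e}$ from its proof) and then apply your reduction step. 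Either way, your argument is correct.
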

$n_a(w)=n_b(w)=1$ includes the case of the usual group multiplication operation (up-to possible inversion), which was studied in earlier works. This will be treated more thoroughly in Section~\ref{s:5}. 
\begin{corollary}\label{cor:trivial_rep_in_tensor}
    The only element in $\bscs^3	(\delta_{G,w})$ which has $\zeta=\text{Triv}$ is $(\text{Triv},\text{Triv},\text{Triv})$.
\end{corollary}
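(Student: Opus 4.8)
The plan is to deduce the statement directly from Proposition~\ref{prop:fusion_rep_of_tensor_short} together with the elementary fact that the trivial representation is a fixed point of the tensor product. First I would record that observation: since $\text{Triv}$ is one-dimensional with $\text{Triv}(g)=1$ for every $g\in G$, the definition~\eqref{eq:def_tensor} of the tensor product of representations gives $\text{Triv}^{\otimes n}=\text{Triv}$ for all $n\geq 0$, where the case $n=0$ is the empty tensor product, which is $\text{Triv}$ by convention. Hence $\bscs\big(\text{Triv}^{\otimes n}\big)=\{\text{Triv}\}$ for every $n\geq 0$.

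Next I would argue the uniqueness claim. Suppose $(\phi,\psi,\text{Triv})\in\bscs^3(\delta_{G,w})$. By Proposition~\ref{prop:fusion_rep_of_tensor_short} this triple belongs to $\bscs^3_\CF(\delta_{G,w})$, so applying the definition~\eqref{eq:bscs3_contained_fusion} with $\zeta=\text{Triv}$ yields $\phi\in\bscs\big(\text{Triv}^{\otimes n_a(w)}\big)=\{\text{Triv}\}$ and $\psi\in\bscs\big(\text{Triv}^{\otimes n_b(w)}\big)=\{\text{Triv}\}$. Therefore $\phi=\psi=\text{Triv}$, i.e. the only element of $\bscs^3(\delta_{G,w})$ with third component $\text{Triv}$ is $(\text{Triv},\text{Triv},\text{Triv})$.

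It remains to check that this triple actually occurs in $\bscs^3(\delta_{G,w})$, so that the statement is not vacuous. Since $R_{\text{Triv}}$ is the line spanned by the all-ones vector $\mathbf{1}\in\bbR^G$, the subspace $R_{\text{Triv}}\otimes R_{\text{Triv}}\otimes R_{\text{Triv}}$ is spanned by $\mathbf{1}\otimes\mathbf{1}\otimes\mathbf{1}$, and using~\eqref{eq:delta_word} and~\eqref{e:A1} one computes $\langle \delta_{G,w},\,\mathbf{1}\otimes\mathbf{1}\otimes\mathbf{1}\rangle = \sum_{a,b\in G}\langle 1_a,\mathbf{1}\rangle\langle 1_b,\mathbf{1}\rangle\langle 1_{w(a,b)},\mathbf{1}\rangle = |G|^2 \neq 0$, so the corresponding projection is non-trivial.

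There is essentially no technical obstacle here; the only point requiring minimal care is the case of words in which $a$ (resp. $b$) does not appear, so that $n_a(w)=0$ (resp. $n_b(w)=0$), which is handled by the convention $\zeta^{\otimes 0}=\text{Triv}$ — consistent with how these exponents arise in the proof of Proposition~\ref{prop:fusion_rep_of_tensor_short}.
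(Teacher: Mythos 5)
Your proof is correct and follows exactly the argument the paper intends: the corollary is stated as an immediate consequence of Proposition~\ref{prop:fusion_rep_of_tensor_short}, via $\bscs(\text{Triv}^{\otimes n})=\{\text{Triv}\}$ forcing $\phi=\psi=\text{Triv}$ whenever $\zeta=\text{Triv}$. Your additional check that $(\text{Triv},\text{Triv},\text{Triv})$ actually lies in the support (via $\langle\delta_{G,w},\mathbf{1}\otimes\mathbf{1}\otimes\mathbf{1}\rangle=|G|^2\neq 0$) is a correct and welcome bit of extra care that the paper omits.
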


\subsection{The rank of word tensors}
By definition, we can decompose $\delta_{G,w}$ as
\begin{equation}
	\delta_{G,w} = 	\sum_{(\phi,\psi,\zeta) \in \bscs^3(\delta_{G,w})}
		\big(\delta_{G,w}\big)_{\phi \otimes \psi \otimes \zeta} \,,
\end{equation}
where $(\delta_{G,w})_{\phi \otimes \psi \otimes \zeta}$ is the projection of the word tensor onto the subspace associated with $\phi \otimes \psi \otimes \zeta$. Then, 
Lemma~\ref{obs:tensor_rank_naiv1_bound} immediately give the following bound on the rank of $\delta_{G,w}$:
\begin{equation}
\label{e:A35}
	\rank\big(\delta_{G,w}) \leq 
		\sum_{(\phi,\psi,\zeta) \in \bscs^3(\delta_{G,w})}
		\mmin\{D_\phi,D_\psi,D_\zeta\}
		\,,
\end{equation}
where, henceforth, we write $\mmin\{a,b,c\}$ as a short for $\min \{ab, ac, bc\}$ and we recall that $D_\phi$ denotes the dimension of the subspace associated with $\phi$.
While this bound is already often better than the trivial bound of $|G|^2$ on the rank of the word-tensor, it can be improved upon by merging together bscs of $G$.

To this end, given $\emptyset \neq \Phi,\Psi,\Xi \subseteq \bscs(G)$, we shall call the set $B := \Phi \times \Psi \times \Xi$, a {\em box}. 
A collection of $k \geq 1$ boxes forms a {\em box-set}: $\cB := \{B_i :\: 1 \leq i \leq k\}$. 
A box-set $\cB$ is {\em dominated by}
a box-set $\cB' = \{B'_{i'} :\: 1 \leq i' \leq k'\}$ if there exists map $\varphi: \{1,\dots, k\} \to \{1, \dots, k'\}$ such that $B_i \subseteq B'_{\varphi(i)}$ for all $i \leq k$. The box set $\cB$ is {\em smaller than} $\cB'$ if $\cB$ is dominated by $\cB'$ and, in addition, the above map $\varphi$ is injective. Both relations define a partial order on box-sets.
A box-set $\cB$ {\em covers} $A \subseteq \bscs^3(G)$ if $A \subseteq \cup_{i=1}^k B_i$, in which case we shall often call $\cB$ a {\em box-cover} of $A$ and, abusively, write $A \subseteq \cB$. The box-set $\cB$ is a {\em minimal box cover} of $A$ if it covers $A$ and 
	there is no other box cover of $A$ which is smaller than $\cB$. Lastly, a box $B$ is called {\em thin} if at most one of $\{\Phi, \Psi, \Xi\}$ is the full $\bscs(G)$. A box-set $\cB$ is {\em thin} if all of its boxes $B_i$ are thin.

The {\em box-rank} of the box $B = \Phi \times \Psi \times \Xi$ is 
\begin{equation}
\brank(B) := \mmin \big\{D_\Phi,\, D_{\Psi},\, D_{\Xi}\big\} \,,
\end{equation}
where henceforth for $\Psi \subseteq \bscs(G)$,
\begin{equation}
D_{\Psi} \equiv \dim(\Psi) := \sum_{\psi \in \Psi} D_\psi\,.
\end{equation}
The {\em box-rank} of a box-set $\cB = \{B_i\}_{i \leq k}$ is 
\begin{equation}
\brank(\cB) := \sum_{i \leq k} \brank(B_i) \,.
\end{equation}
 Trivially, a box-rank does not increase under the ``smaller than'' relation for box-sets.
 Finally, the {\em box-rank} of a tensor $T \in (\bbR^G)^{\otimes 3}$
\begin{equation}
\label{e:A33}
\brank \big(T) := 
		\min \Big\{ \brank(\cB) :\: \cB \supseteq \bscs^3(T) \Big\} \,.
\end{equation}
Note that the box rank of a tensor depends only on its $\bscs^3$-support.
We shall call a minimizer of the right hand side above a {\em box-rank minimizing (box) cover} of $\bscs^3(T)$ and denote it by 
$\argbrank \big(T)$. While not every minimal box cover of $\bscs^3(T)$ is box-rank 
minimizing, the opposite must clearly hold.

A stronger version of~\eqref{e:A35} is therefore. 
\begin{proposition}
\label{p:5.4}
For a group $G$ and word $w$, 
\begin{equation}
	\rank\big(\delta_{G,w}) \leq \rank_\square \big(\delta_{G,w}) \,.
\end{equation}
\end{proposition}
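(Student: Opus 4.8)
The plan is to exhibit an explicit rank-$\brank(\delta_{G,w})$ decomposition of the word tensor. I would start from a box-set $\cB = \{B_i : 1 \le i \le k\}$ attaining the minimum in the definition~\eqref{e:A33} of $\brank(\delta_{G,w})$; such a minimizer exists because $\bscs(G)$ is finite, so there are only finitely many boxes and hence finitely many box-covers of $\bscs^3(\delta_{G,w})$.

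First I would turn this cover into a partition of the support. Since $\cB$ covers $\bscs^3(\delta_{G,w})$, assign to each triplet in the support one box of $\cB$ containing it; this writes $\bscs^3(\delta_{G,w}) = \bigsqcup_{i=1}^{k} S_i$ with $S_i \subseteq B_i$. Using the orthogonal decomposition of the word tensor along its bsc$^3$-support recalled just before~\eqref{e:A35}, this yields $\delta_{G,w} = \sum_{i=1}^{k} T_i$, where $T_i := \sum_{(\phi,\psi,\zeta) \in S_i} (\delta_{G,w})_{\phi \otimes \psi \otimes \zeta}$.

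Next I would bound the rank of each $T_i$ separately. Write $B_i = \Phi_i \times \Psi_i \times \Xi_i$. By the distributivity of tensor products over direct sums~\eqref{e:A23}, applied coordinatewise,
\[
\bigoplus_{(\phi,\psi,\zeta) \in B_i} R_\phi \otimes R_\psi \otimes R_\zeta
= \Big(\bigoplus_{\phi \in \Phi_i} R_\phi\Big) \otimes \Big(\bigoplus_{\psi \in \Psi_i} R_\psi\Big) \otimes \Big(\bigoplus_{\zeta \in \Xi_i} R_\zeta\Big) \,,
\]
a tensor product of three real vector spaces of dimensions $D_{\Phi_i}$, $D_{\Psi_i}$, $D_{\Xi_i}$. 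Since each projection $(\delta_{G,w})_{\phi \otimes \psi \otimes \zeta}$ lies in $R_\phi \otimes R_\psi \otimes R_\zeta$ and $S_i \subseteq B_i$, the summand $T_i$ lies in the space above, so Lemma~\ref{obs:tensor_rank_naiv1_bound} gives $\rank(T_i) \le \mmin\{D_{\Phi_i}, D_{\Psi_i}, D_{\Xi_i}\} = \brank(B_i)$.

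Finally, subadditivity of tensor rank (immediate: a sum of a rank-$r$ and a rank-$s$ tensor is a sum of $r+s$ pure tensors) closes the argument: $\rank(\delta_{G,w}) \le \sum_{i=1}^{k} \rank(T_i) \le \sum_{i=1}^{k} \brank(B_i) = \brank(\cB) = \brank(\delta_{G,w})$, the last equality being the choice of $\cB$. I do not expect a genuine obstacle here; the only points that need a sentence of care are that the minimum defining $\brank(\delta_{G,w})$ is attained, and that a box $B_i$ may properly contain the portion $S_i$ of the support assigned to it — but that only enlarges the ambient space in which $T_i$ sits, which does not weaken the bound coming from Lemma~\ref{obs:tensor_rank_naiv1_bound}.
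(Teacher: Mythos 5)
Your proposal is correct and follows essentially the same route as the paper's proof: decompose $\delta_{G,w}$ into pieces $T_i$ supported on the boxes of a (minimizing) box-cover, bound each $\rank(T_i)$ by $\brank(B_i)$ via Lemma~\ref{obs:tensor_rank_naiv1_bound}, and sum. You merely spell out two points the paper leaves implicit — disjointifying an overlapping cover so the pieces sum to $\delta_{G,w}$ without double counting, and the attainment of the minimum in~\eqref{e:A33} — both of which are handled correctly.
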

\noindent
We remark that the {\em finest} box-set, $\bscs^3(\delta_{G,w})$ (with its elements thought of as singletons), and the {\em coarsest} box-set, $\bscs(G)^{\times 3}$ (thought of as a singleton), are always box-covers of $\bscs^3(\delta_{G,w})$. In fact $\bscs^3(\delta_{G,w})$ is a minimal box-cover and often so is $\bscs(G)^{\times 3}$. Nevertheless, the latter is not a minimizing box-cover, as shown by Corollary~\ref{cor:decomp_into_triv_non_triv}, and often this is also the case for $\bscs^3(\delta_{G,w})$.

We thus obtain an analytic method for bounding the tensor rank of $\delta_{G,w}$. This is done by solving the combinatorial optimization problem,
\begin{equation}
\label{e:A33'}
	\min \bigg\{
		\sum_{i=1}^k
		\mmin \Big\{D_\Phi,\, D_{\Psi},\, D_{\Xi}\Big\}
:\:   
\bigcup_{i=1}^k (\Phi_i \times \Psi_i \times \Xi_i)
\supseteq \bscs^3 \big(\delta_{G,w}\big) 
\ , \ \ 
\Phi_i, \Psi_i, \Xi_i \subseteq \bscs(G)
\,,\,\, k \geq 1
\bigg\} \,,
\end{equation}
Thanks to Proposition~\ref{prop:fusion_rep_of_tensor_short}, one may further replace in~\eqref{e:A33'} the quantity $\bscs^3 \big(\delta_{G,w}\big)$ by $\bscs^3_\CF \big(\delta_{G,w})$, which is much easier to compute via~\eqref{eq:bscs3_contained_fusion} and the fusion table of $G$. This gives a coarser, yet more accessible bound on the tensor rank of the word tensor.

Proposition \ref{prop:fusion_rep_of_tensor_short} implies that ranks of word tensors are always smaller than the na\"ive bound $|G|^2:$\begin{corollary}
\label{cor:decomp_into_triv_non_triv}
\begin{equation*}
	\rank(\delta_{G,w}) \leq |G|(|G|-1)+1 \,.
\end{equation*}
\end{corollary}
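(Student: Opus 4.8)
The plan is to apply Proposition~\ref{p:5.4}, which reduces the claim to exhibiting a box-cover $\cB$ of $\bscs^3(\delta_{G,w})$ with $\brank(\cB) \le |G|(|G|-1)+1$. I will use a cover consisting of just two boxes — one for the trivial component and one for everything else — and the only non-elementary ingredient will be Corollary~\ref{cor:trivial_rep_in_tensor}.

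First I would split $\bscs^3(\delta_{G,w})$ according to the value of the output component $\zeta$. By Corollary~\ref{cor:trivial_rep_in_tensor}, the unique triple in $\bscs^3(\delta_{G,w})$ with $\zeta = \mathrm{Triv}$ is $(\mathrm{Triv},\mathrm{Triv},\mathrm{Triv})$, so every other triple in the support has $\zeta \in \bscs(G)\setminus\{\mathrm{Triv}\}$. Accordingly, set
\begin{equation*}
B_1 := \{\mathrm{Triv}\}\times\{\mathrm{Triv}\}\times\{\mathrm{Triv}\}, \qquad B_2 := \bscs(G)\times\bscs(G)\times\big(\bscs(G)\setminus\{\mathrm{Triv}\}\big),
\end{equation*}
and $\cB := \{B_1,B_2\}$. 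By the dichotomy above, $\bscs^3(\delta_{G,w}) \subseteq B_1\cup B_2$, i.e.\ $\cB$ is a box-cover of $\bscs^3(\delta_{G,w})$.

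It then remains to bound $\brank(\cB) = \brank(B_1) + \brank(B_2)$. Since $\mathrm{Triv}$ is a $1$-dimensional type~I bsc, Lemma~\ref{obs:dims_reps} gives $D_{\mathrm{Triv}} = 1$, so $\brank(B_1) = \mmin\{1,1,1\} = 1$. For $B_2$, the orthogonal decomposition~\eqref{e:7} of $\bbR^{|G|}$ into the subspaces $R_\phi$ yields $D_{\bscs(G)} = \sum_{\phi\in\bscs(G)}D_\phi = |G|$, whence $D_{\bscs(G)\setminus\{\mathrm{Triv}\}} = |G|-1$, and therefore $\brank(B_2) = \mmin\{|G|,|G|,|G|-1\} = \min\{|G|^2, |G|(|G|-1)\} = |G|(|G|-1)$. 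Hence $\brank(\cB) = |G|(|G|-1)+1$, so by~\eqref{e:A33} we get $\brank(\delta_{G,w}) \le |G|(|G|-1)+1$, and Proposition~\ref{p:5.4} delivers the stated bound on $\rank(\delta_{G,w})$.

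I do not expect a genuine obstacle: the argument is a direct construction. The single point requiring care is the choice of axis along which $\mathrm{Triv}$ is removed — it must be the output axis $\zeta$, where Corollary~\ref{cor:trivial_rep_in_tensor} applies; the analogous statement generally fails for the input axes, since triples like $(\mathrm{Triv},\psi,\psi)$ with $\psi\ne\mathrm{Triv}$ can belong to the support (see Figure~\ref{f:A6}). Beyond that, only the dimension bookkeeping $D_{\mathrm{Triv}}=1$ and $D_{\bscs(G)}=|G|$, together with the meaning of $\mmin$, must be tracked correctly.
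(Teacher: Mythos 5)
Your proposal is correct and follows essentially the same route as the paper's proof: both use Corollary~\ref{cor:trivial_rep_in_tensor} to cover $\bscs^3(\delta_{G,w})$ by the two boxes $\{\mathrm{Triv}\}^{\times 3}$ and $\bscs(G)\times\bscs(G)\times(\bscs(G)\setminus\{\mathrm{Triv}\})$, with box-ranks $1$ and $|G|(|G|-1)$ respectively, and then invoke Proposition~\ref{p:5.4}. Your closing remark that the trivial representation must be excised along the output axis $\zeta$ is a correct and worthwhile point of care that the paper's terser proof leaves implicit.
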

The table in Figure~\ref{f:A7} lists bounds on the rank of word tensors for various words and groups, which were obtained using~\eqref{e:A33'} and the bsc$^3$-supports that were calculated in Table \ref{f:A6}.
We see that for many groups and words, this method yields values which are considerably smaller than the bound in Corollary \ref{cor:decomp_into_triv_non_triv}. We thus state
\begin{gp}
	Ranks of word tensors are likely to be small ($\rank(\delta_{G,w}) \ll |G|^2$).
\end{gp}
 \renewcommand{\arraystretch}{1.3}
\begin{figure}[t!]
\setlength{\tabcolsep}{1pt}
\scriptsize
\begin{tabular}{|c|c|p{0.55\textwidth}|p{0.18\textwidth}|c|}
\hline 
w & G & Minimal box-covers of $\bscs^3(\delta_{G,w})$ & $\brank(\delta_{G,w})$ & $|G|^2$
\tabularnewline
\hline
\hline
$a^2b$ or $aba$ & $S_4$ &  
\raggedright
$B_1=\{0,2-4\}\times\{3,4\}\times\{3,4\}$, $B_2 = \{0-2\}\times\{2\}\times\{2\}$, $B_3 = \{0\}\times\{0,1\}\times\{0,1\}$
& $18^2+4^2+2=342$ & 576
\tabularnewline
& $D_8$ & 
\raggedright
$B_1 = \{0\}\times\{0-3\}\times\{0-3\}$, $B_2 = \{0-3\}\times\{5\}\times\{5\}$, $B_3 = \{0,1,5\}\times\{4\}\times\{4\}$, $B_4= \{0,1,5\}\times\{6\}\times\{6\}$, or $B_1,B_2$ and $B'_3=\{0,1,5\}\times\{4,6\}\times\{4,6\}$
& $4+16+16+16=52$ & 256
\tabularnewline
& $M_5(2)$ & 
\raggedright
$B_1=\{0\}\times\{0-3\}\times \{0-3\}$, $B_2=\{
6\}\times\{4,5,8,9\}\times\{4,5,8,9\}$, $B_3=\{
2\}\times\{6,7\}\times\{6,7\}$, $B_4=\{
4,5,8,9\}\times\{10,11\}\times\{10,11\}$ or $B_1,B_2,B_3,B'_4=\{
4,5,8,9\}\times\{10\}\times\{10\}$, $B'_5=\{
4,5,8,9\}\times\{11\}\times\{11\}$
& $4+16+4+128=152$ \newline  & 1024
\tabularnewline
\hline
$aba^{-1}ba^2b^3ab^{-1}$ & $S_4$ & 
\raggedright
$B_1=\{0,1\}\times\{0\}\times\{0,1\}$, $B_2=\{0-2\}\times\{0-2\}\times\{2\}$, $B_3=\{0-4\}\times\{0-4\}\times\{3,4\}$
& $2+12+432=446$ & 576
\tabularnewline
& $D_8$ & 
\raggedright
$B_1=\{0-3,5\}\times\{0\}\times\{0-3,5\}$, $B_2=\{4,6\}\times\{0-3\}\times\{4,6\}$
& $8+32=40$ & 256
\tabularnewline
& $M_5(2)$ & 
\raggedright
$B_1=\{0-3,6,7\}\times\{0\}\times\{0-3,6,7\}$, $B_2=\{4,5,8,9\}\times\{2\}\times\{4,5,8,9\}$, $B_3=\{10,11\}\times\{6\}\times\{10,11\}$
& $8+8+32=48$ & 1024
\tabularnewline
\hline
\end{tabular}
\captionsetup{width=0.98\linewidth}
\caption{\small Minimal box-covers of the true bsc$^3$-support of $\delta_{G,w}$ and the box-rank of various words and groups.}
\label{f:A7} 
\vspace{0.3cm}
\end{figure}
 \renewcommand{\arraystretch}{1} 

\ToVerTwo{We end this subsection with two simple corollaries of Proposition~\ref{p:5.4}.
\begin{corollary}\label{cor:som1_sort_of_bound}
\begin{equation}
	\rank(\delta_{G,w}) \leq \dim(\zeta) \sum_{\zeta \in \bscs(G)} 
	\min \Big\{ \dim\big(\bscs(\zeta^{n_a(w)}) \big)\,,\, \,
					\dim \big({\bscs(\zeta^{n_b(w)})}\big) \Big\} \,.
\end{equation}      
\FromRan{double check: could there be repetitions?}
\end{corollary}\RT{I omit it from here - we don't use, and RHS requires defining exactly what I mean there}}

\subsection{The Hadamard Model}
In order to see what the theoretical findings of the previous two sections imply on the learning task at hand, we first switch to consider a variant of the TLP Model, which we call the {\em Hadamard Model} (HD), and in which $3$-tensors are more straightforwardly implemented. This model is similar to the TLP model, except that instead of applying an activation function on a linear combination of the $2|G|$ inputs, we perform a product of a linear combination of the first $|G|$ inputs, with a linear combination of the last $|G|$ inputs. Formally, for $m \geq 1$, given weights, 
\begin{equation}
\label{e:1.1}
W = (A,B,C) \quad ; \qquad A,B,C \in \bbR^{m\times G}\,,
\end{equation}
the model computes $f_{\rm HD}\big(\cdot; W\big) : \bbR^G \times \bbR^G \to \bbR^G$, given by
\begin{equation}
\label{e:3.21}
	f_{\rm HD}\big(u; W\big) 
	\equiv	f_{\rm HD}\big(x,y \,;\, A,B,C\big)
	:=  C^T(Ax \odot By\big) 
	\quad ; \qquad u = x|y \,,\,\, x,y \in \bbR^G \,,
\end{equation}
where, we recall that $\odot$ represents the Hadamard product of two vectors. Thus, for $x,y,z \in \bbR^G$, 
\begin{equation}
\label{e:1.25}
	f_{\rm HD}(x,y; A,B,C)^T z = \sum_{i=1}^m (Ax)_i(By)_i(Cz)_i \,.
\end{equation}
Notice that the weight space $\cW_G$ is as for the TLP model. See Figure~\ref{f:0} for a schematic diagram of the network. Note that the LHS of \eqref{e:1.25} is invariant under a simultaneous permutation of the rows of $A,B,C$. For this reason, we shall regard weights in $\cW_G$ which differ by such permutation as equivalent.

In the language of tensors, the HD model implements the $3$-tensor (over $\bbR$)
\begin{equation}
\label{e:A29}
T^\HD_W := \sum_{i=1}^m A_{i,:} \otimes B_{i,:} \otimes C_{i,:}
\ \in\ (\mathbb{R}^G)^{\otimes 3} \,,
\end{equation}
where $X_{i,:}$ denotes the $i$-th row of matrix $X$. Thus, the set of tensors which can be implemented by HD models with width $m$ is precisely the set of all $3$-tensors in $(\mathbb{R}^G)^{\otimes 3}$ of rank at most $m$. Formally,
\begin{equation}
\label{e:A42}
	\Big\{T^\HD_W :\: W \in \cW_{G,m} \Big\} = \Big\{T \in (\bbR^{G})^{\otimes 3} :\: \rank(T) \leq m \Big\} \,.
\end{equation}
We also define the 
{\em bsc$^3$ box-set} of an HD model with weights $W$ as 
\begin{equation}
\label{e:A123}
\bscs^3_\square(W) = \Big\{\bscs(A_{i,:}) \times \bscs(B_{i,:}) \times \bscs(C_{i,:}\big)\Big\}_{i=1}^m \,.
\end{equation}
It follows straightforwardly from~\eqref{e:A29} that the latter is a box cover of $\bscs^3(T^\HD_W)$, namely
\begin{equation}
\label{e:A124}
\bscs^3(T^\HD_W) \subseteq \bscs^3_\square(W) \,.
\end{equation}

Lastly, we have the following lemma which shows that expressive power of the TLP model with the square activation function $\sigma(s) = {\rm sqr}(s) = s^2$ is at least as strong as that of the Hadamard model.
\begin{lemma}
\label{l:1.4.1}
Fix a finite group $G$. Then, for any $W \in \cW_G$ there exists $W' \in \cW_G$ with $|W'| = 2|W|$ such that 
\begin{equation}
	f_{\rm TLP,\, sqr}(\cdot;W') = f_{\rm HD}(\cdot;W) \,.
\end{equation}
\end{lemma}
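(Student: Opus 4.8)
The plan is to realize each Hadamard product $ (Ax)_i (By)_i$ as a difference of two squares and thereby absorb it into the square activation. Concretely, for scalars $s,t \in \bbR$ we have the polarization identity
\[
st = \tfrac14\big((s+t)^2 - (s-t)^2\big) = \tfrac14{\rm sqr}(s+t) - \tfrac14{\rm sqr}(s-t) \,.
\]
Applying this entrywise with $s = (Ax)_i$ and $t = (By)_i$, each neuron $i$ of the HD model is simulated by two neurons in a square-activation TLP: one with pre-activation $(Ax)_i + (By)_i = ((A|A)\,(x|y))_i$ wait --- more carefully, the TLP pre-activation on input $u = x|y$ is $W'^{(1)} u$, and we want the $i$-th such coordinate to be $A_{i,:}x + B_{i,:}y$ and its partner to be $A_{i,:}x - B_{i,:}y$. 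So we take $W'^{(1)}$ to have rows $A_{i,:}|B_{i,:}$ and $A_{i,:}|(-B_{i,:})$, i.e. in the $(A,B,C)$ notation the first two weight matrices of $W'$ are $A' = A \,\text{(each row doubled)}$ and $B'$ obtained from $B$ by placing $B_{i,:}$ and $-B_{i,:}$ in the two corresponding rows; and $W'^{(2)} = C'^T$ with $C'$ having rows $\tfrac14 C_{i,:}$ and $-\tfrac14 C_{i,:}$. This gives $|W'| = 2m = 2|W|$.

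First I would fix $W = (A,B,C) \in \cW_{G,m}$ and write down $W' = (A',B',C') \in \cW_{G,2m}$ explicitly by the row-doubling recipe above: for $i = 1,\dots,m$, set $A'_{2i-1,:} = A'_{2i,:} = A_{i,:}$, set $B'_{2i-1,:} = B_{i,:}$ and $B'_{2i,:} = -B_{i,:}$, and set $C'_{2i-1,:} = \tfrac14 C_{i,:}$, $C'_{2i,:} = -\tfrac14 C_{i,:}$. Then I would compute $f_{\rm TLP,\,sqr}(x,y;A',B',C') = C'^T\, {\rm sqr}(A'x + B'y)$ coordinate by coordinate (or rather, pair of coordinates by pair): the $(2i-1)$ entry of $A'x + B'y$ is $A_{i,:}x + B_{i,:}y$ and the $2i$ entry is $A_{i,:}x - B_{i,:}y$, so after squaring and contracting against $C'$ the $i$-th pair contributes
\[
\tfrac14 C_{i,:}\,\big(A_{i,:}x + B_{i,:}y\big)^2 - \tfrac14 C_{i,:}\,\big(A_{i,:}x - B_{i,:}y\big)^2 = C_{i,:}\,(A_{i,:}x)(B_{i,:}y) = C_{i,:}\,\big((Ax)_i (By)_i\big) \,.
\]
Summing over $i$ recovers $C^T(Ax \odot By) = f_{\rm HD}(x,y;A,B,C)$, as required, and since this holds for all $x,y \in \bbR^G$ we get the claimed equality of functions.

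The argument is essentially a one-line polarization identity, so there is no real obstacle; the only things to be careful about are bookkeeping — keeping the row-indexing of the doubled matrices consistent across $A',B',C'$ so that matched rows genuinely pair up under the square activation — and the placement of the factor $\tfrac14$ (it can equally well be absorbed into $A'$ and $B'$ by scaling, but putting it in $C'$ keeps the first-layer pre-activations clean). One may also note in passing that the width bound $2|W|$ is the natural one and not necessarily tight for a given target function, but tightness is not claimed. I would present the construction, then verify the identity on an arbitrary input pair $(x,y)$, and conclude.
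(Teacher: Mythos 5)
Your construction is correct and is essentially the same as the paper's proof: both rest on the polarization identity $st=\tfrac14\big((s+t)^2-(s-t)^2\big)$ realized by doubling each hidden neuron, the only cosmetic differences being that the paper places the factor $\tfrac12$ on $A'$ and $B'$ (rather than $\tfrac14$ on $C'$) and stacks the blocks instead of interleaving rows, which is immaterial since the model is invariant under simultaneous row permutations.
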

\begin{figure}[!t]	
\begin{minipage}[t]{0.6\linewidth}
\resizebox{\textwidth}{!}{
\begin{tabular}{|c|c|c|c|c|c|}
	\hline
	$w$ & $G$ & $N$ & Loss & Acc & Dominating box cover for $\bscs^3_\square(W_{\rm term})$
	\tabularnewline 
	\hline
	\hline
	$aab$ & $S_4$ & $64$ & $5.7\cdot 10^{-7}$ & $1$ & $B_1+B_2+B_3$ \tabularnewline
    & $D_8$ & $32$ & $1.4\cdot10^{-8}$ & $1$ & $B_1+B_2+B'_3$ \tabularnewline
          & $M_5(2)$ & $64$ & $7.1\cdot 10^{-8}$ & $1$ & $B_1+B_2+B_3+B_4+B'_5$\tabularnewline
          
	\hline
	$aba$ & $S_4$ & $64$ & $1.4\cdot 10^{-7}$ & $1$ & $B_1+B_2+B_3$ \tabularnewline
          & $D_8$ & $32$ & $2.3\cdot 10^{-8}$ & $1$ & $B_1+B_2+B'_3$ or $B_1+B_2+B_3+B_4$\tabularnewline
          & $M_5(2)$ & $64$ & $7.5\cdot 10^{-8}$ & $1$ & $B_1+B_2+B_3+B_4$\tabularnewline
          
	\hline
    \hline
	$aba^{-1}ba^2b^3ab^{-1}$ & 
    $S_4$ & $64$ & $5.9 \cdot 10^{-7}$ & $1$ & $\{0-2\}\times\{0-2\}\times\{0-4\}$\tabularnewline
    & & & & & $\qquad +\{0-4\}\times\{0-4\}\times\{3,4\}$\tabularnewline
    & $D_8$ & $32$ & $4.3\cdot10^{-9}$ & $1$ & $B_1+B_2$ \tabularnewline
    & $M_5(2)$ & $64$ & $8.7\cdot 10^{-8}$ & $1$ & $B_1+B_2+B_3$ \tabularnewline
	 \hline
\end{tabular}}
\end{minipage}\hfill
\begin{minipage}{0.4\textwidth}
\includegraphics[width=\textwidth]{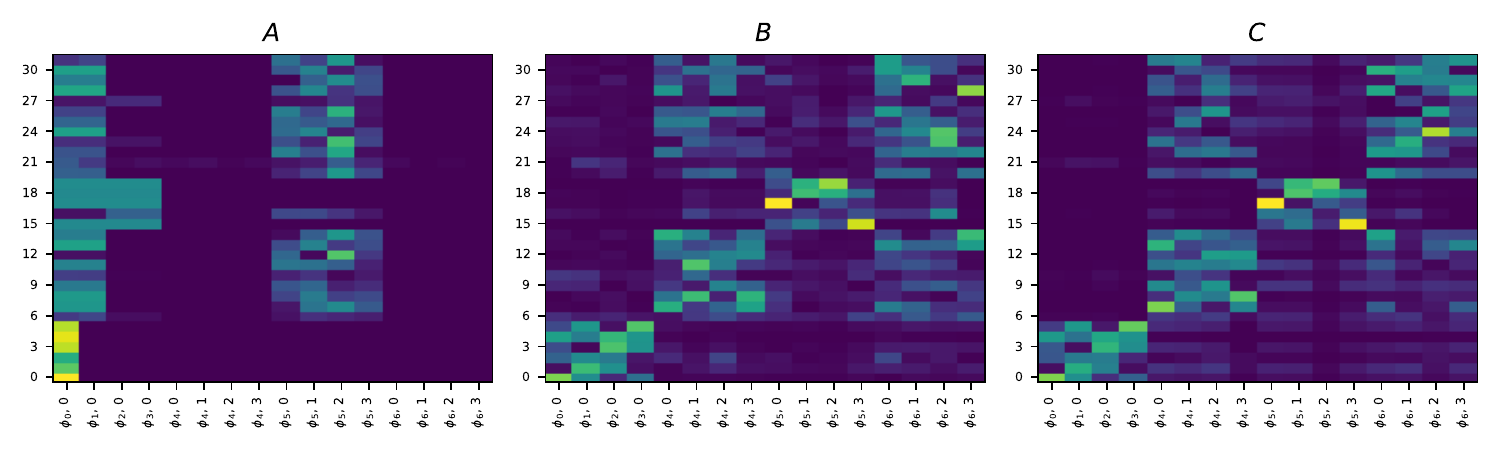}
\includegraphics[width=\textwidth]{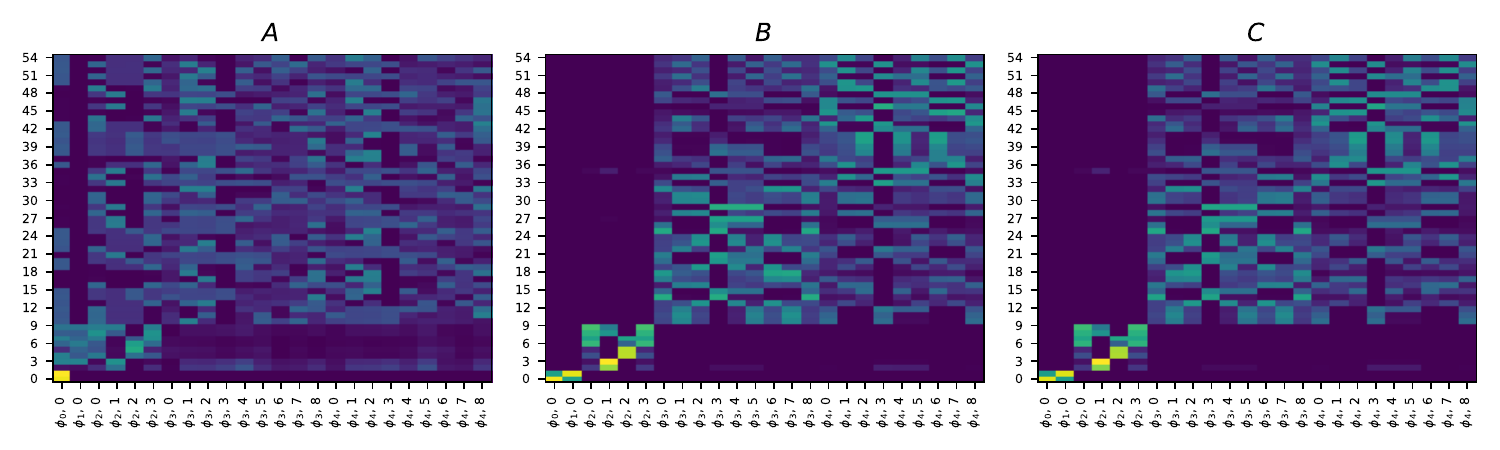}
\includegraphics[width=\textwidth]{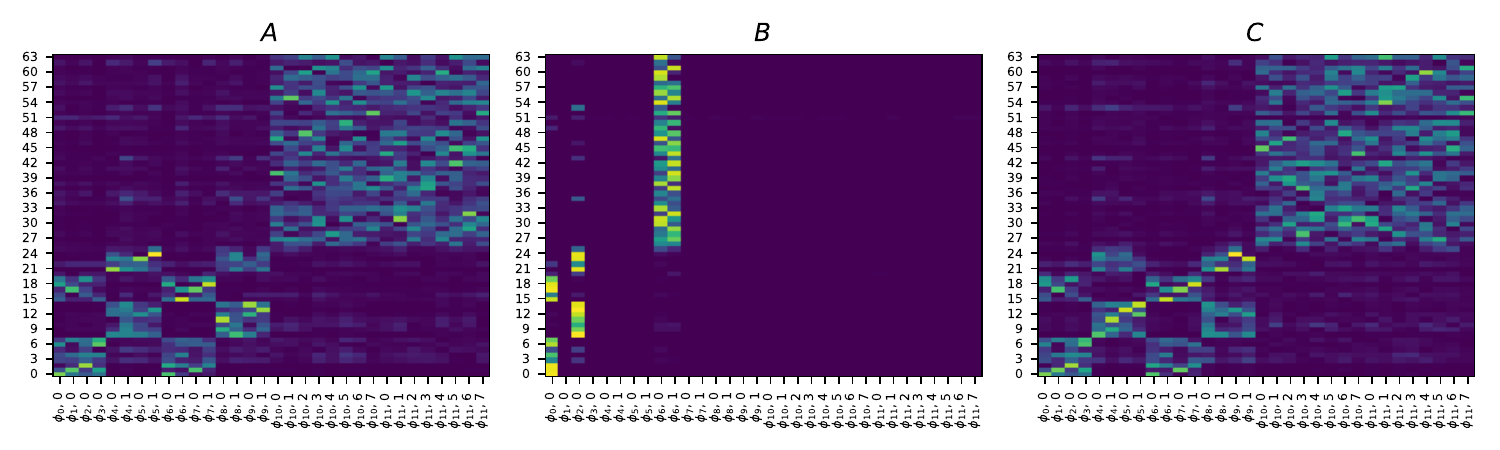}
\end{minipage}
\captionsetup{width=0.98\linewidth}
\caption{\small {\em Left:} Results of training the HD model on various words and groups. Observed bsc$^3$-box-sets of terminal weight indicated as last column and using the box notation from Figure~\ref{f:A7}.
{\em Right:} Projections of the terminal weights of the rows of $A$, $B$ and $C$ (Y-axis) on the matrix entries of all bscs of $G$ as $\bbR^G$-vectors (X-axis; entries of the same bsc are adjacent). $(w,G,N)$ are $(aab, D_8, 32)$ {\em (top)}, $(aba, S_4, 55)$ ({\em middle}) and $(aba^{-1}ba^2b^3ab^{-1}, M_5(2), 64)$ ({\em bottom}). The block structure of each matrix and alignment between rows of different matrices are apparent.
}
 \label{f:A8} 
\vspace{0.3cm}
\end{figure}

\subsection{Empirical study}

We trained the HD model with various widths $m$ and with the full data set $\cD_{G,w}$ for various groups $G$ and words $w$. Initialization and optimization was as indicated in Subsection~\ref{s:2.1.6}. In each case, we recorded the terminal loss and accuracy. To study the terminal weights, we projected each of the rows of matrices $A$, $B$ and $C$ in the terminal configuration $W_{\rm term}$ onto the subspaces of each of the bscs of $G$ and captured the results as heatmaps. The (empirical) bsc-support of each row was then deduced, whenever there was a clear separation between exhibited and non-exhibited bsc-components, and the (empirical) bsc$^3$-box-set of the weight configuration $W_{\rm term}$ was computed, as in~\eqref{e:A123}.

The results are summarized in the table of Figure~\ref{f:A8}. The figure also include heatmaps of the projections onto the matrix elements of all bscs of the group (as vectors in $\bbR^G$) of matrices $A$, $B$ and $C$ in the final weight configuration of 3 sample runs. More heatmaps and details on the results can be found in Appendix~\ref{s:B.1}. The data suggests the following general principle. 

\begin{gp}\label{gp:1}
$ $
\begin{enumerate}
\item If $\rank_\square(\delta_{G,w}) < m < |G|^2,$ then $\bscs^3_\square(W_{\rm term})$ is thin and dominated by a box cover of $\delta_{G,w}$ of rank smaller than $|G|^2$.
\item If, in addition, $\rank_\square(\delta_{G,w}) \ll |G|^2,$ then the above dominating box-cover is also a minimal.
\end{enumerate}
\end{gp}
\subsection{Generalization, grokking and the TLP model}
\label{s:4.6}
Lastly, we checked empirically whether the HD model reaches a generalizing solution given only a subset of the dataset, and moreover, whether the terminal weight configuration reached is the same as that in the case of the full sample set (albeit perhaps less pronounced). We also verified that the usual Grokking phenomenon is still exhibited when the learning task is of general group words. The answer to all of these questions turns out to be positive, as can be seen, for example. from Figure~\ref{f:A9}. We remark that both the maximal fraction of held train samples which still allow for full generalization and the level of pronunciation of the grokking features, depend on the group, word and width, and can be quite low in some cases. See Appendix~\ref{s:B.3} for additional plots.

Finally, initial experiments involving the TLP model with various activation functions indicate that a similar principle to the one above also holds in this case, albeit with more ``noise''  appearing in the terminal weight configuration. The box-cover which dominates the terminal weight configuration is often slightly different than the case of the HD model. We expect that the reason is that activation functions have low degree polynomial approximations, e.g. via Taylor expansion, and that replacing the activations by the approximations yield relatively low rank tensors which can be analyzed using the fusion tools developed in this work. See Subsection~\ref{s:7.3} for further discussion and Appendix~\ref{s:B.2} for heatmaps of the terminal weights under the TLP model.

\begin{figure}[t!]
\begin{minipage}{0.53\linewidth}
    \includegraphics[width=1\textwidth]{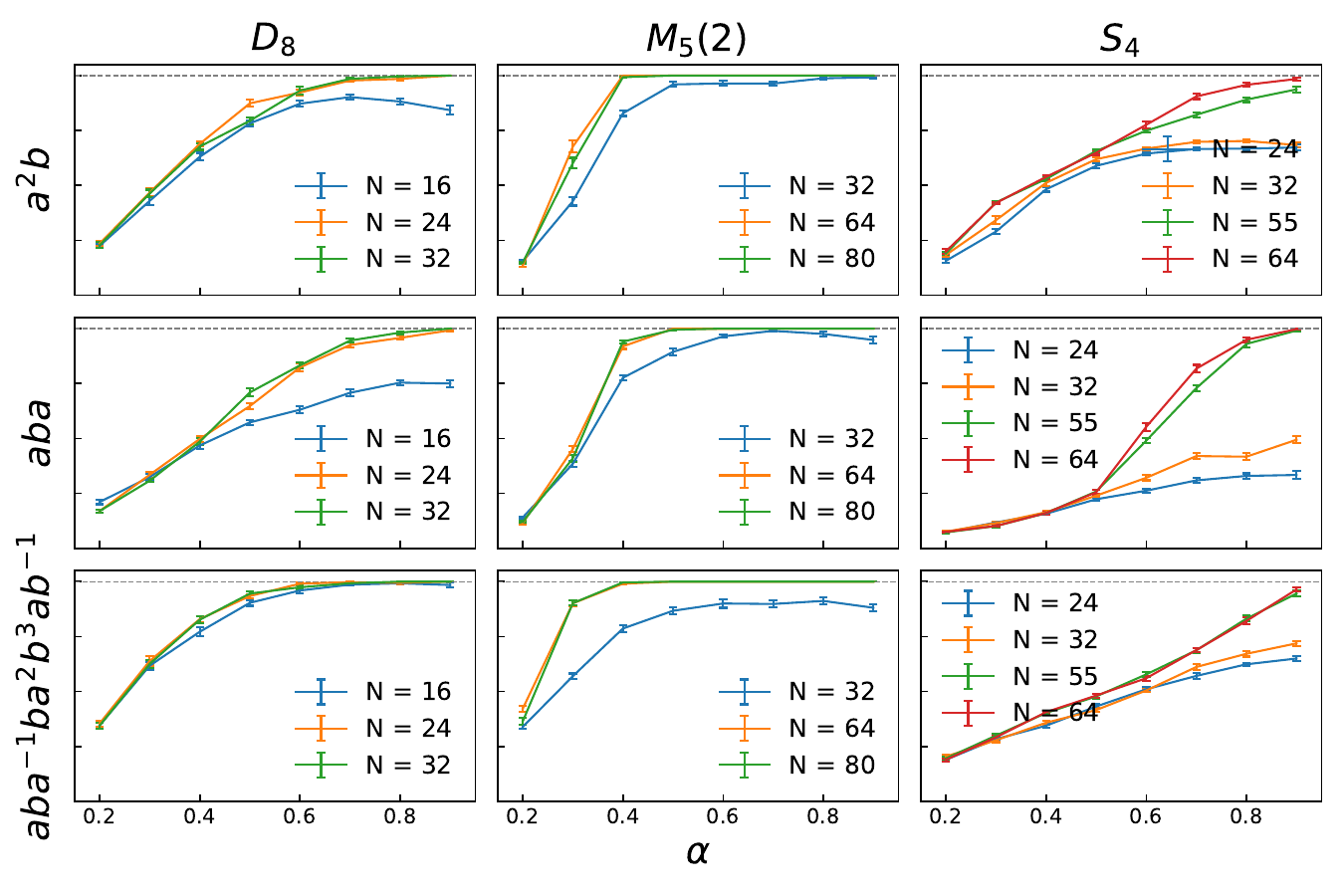}
\end{minipage}\hfill
\begin{minipage}{0.22\linewidth}
    \includegraphics[width=1\textwidth]{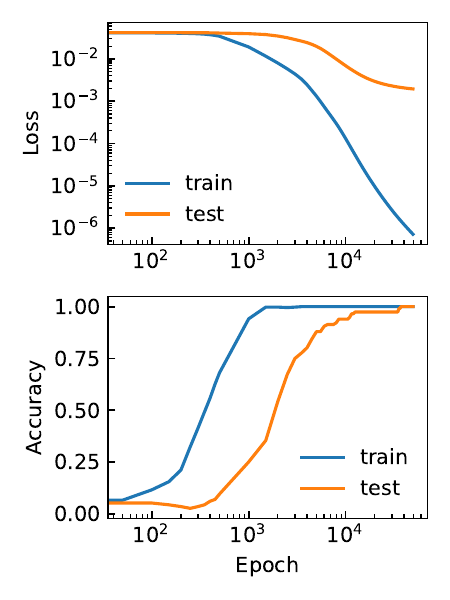}
\end{minipage}\hfill
\begin{minipage}{0.22\linewidth}
    \includegraphics[width=1\textwidth]{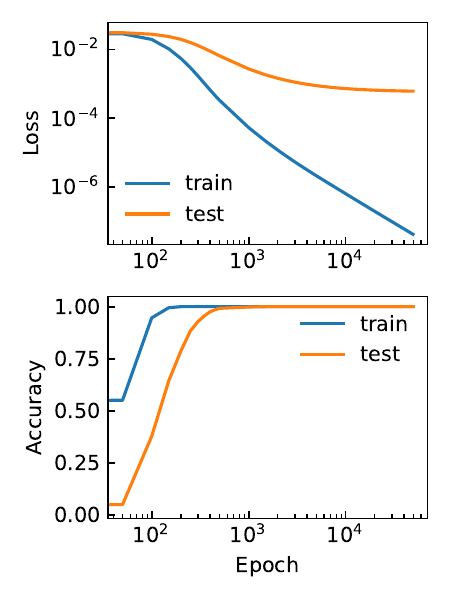}
\end{minipage}\hfill
\captionsetup{width=0.98\linewidth}
\caption{\small {\em Left:} Final test accuracy, for different groups, widths $N$ and train fractions, as average over $20$ runs of GD for the HD-model starting from a random initialization and using a random train-test split.
    Error bars mark one standard deviation. {\em Right:} The evolution of train/test loss/accuracy during training in one run for $G=S_4, w=aba,\alpha=0.8, N=64$ ({\em left column}) and $G=M_5(2), w=aba^{-1}ba^2b^3ab^{-1}, N=80, \alpha=0.4$ ({\em right column}). }    
\label{f:A9}
\end{figure}

\section{The case of group multiplication}
\label{s:5}
Next we restrict attention to the simplest word $w=ab$, in which case we denote the word tensor $\delta_{G,w}$ simply by $\delta_G$ and the full dataset $\cD_{G,w}$ by $\cD_{G}$.
 In this case the group operation is its usual ``multiplication''. The case of $G=\bbZ_p$ with addition modulo $p$ was studied by Gromov~\cite{gromov2023grokking}. The generalization to general groups was treated by Nanda et al~\cite{nanda2023progress}.
 In this part of the manuscript we study this problem using the tensor formalism developed in the previous section, and use the general theory to extend and refine the results of these earlier works.

\subsection{bsc$^3$-support of the word tensor}
In view of Corollary~\ref{cor:g_or_h_power_1} we see that $\bscs^3(\delta_G) = \{(\phi, \phi, \phi) :\: \phi \in \bscs(G)\}$, so that $\delta_G$ decomposes as the direct sum
\begin{equation}
\label{e:A147}
\delta_G = \sum_{\phi \in \bscs(G)} \delta_{G,\phi^{\otimes 3}} \,, 	
\end{equation}
where $\delta_{G,\phi^{\otimes 3}} \equiv (\delta_{G})_{\phi^{\otimes 3}}$ is the projection of the word tensor onto $R_{\phi^{\otimes 3}}$, henceforth a {\em single-bsc projection}. An explicit expression for the latter is given by the following proposition. Recall that $\chi_\phi$ is the character of representation $\phi$.
\begin{proposition}
\label{p:5.1}
For all $G$ and $\phi \in \bscs(G)$,
\begin{equation}
\label{e:4.2}
\big(\delta_{G, \phi^{\otimes 3}}\big)_{a,b,c}= 
\frac{\dim(R_\phi)}{d_\phi|G|}\,\chi_{\phi} \big(abc^{-1}\big)
\quad ; \qquad a,b,c \in G \,.
\end{equation}
\end{proposition}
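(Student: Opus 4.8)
The plan is to compute the single-bsc projection directly. Since the orthogonal projection of $(\bbR^G)^{\otimes 3}$ onto $R_\phi\otimes R_\phi\otimes R_\phi$ is $P_\phi\otimes P_\phi\otimes P_\phi$, where $P_\phi:\bbR^G\to\bbR^G$ is the orthogonal projection onto $R_\phi$, the first step is an explicit formula for $P_\phi$. I would show that $P_\phi$ is a convolution operator, namely
\[
(P_\phi)_{x,y}=\frac{D_\phi}{d_\phi|G|}\,\chi_\phi\big(yx^{-1}\big)\qquad(x,y\in G),
\]
where $D_\phi=\dim(R_\phi)$ and $\chi_\phi$ is real-valued because $\phi$ is self-conjugate. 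For $\phi$ of type~I or~III this is just the restriction to $\bbR^G$ of the standard central idempotent $e_\phi=\tfrac{d_\phi}{|G|}\sum_g\chi_\phi(g^{-1})g$ of the group algebra acting by left convolution, together with $D_\phi=d_\phi^2$ from Lemma~\ref{obs:dims_reps}. For $\phi$ of type~II one writes $\phi\cong\psi\oplus\bar\psi$ with $\psi$ complex irreducible, notes that $R_\phi$ is the real part of the matrix-coefficient space $M_\psi\oplus M_{\bar\psi}$, and sums the two corresponding idempotents; since $d_\psi=d_\phi/2=D_\phi/d_\phi$, the same formula emerges.

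Next I would expand, writing $c_\phi:=D_\phi/(d_\phi|G|)$. Using self-adjointness of $P_\phi$ and the definition~\eqref{eq:delta_word} of the word tensor (with $w=ab$),
\[
\big(\delta_{G,\phi^{\otimes 3}}\big)_{a,b,c}=\big\langle \delta_G,\,P_\phi 1_a\otimes P_\phi 1_b\otimes P_\phi 1_c\big\rangle= c_\phi^{\,3}\!\!\sum_{a',b'\in G}\!\!\chi_\phi\big(a(a')^{-1}\big)\,\chi_\phi\big(b(b')^{-1}\big)\,\chi_\phi\big(c(b')^{-1}(a')^{-1}\big).
\]
The substitution $u=a(a')^{-1}$, $v=b(b')^{-1}$, a bijection of $G^2$, turns the right-hand side into $c_\phi^{\,3}\sum_{u,v\in G}\chi_\phi(u)\,\chi_\phi(v)\,\chi_\phi\big(cb^{-1}v a^{-1}u\big)$.

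The heart of the argument is the character identity, valid for every bsc $\phi$ and $g_1,g_2\in G$,
\[
\sum_{v\in G}\chi_\phi(g_1 v)\,\chi_\phi(g_2 v^{-1})=\frac{d_\phi|G|}{D_\phi}\,\chi_\phi(g_1 g_2),
\]
which for irreducible $\phi$ is the usual consequence of Schur orthogonality (equivalently of $\sum_v\phi(v)X\phi(v^{-1})=\tfrac{|G|}{d_\phi}\Tr(X)\,\mathrm{Id}$), and for type~II follows by applying it to $\psi$ and $\bar\psi$ separately, the cross terms vanishing because $\psi\not\cong\bar\psi$, and collecting the factor $d_\psi^{-1}=d_\phi/D_\phi$. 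Since $\chi_\phi$ is real one also gets $\sum_v\chi_\phi(v)\chi_\phi(Zv)=\tfrac{d_\phi|G|}{D_\phi}\chi_\phi(Z)$. I would then apply this twice: summing over $v$ after rewriting $\chi_\phi(cb^{-1}v a^{-1}u)=\chi_\phi\big((a^{-1}ucb^{-1})v\big)$ via cyclicity of the trace and the homomorphism property of $\phi$, and then summing over $u$ similarly. The double sum collapses to $\big(\tfrac{d_\phi|G|}{D_\phi}\big)^2\chi_\phi(cb^{-1}a^{-1})$, so $\big(\delta_{G,\phi^{\otimes 3}}\big)_{a,b,c}=c_\phi^{\,3}\big(\tfrac{d_\phi|G|}{D_\phi}\big)^2\chi_\phi(cb^{-1}a^{-1})=c_\phi\,\chi_\phi(cb^{-1}a^{-1})$, and finally $\chi_\phi(cb^{-1}a^{-1})=\chi_\phi\big((abc^{-1})^{-1}\big)=\chi_\phi(abc^{-1})$, again by reality of $\chi_\phi$, which is~\eqref{e:4.2}.

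The step I expect to be the main obstacle is making Steps~1 and~3 uniform over the three bsc types: for type~II the representation $\phi$ is reducible, so neither the projection formula nor the character identity is literally the classical irreducible statement, and one must check that the bookkeeping of the factors $d_\phi$ versus $D_\phi$ works out — which it does precisely because $D_\phi/d_\phi$ equals $d_\phi$ for types~I and~III and $d_\phi/2$ for type~II (Lemma~\ref{obs:dims_reps}). An alternative that avoids the case split is to work throughout over $\bbC$ with the isotypic projections onto matrix-coefficient spaces and then restrict to the real subspace at the end.
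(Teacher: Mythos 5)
Your proof is correct, but it takes a genuinely different route from the paper's. You compute the projection $(P_\phi\otimes P_\phi\otimes P_\phi)\,\delta_G$ head-on, via the explicit convolution kernel $(P_\phi)_{x,y}=\tfrac{D_\phi}{d_\phi|G|}\chi_\phi(yx^{-1})$ and two applications of the character convolution identity $\chi_\phi*\chi_\phi=\tfrac{d_\phi|G|}{D_\phi}\chi_\phi$ (Schur orthogonality, extended to type II by the vanishing of $\chi_\psi*\chi_{\bar\psi}$ for $\psi\not\cong\bar\psi$). The paper never computes a projection at all: it notes that the tensor with entries $\chi_\phi(abc^{-1})$ equals $\sum_{i,j,k}\phi_{i,j}\otimes\phi_{j,k}\otimes(\phi^{-1})_{k,i}$ and hence lies in $R_\phi^{\otimes 3}$, verifies that the proposed right-hand sides sum over $\phi\in\bscs(G)$ to $\delta_G$ — which collapses to the regular-representation identity $\sum_{\psi\ \mathrm{irreducible}}d_\psi\chi_\psi(g)=|G|\,\delta_{g=e}$ — and then invokes uniqueness of the orthogonal decomposition to identify each summand with the corresponding projection. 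Your approach buys a derivation that does not require guessing the answer in advance, at the cost of establishing the projection kernel and doing the double convolution; the paper's buys brevity, needing only the pointwise character identity, but presupposes the candidate formula. The two places where your argument genuinely needs care — the uniform $D_\phi/d_\phi$ bookkeeping across types I, II, III, and the final passage from $\chi_\phi(cb^{-1}a^{-1})$ to $\chi_\phi(abc^{-1})$ via reality of $\chi_\phi$ — are both handled correctly.
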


\subsection{The rank of a single-bsc projection}
To bound the tensor rank of $\delta_{G,\phi^{\otimes 3}}$ we observe that $\chi_{\phi} \big(abc^{-1}\big)$ can be written as the trace of the matrix multiplication $\phi(a) \phi(b) \phi(c)^{-1}$. When $\phi$ is of types I, II, we may take a version of it with real valued matrix entries. In this case a na\"ive implementation is obtained by following the standard Gaussian method, namely
\begin{equation}\label{eq:tr_I_II}
\delta_{G, \phi^{\otimes 3}} = \frac{\dim(R_\phi)}{d_\phi|G|} \sum_{1 \leq i,j,k \leq d_\phi} \phi_{i,j} \otimes 
\phi_{j,k} \otimes (\phi^{-1})_{k,i} \,,
\end{equation}
where we recall that $\phi_{i,j}$ stands for the $(i,j)$ component of $\phi$ as a vector in $\bbR^G$. This decomposition yields the bound $d_\phi^3$ on the rank. A similar na\"ive implementation for the case when $\phi$ is of type III (and thus not real) gives the bound $2d_\phi^3$ on the rank (See Remark \ref{rmk:naiv1_decomp_quat} in Appendix~\ref{s:AA}).

The na\"ive decompositions are however not optimal, for two reasons. First, it is not difficult to see (e.g., from~\eqref{eq:tr_I_II}) that $\delta_{G,\phi^{\otimes 3}}$ is equivalent to the matrix multiplication tensor on the subspace of matrices spanned by $(\phi(g) :\: g \in G)$.
It is well known that the tensor rank $m_d$ of matrix multiplication for $d\times d$ matrices is less than $d^3$. This was first shown (albeit not in this terminology) by Strassen~\cite{strassen1969gaussian}. See also \cite{ottaviani2020tensor} for a more modern survey. Second, as this subspace may be a proper subspace of $\bbC^{d_\phi \times d_\phi}$, the restriction of the matrix multiplication tensor to this subspace may allow a further reduction of the tensor rank. The next proposition makes this precise.

\begin{proposition}
\label{p:1.5.1}
Denote by $m_d$ the tensor rank of matrix multiplication for real $d\times d$ matrices. Then with $d=d_\phi$,
\begin{equation}
\label{e:A1044}
	\rank(\delta_{G, \phi^{\otimes 3}}\big) \leq 
\begin{cases}m_{d},&\phi~\text{is of type I },\\
3m_{\frac{d}{2}},&\phi~\text{is of type II},\\
8m_{\frac{d}{2}},&\phi~\text{is of type III }.
\end{cases}
\end{equation}
\end{proposition}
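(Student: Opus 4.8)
The plan is to read off, from Proposition~\ref{p:5.1}, that $\delta_{G,\phi^{\otimes 3}}$ is --- up to a nonzero scalar and pre-composition with $\bbR$-linear maps --- a form of matrix multiplication over the division algebra $\bbR$, $\bbC$ or $\mathbb{H}$ according to the type of $\phi$, and then to conclude by sub-multiplicativity of tensor rank. Two standard facts are used throughout: (i) the tensor rank of a trilinear form does not increase under pre-composition with linear maps, nor under multiplication by a nonzero scalar; and (ii) the trilinear form $(X,Y,Z)\mapsto\Tr(XYZ)$ on $(\bbR^{d\times d})^3$ is a version of the $d\times d$ matrix-multiplication tensor, hence has rank $m_d$, while more generally tensor rank is sub-multiplicative under the Kronecker product of structure tensors.

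\emph{Type I.} Pick a version of $\phi$ with real matrix entries (Lemma~\ref{lem:triad}). Proposition~\ref{p:5.1} gives, for $x,y,z\in\bbR^G$,
\begin{equation*}
\delta_{G,\phi^{\otimes 3}}(x,y,z)=\frac{\dim(R_\phi)}{d_\phi|G|}\sum_{a,b,c}x_ay_bz_c\,\chi_\phi(abc^{-1})=\frac{\dim(R_\phi)}{d_\phi|G|}\,\Tr(XYZ),
\end{equation*}
where $X=\sum_ax_a\phi(a)$, $Y=\sum_by_b\phi(b)$, $Z=\sum_cz_c\phi(c)^{-1}$ are real $d\times d$ matrices depending $\bbR$-linearly on $x,y,z$. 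By (i) and (ii), $\rank(\delta_{G,\phi^{\otimes 3}})\le m_d$.

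\emph{Types II and III.} Set $\mathbb{K}=\bbC$ (type II) or $\mathbb{K}=\mathbb{H}$ (type III), $n=d/2$, and let $r_{\mathbb{K}}$ be the $\bbR$-bilinear rank of multiplication $\mu_{\mathbb{K}}\colon\mathbb{K}\times\mathbb{K}\to\mathbb{K}$; classically $r_\bbC=3$ (Gauss) and $r_{\mathbb{H}}=8$. Realize $\phi$ by a homomorphism $\phi_{\mathbb{K}}\colon G\to{\rm GL}_n(\mathbb{K})$: for type II take the version $\psi\oplus\bar\psi$ with $\psi$ complex irreducible of dimension $n$ and set $\phi_{\mathbb{K}}=\psi$, so that $\chi_\phi=\chi_\psi+\overline{\chi_\psi}=2\RE\chi_\psi$; for type III use an invariant quaternionic structure on the carrier of $\phi$ to obtain $\phi_{\mathbb{K}}\colon G\to{\rm GL}_n(\mathbb{H})$ whose entry-wise realization as complex $2n\times 2n$ matrices (via a unital embedding $\mathbb{H}\hookrightarrow\bbC^{2\times 2}$) is isomorphic to $\phi$, and verify on a single quaternion that $\chi_\phi(g)=\Tr_\bbC(\phi_{\mathbb{K}}(g)^\bbC)=2\RE\,\Tr_{\mathbb{K}}(\phi_{\mathbb{K}}(g))$, where $\Tr_{\mathbb{K}}$ is the entry-wise $\mathbb{K}$-trace and $\RE$ the real-part map. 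In either case, writing $\rho(x)=\sum_ax_a\phi_{\mathbb{K}}(a)$ and $\rho^-(z)=\sum_cz_c\phi_{\mathbb{K}}(c)^{-1}$ ($\bbR$-linear maps $\bbR^G\to\mathbb{K}^{n\times n}$), Proposition~\ref{p:5.1} yields
\begin{equation*}
\delta_{G,\phi^{\otimes 3}}(x,y,z)=\frac{2\dim(R_\phi)}{d_\phi|G|}\,\RE\,\Tr_{\mathbb{K}}\!\big(\rho(x)\,\rho(y)\,\rho^-(z)\big),
\end{equation*}
so $\delta_{G,\phi^{\otimes 3}}$ is, up to a nonzero scalar and pre-composition with linear maps, the form $T_{\mathbb{K}}\colon(A,B,C)\mapsto\RE\,\Tr_{\mathbb{K}}(ABC)$ on $(\mathbb{K}^{n\times n})^3$. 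Given any rank-$s$ $\bbR$-bilinear formula $AB=\sum_{i=1}^s\ell_i(A)\,\xi_i(B)\,M_i$ for multiplication on $\mathbb{K}^{n\times n}$ (with $\ell_i,\xi_i$ real-linear functionals and $M_i\in\mathbb{K}^{n\times n}$), substitution gives $T_{\mathbb{K}}(A,B,C)=\sum_{i=1}^s\ell_i(A)\,\xi_i(B)\,\RE\,\Tr_{\mathbb{K}}(M_iC)$; hence $\rank(T_{\mathbb{K}})$ is at most the $\bbR$-bilinear rank of matrix multiplication on $\mathbb{K}^{n\times n}$, which by $\mathbb{K}^{n\times n}\cong\mathbb{K}\otimes_\bbR\bbR^{n\times n}$ and sub-multiplicativity is at most $r_{\mathbb{K}}\,m_n$. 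Combining, $\rank(\delta_{G,\phi^{\otimes 3}})\le r_{\mathbb{K}}\,m_{d/2}$, i.e.\ $3m_{d/2}$ for type II and $8m_{d/2}$ for type III.

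\emph{Main obstacle.} The one genuinely fiddly step is the type III bookkeeping: producing the quaternionic version $\phi_{\mathbb{K}}$ of a quaternionic bsc from an invariant quaternionic structure, fixing a unital $\bbR$-algebra embedding $\mathbb{H}\hookrightarrow\bbC^{2\times 2}$ whose entry-wise extension returns $\phi$ up to isomorphism, and checking the trace identity $\Tr_\bbC(q^\bbC)=2\RE(q)$ for $q\in\mathbb{H}$. Everything else is routine: the factorisation of $T_{\mathbb{K}}$ through matrix multiplication, the algebra isomorphism $\mathbb{K}^{n\times n}\cong\mathbb{K}\otimes_\bbR\bbR^{n\times n}$, and sub-multiplicativity of tensor rank under Kronecker products; the only external inputs are the definition $m_d=\rank$ of real $d\times d$ matrix multiplication and the classical values $r_\bbC=3$ and $r_{\mathbb{H}}=8$.
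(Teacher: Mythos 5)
Your proposal is correct and follows essentially the same route as the paper: identify $\delta_{G,\phi^{\otimes 3}}$ via Proposition~\ref{p:5.1} with (a linear pre-composition of) matrix multiplication over $\bbR$, $\bbC$ or $\mathbb{H}$ according to the type of $\phi$, and then pay a factor of $1$, $3$ or $8$ to reduce to real matrix multiplication of size $d$ or $d/2$. The only difference is presentational: where you invoke $\mathbb{K}^{n\times n}\cong\mathbb{K}\otimes_\bbR\bbR^{n\times n}$ and sub-multiplicativity of bilinear complexity together with the classical values $r_\bbC=3$ and $r_{\mathbb{H}}=8$, the paper writes out the corresponding Gauss three-multiplication and eight-multiplication quaternion formulas explicitly at the block-matrix level.
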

Unfortunately, while $m_1 = 1$ and $m_2 = 7$, the precise value of $m_d$ for large $d$ is not known, nor the precise exponent of its asymptotic growth.

\subsection{Mono-bsc-aligned weight configurations}
In view of~\eqref{e:A42},~\eqref{e:A123} and~\eqref{e:A124}, the tensor $\delta_{G, \phi^{\otimes 3}}$ can be implemented by an HD model with width which is at least the rank of $\delta_{G, \phi^{\otimes 3}}$ and with all rows of $A$,$B$ and $C$ chosen from $R_\phi$, or equivalently with weights $W$ such that $\bscs^3_\square(W) = \{(\phi, \phi ,\phi)\}$.  It follows from~\eqref{e:A147} that the full word tensor $\delta_G$ can be realized by an HD model with $W$ such that $\bscs^3_\square(W) = \{(\phi, \phi, \phi) :\: \phi \in \bscs(G)\}$ and $|W_\phi| \geq \rank(\delta_{G, \phi^{\otimes 3}})$ for all bsc $\phi$, where $W_\phi$ denotes the weight vector obtained from $W$ by keeping only those rows in $A$,$B$ and $C$ which lie in $R_\phi$. The required width of the network can then be bounded using Proposition~\ref{p:1.5.1}.

Henceforth we shall call a weight vector $W$ satisfying $\bscs^3_\square(W) = \{(\psi,\psi,\psi) :\: \psi \in \Psi\}$ for some $\Psi \subseteq \bscs(G)$, a {\em mono-bsc-aligned} weight configuration with bsc-support $\Psi$ and, a bit abusively, write $\bscs(W) = \Psi$.  For such $W$ each row of $A$, $B$ and $C$ lies in a subspace of a unique bscs from $\Psi$ with the same bsc for corresponding rows of these matrices. If $\Psi=\{\psi\}$ then we say that $W$ is a {\em single-bsc} weight configuration (with bsc $\psi$).

\subsection{Loss decomposition and decoupling of dynamics}
The total loss~\eqref{e:1.15} on the full dataset $\cD_{G}$ for the HD model $f$ with weights $W$ can be written in tensor notation as 
\begin{equation}
	L_{f_\HD}(\cD_{G}; W) := \frac{1}{|G|^3} \sum_{a,b,c \in G} 
	\big(T^{\HD}_W - \delta_{G}\big)^2_{a,b,c} = \frac{1}{|G|^3} \big\|T^\HD_W - \delta_{G} \big\|^2_2 \,.
\end{equation}
This loss can be decomposed along tensor products of elements of $\bscs(G)^3$ by summing 
\begin{equation}
L_{f_\HD}(\cD_{G}; W) = \sum_{\bscs(G)^3} L_{f_\HD, (\phi, \psi, \zeta)}(\cD_{G}; W)
\quad    ; \quad 
	L_{f_\HD, (\phi, \psi, \zeta)}(\cD_{G}; W) = \frac{1}{|G|^3} \Big\|T^\HD_{W,\phi \otimes \psi \otimes \zeta} - \delta_{G, \phi \otimes \psi \otimes \zeta} \Big\|_2^2 \,,
\end{equation}
where $T^\HD_{W,\phi \otimes \psi \otimes \zeta}$ and  $\delta_{G, \phi \otimes \psi \otimes \zeta}$ are the respective projections of $T^\HD_W$ and $\delta_G$ onto $R_\phi \otimes R_\psi \otimes R_\zeta$. We shall refer to $L_{f_\HD, \phi, \psi,\zeta}(\cD_{G}; W)$ as the {\em bsc$^3$-loss} corresponding to $(\phi,\psi,\zeta)$.

Another useful decomposition of the loss is just along the bscs of the output:
\begin{equation}
\label{e:A253}
	L_{f_\HD}(\cD_G; W) = \sum_{\bscs(G)} L_{f_\HD,\phi}(\cD_G; W)
	\quad; \qquad 
	L_{f_\HD,\phi}(\cS; W) := \frac{1}{|G|^3} \sum_{a,b \in G} \Big\| f_{\HD, \phi}(a,b; W) - (1_{ab})_\phi \Big\|_2^2 \,,
\end{equation}
where $f_{\HD, \phi}(a,b; W) \equiv (f_\HD(a,b; W))_\phi$ and $(1_{ab})_\phi$ are the respective projections of the output and the label of each sample point onto $R_\phi$.
We shall refer to the $L_{f_\HD,\phi}(\cD; W)$ as the {\em bsc-loss} corresponding to $\phi$, or $\phi$-bsc-loss, for short.

The following follows from Corollary~\ref{cor:g_or_h_power_1} and Proposition~\ref{p:5.1}.
\begin{proposition}
\label{p:A15.3}
In order for the HD model with weights $W$ to have zero $\phi$-bsc-loss on the full dataset $\cD_G$, it is necessary and sufficient that
\begin{equation}
\label{e:A1049}
	f_{\HD, \phi}(a,b;W) = \bigg( \frac{\dim(R_\phi)}{d_\phi|G|}\,\chi_{\phi} \big(abc^{-1}\big) :\: c \in G \bigg) \quad ; \qquad a,b \in G \,.
\end{equation}
Moreover, zero total loss is obtained if and only if~\eqref{e:A1049} holds for all $\phi \in \bscs(G)$.
\end{proposition}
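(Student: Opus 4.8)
The plan is to unwind the definitions and reduce everything to the orthogonal decomposition~\eqref{e:A226} together with the explicit form of the single-bsc projection from Proposition~\ref{p:5.1}. First I would recall that, by Corollary~\ref{cor:g_or_h_power_1} applied to $w=ab$ (so that $n_a(w)=n_b(w)=1$), we have $\bscs^3(\delta_G)=\{(\phi,\phi,\phi):\phi\in\bscs(G)\}$, hence the projection $\delta_{G,\phi\otimes\psi\otimes\zeta}$ vanishes unless $\phi=\psi=\zeta$. Consequently, in the bsc-loss decomposition~\eqref{e:A253} the label term, which is the projection of $\sum_{a,b}1_a\otimes 1_b\otimes 1_{w(a,b)}$ onto the relevant subspaces, collapses: the $\phi$-bsc-loss only ``sees'' the part $\delta_{G,\phi^{\otimes3}}$ of the word tensor. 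I would spell out that $L_{f_\HD,\phi}(\cD_G;W)=\tfrac{1}{|G|^3}\|T^\HD_{W,\phi^{\otimes3}}-\delta_{G,\phi^{\otimes3}}\|_2^2$ after noting that $T^\HD_W$ restricted to $R_\phi\otimes R_\phi\otimes R_\phi$ is exactly what pairs against $(1_{ab})_\phi$ in that sum, while cross terms between $R_\phi$ in the output slot and $R_\psi,R_\zeta$ with $\psi,\zeta\neq\phi$ in the input slots contribute nothing because $\delta_G$ has no such component.

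Next, since a sum of squares is zero iff each term is zero, the $\phi$-bsc-loss is zero iff $T^\HD_{W,\phi^{\otimes3}}=\delta_{G,\phi^{\otimes3}}$ as tensors, i.e.\ iff for every $a,b\in G$ the vector $f_{\HD,\phi}(a,b;W)\in R_\phi$ agrees with $(\delta_{G,\phi^{\otimes3}})_{a,b,\cdot}$. Plugging in the closed form from Proposition~\ref{p:5.1}, the latter is precisely $\big(\tfrac{\dim(R_\phi)}{d_\phi|G|}\chi_\phi(abc^{-1}):c\in G\big)$, which is~\eqref{e:A1049}. Here I would be a little careful to justify that equality of the inner products $\langle f_{\HD,\phi}(a,b;W),1_c\rangle$ with $\langle\delta_{G,\phi^{\otimes3}},1_a\otimes1_b\otimes1_c\rangle$ for all $c$ is equivalent to equality of the corresponding tensor-slices, which is immediate because $\{1_c:c\in G\}$ spans $\bbR^G$ and $f_{\HD,\phi}(a,b;W)$ already lies in $R_\phi$; the RHS of~\eqref{e:A1049}, viewed as a vector indexed by $c$, automatically lies in $R_\phi$ since $c\mapsto\chi_\phi(abc^{-1})$ is a matrix coefficient of $\phi$ (up to the change $c\mapsto c^{-1}$, which preserves $R_\phi$).

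Finally, for the ``total loss'' statement I would invoke the orthogonality in~\eqref{e:A226}: the full loss is the sum over $\bscs(G)$ of the bsc-losses, these summands are nonnegative, so the total loss vanishes iff every $\phi$-bsc-loss vanishes, i.e.\ iff~\eqref{e:A1049} holds for all $\phi$. I expect the only real subtlety, and thus the main thing to get right, is the bookkeeping in the first paragraph: verifying that in $L_{f_\HD,\phi}$ the only part of $T^\HD_W$ that matters is its $R_\phi^{\otimes3}$-component and that $\delta_G$ contributes only through $\delta_{G,\phi^{\otimes3}}$ — everything else is a direct substitution of Proposition~\ref{p:5.1} and an elementary ``sum of squares is zero'' argument. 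A clean way to package this is to first prove the per-$\phi$ equivalence using only the output-side projection, then assemble the total-loss claim from the orthogonal direct sum.
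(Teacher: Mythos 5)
Your overall strategy is the right one (the paper itself gives no written proof, only the remark that the statement ``follows from Corollary~\ref{cor:g_or_h_power_1} and Proposition~\ref{p:5.1}'', and your identification of $(1_{ab})_\phi$ with $\big(\tfrac{\dim(R_\phi)}{d_\phi|G|}\chi_\phi(abc^{-1}):c\in G\big)$ via those two results, plus the sum-of-squares and orthogonal-decomposition bookkeeping, is exactly the intended argument). However, your first paragraph contains a false intermediate identity. For general $W$ it is \emph{not} true that $L_{f_\HD,\phi}(\cD_G;W)=\tfrac{1}{|G|^3}\|T^\HD_{W,\phi^{\otimes3}}-\delta_{G,\phi^{\otimes3}}\|_2^2$. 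The vector $f_{\HD,\phi}(a,b;W)$ is the projection onto $R_\phi$ of the slice $(T^\HD_W)_{a,b,:}$; since the inputs $1_a,1_b$ have components in \emph{every} bsc, this slice receives contributions from all components $T^\HD_{W,\psi\otimes\zeta\otimes\phi}$, not only from $T^\HD_{W,\phi^{\otimes3}}$. The correct identity is
\begin{equation*}
|G|^3\,L_{f_\HD,\phi}(\cD_G;W)=\Big\|\textstyle\sum_{\psi,\zeta\in\bscs(G)}T^\HD_{W,\psi\otimes\zeta\otimes\phi}-\delta_{G,\phi^{\otimes3}}\Big\|_2^2
=\big\|T^\HD_{W,\phi^{\otimes3}}-\delta_{G,\phi^{\otimes3}}\big\|_2^2+\textstyle\sum_{(\psi,\zeta)\neq(\phi,\phi)}\big\|T^\HD_{W,\psi\otimes\zeta\otimes\phi}\big\|_2^2,
\end{equation*}
so your asserted equivalence ``zero $\phi$-bsc-loss iff $T^\HD_{W,\phi^{\otimes3}}=\delta_{G,\phi^{\otimes3}}$'' fails in the ``if'' direction: one also needs the cross components $T^\HD_{W,\psi\otimes\zeta\otimes\phi}$ to vanish. (The identity you wrote is the one the paper uses only for \emph{mono-bsc-aligned} $W$, en route to Corollary~\ref{c:A5.5}; it is not available here.)

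Fortunately this detour is unnecessary and the rest of your argument repairs it. The proposition is stated in terms of $f_{\HD,\phi}(a,b;W)$, so you should argue directly from the definition~\eqref{e:A253}: the $\phi$-bsc-loss is a sum of squares, hence vanishes iff $f_{\HD,\phi}(a,b;W)=(1_{ab})_\phi$ for all $a,b$; then Corollary~\ref{cor:g_or_h_power_1} gives $(\mathrm{id}\otimes\mathrm{id}\otimes P_{R_\phi})\delta_G=\delta_{G,\phi^{\otimes3}}$, so $(1_{ab})_\phi=\big((\delta_{G,\phi^{\otimes3}})_{a,b,c}:c\in G\big)$, and Proposition~\ref{p:5.1} turns this into~\eqref{e:A1049}. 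Your remaining points — that the right-hand side of~\eqref{e:A1049} lies in $R_\phi$ (via the inversion-invariance of $R_\phi$), and that the total loss vanishes iff every $\phi$-bsc-loss does, by nonnegativity and the orthogonal decomposition — are correct. Delete the false identity and state the slice computation honestly, and the proof is complete.
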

\noindent

If $W$ is mono-bsc-aligned, then
$L_{f_\HD, (\phi, \psi, \zeta)}(\cD_{G}; W)$ is zero unless $\phi=\psi=\zeta$, in which case it coincides with $L_{f_\HD, \phi}(\cD_{G}; W)$. In this case, we also have,
\begin{equation}
\label{e:A254}
	T^\HD_{W,\phi^{\otimes 3}} \equiv T^\HD_{W_\phi} 
\quad, \qquad 
	f_{\HD, \phi}(\,\cdot\,; W) \equiv f_{\HD}(\,\cdot\,; W_\phi) \,,
\end{equation}
and Proposition~\ref{p:A15.3} becomes,
\begin{corollary}
\label{c:A5.5}
Let $W$ be a mono-bsc-aligned weight configuration. Then $L_{f_\HD,\phi}(\cD_G; W) = 0$ if and only if
\begin{equation}
\label{e:A1024}
f_\HD(a,b;W_\phi) = \bigg( \frac{\dim(R_\phi)}{d_\phi|G|}\,\chi_{\phi} \big(abc^{-1}\big) :\: c \in G \bigg) 
	\quad ; \qquad a,b \in G \,.
\end{equation}
Moreover, $L_{f_\HD}(\cD_G; W) = 0$ if and only if $W$ has full bsc-support and~\eqref{e:A1024} holds for all $\phi \in \bscs(G)$.
\end{corollary}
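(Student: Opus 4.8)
The plan is to deduce this directly from Proposition~\ref{p:A15.3} together with the structural identities \eqref{e:A254} that hold for mono-bsc-aligned weight configurations. First I would observe that for a mono-bsc-aligned $W$, the $\phi$-bsc-loss $L_{f_\HD,\phi}(\cD_G;W)$ from the decomposition \eqref{e:A253} is exactly the quantity whose vanishing is characterized in Proposition~\ref{p:A15.3}; so the only thing to verify is that the left-hand side $f_{\HD,\phi}(a,b;W)$ appearing there can be replaced by $f_\HD(a,b;W_\phi)$. That is precisely the content of the second identity in \eqref{e:A254}: when $W$ is mono-bsc-aligned, the rows of $A,B,C$ lying outside $R_\phi$ contribute nothing to the $\phi$-component of the output (their Hadamard products land in $R_{\psi^{\otimes 3}}$ for $\psi\neq\phi$, which is orthogonal to $R_{\phi^{\otimes 3}}$ by \eqref{e:A226}), so the projection of the full output onto $R_\phi$ equals the output of the sub-model $W_\phi$ consisting only of the $\phi$-rows. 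Substituting this into \eqref{e:A1049} gives \eqref{e:A1024}, proving the first equivalence.

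For the second equivalence I would argue as follows. The "if" direction is immediate: if $W$ has full bsc-support and \eqref{e:A1024} holds for every $\phi\in\bscs(G)$, then by the first part every $\phi$-bsc-loss vanishes, and by the decomposition $L_{f_\HD}(\cD_G;W)=\sum_{\phi\in\bscs(G)}L_{f_\HD,\phi}(\cD_G;W)$ the total loss is zero. For the "only if" direction, suppose $L_{f_\HD}(\cD_G;W)=0$. Since each summand $L_{f_\HD,\phi}(\cD_G;W)$ is a sum of squares and hence nonnegative, each one vanishes, so by the first part \eqref{e:A1024} holds for all $\phi$. It remains to show full bsc-support is forced. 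Here I would use Proposition~\ref{p:5.1}: the right-hand side of \eqref{e:A1024} is $\big(\tfrac{\dim(R_\phi)}{d_\phi|G|}\chi_\phi(abc^{-1})\big)_{c\in G}$, which is the single-bsc projection $\delta_{G,\phi^{\otimes 3}}$ evaluated at $(a,b)$; this is a nonzero tensor for every bsc $\phi$ (its norm can be computed from the orthogonality relations for characters, and in particular $\chi_\phi$ is not identically zero since $\chi_\phi(e)=d_\phi\geq 1$). But for a mono-bsc-aligned $W$, $f_\HD(a,b;W_\phi)$ is identically zero whenever $\phi\notin\bscs(W)$ (there are simply no rows in $R_\phi$), so \eqref{e:A1024} would fail for such $\phi$. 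Hence $\bscs(W)=\bscs(G)$, i.e.\ $W$ has full bsc-support.

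The main obstacle — really the only non-bookkeeping point — is justifying the replacement $f_{\HD,\phi}(a,b;W)=f_\HD(a,b;W_\phi)$ for mono-bsc-aligned $W$, i.e.\ the second identity in \eqref{e:A254}. This rests on the orthogonal decomposition \eqref{e:A226} of $(\bbR^G)^{\otimes 3}$ together with the fact \eqref{obs:product_gives_fusion} that a row-triple with all three rows in $R_\psi$ produces an elementary tensor supported on $R_{\psi^{\otimes 3}}\subseteq\bigoplus_{\xi\in\bscs(\psi\otimes\psi\otimes\psi)}R_\xi$; one then checks that for $\psi\neq\phi$ this does not meet $R_{\phi^{\otimes 3}}$ in a way that affects the $\phi$-projection of the output when paired against $1_c$ with $(1_c)_\phi$ extracted — more directly, that the $R_\phi$-component of the output vector $C^T(Ax\odot By)$ depends only on the rows with $C$-part (equivalently $A$- and $B$-part, by mono-alignment) in $R_\phi$. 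Everything else is a direct substitution into Proposition~\ref{p:A15.3} and the nonnegativity of the squared-norm summands.
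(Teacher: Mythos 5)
Your proposal is correct and follows essentially the same route as the paper, which obtains the corollary by substituting the identity $f_{\HD,\phi}(\,\cdot\,;W)=f_{\HD}(\,\cdot\,;W_\phi)$ from~\eqref{e:A254} into Proposition~\ref{p:A15.3} and then using the nonnegativity of the summands in the loss decomposition~\eqref{e:A253}. Your added details — the orthogonality argument for~\eqref{e:A254} and the observation that $\chi_\phi(e)=d_\phi\geq 1$ forces full bsc-support — correctly fill in the steps the paper leaves implicit.
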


The following proposition shows the stability of bsc-alignment under GD. Together with Decomposition~\eqref{e:A253} and~\eqref{e:A254} this gives the decoupling of the dynamics along different bscs of $G$. Recall that $W | W'$ denotes concatenation of (weight) matrices.
\begin{proposition}
\label{t:5.1}
Under the HD model, if $W$ is mono-bsc-aligned with bsc-support $\{\phi_1, \dots, \phi_k\} \subseteq \bscs(G)$ then so is ${\rm GD}_{\cD_{G}}^t(W)$ for all $t \geq 0$. Moreover, 
\begin{equation}
\label{e:A53}
	{\rm GD}^{t}_{\cD_G}(W) =
		{\rm GD}^{t}_{\cD_G}(W_{\phi_1})\,\big|\,
		{\rm GD}^{t}_{\cD_G}(W_{\phi_2})\big|\dots\,\big|\,
		{\rm GD}^{t}_{\cD_G}(W_{\phi_k}) \,.
\end{equation}
\end{proposition}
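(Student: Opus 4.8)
The plan is to exploit the orthogonal decomposition $\bbR^G = \bigoplus_{\phi \in \bscs(G)} R_\phi$ together with the explicit form of the gradient of the HD loss, and to show that if all rows of $A$, $B$, $C$ lie in the $R_{\phi_j}$'s in a matched way, then the gradient update preserves this property and, moreover, the update applied to the rows associated with $\phi_j$ depends only on those rows. Since a single step of GD preserves the structure and decouples, the full claim follows by induction on $t$.

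First I would write down $\nabla_W L_{f_\HD}(\cD_G;W)$ explicitly. Using~\eqref{e:1.25} and the MSE loss, the gradient with respect to row $A_{i,:}$ is a linear combination (over $a,b,c \in G$, weighted by the residual $r_{a,b,c} := f_\HD(a,b;W)^T 1_c - \delta_{c=w(a,b)}$) of terms of the form $(B_{i,:} \cdot 1_b)(C_{i,:} \cdot 1_c)\, 1_a$, and symmetrically for $B_{i,:}$ and $C_{i,:}$. The key point is that the residual tensor $R_W := T^\HD_W - \delta_G$ decomposes along the orthogonal subspaces $R_\phi \otimes R_\psi \otimes R_\zeta$ (by~\eqref{e:A226}), and by Proposition~\ref{p:5.1}/Corollary~\ref{cor:g_or_h_power_1}, $\delta_G$ has no component outside the diagonal triplets $\{(\phi,\phi,\phi)\}$. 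When $W$ is mono-bsc-aligned, $T^\HD_W = \sum_\phi T^\HD_{W_\phi}$ with $T^\HD_{W_\phi} \in R_\phi^{\otimes 3}$, so $R_W$ lives in $\bigoplus_\phi R_\phi^{\otimes 3}$. Contracting $R_W$ in two of its three slots against the rows $B_{i,:}, C_{i,:}$ (which lie in $R_{\phi_j}$ for the appropriate $j$) therefore produces a vector whose only nonzero component is in $R_{\phi_j}$ — because the contraction of $R_\phi \otimes R_\phi \otimes R_\phi$ against $R_{\phi_j} \otimes R_{\phi_j}$ in the second and third slots vanishes unless $\phi = \phi_j$, by orthogonality. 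Hence $\nabla_{A_{i,:}} L \in R_{\phi_j}$, and the updated row $A_{i,:} - \eta \nabla_{A_{i,:}} L$ stays in $R_{\phi_j}$; the same holds for $B$ and $C$. This proves stability of mono-bsc-alignment under one GD step, hence under ${\rm GD}^t$.

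For the decoupling identity~\eqref{e:A53}, I would observe that by the same orthogonality argument the residual contracted against rows in $R_{\phi_j}$ only "sees" $T^\HD_{W_{\phi_j}} - \delta_{G,\phi_j^{\otimes 3}}$, i.e. $\nabla_{A_{i,:}} L_{f_\HD}(\cD_G;W) = \nabla_{A_{i,:}} L_{f_\HD,\phi_j}(\cD_G;W)$ and the latter, via~\eqref{e:A254}, equals $\nabla_{A_{i,:}} L_{f_\HD}(\cD_G;W_{\phi_j})$ — that is, the gradient of the loss of the stand-alone HD model with weights $W_{\phi_j}$. So each block $W_{\phi_j}$ evolves under its own GD dynamics, independent of the other blocks, and the concatenation identity~\eqref{e:A53} follows by induction on $t$ (the base case $t=0$ being trivial, and the inductive step using that GD commutes with the block decomposition once we know alignment is preserved).

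The main obstacle, and the step requiring the most care, is making the orthogonality/contraction argument precise: one must verify that the partial contraction of a tensor in $R_\phi \otimes R_\phi \otimes R_\phi$ against vectors $u \in R_{\phi_j}$, $v \in R_{\phi_j}$ in two slots yields $0 \in \bbR^G$ when $\phi \neq \phi_j$, and lands in $R_{\phi_j}$ when $\phi = \phi_j$. This is where~\eqref{e:A226} and the fact that $\langle \cdot,\cdot\rangle$ makes the $R_\phi \otimes R_\psi \otimes R_\zeta$ pairwise orthogonal are used; it also requires noting that contraction in a slot is the adjoint of the inclusion $R_{\phi_j} \hookrightarrow \bbR^G$, so the result is automatically constrained to (the orthogonal complement's annihilator, i.e.) the relevant summand. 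Everything else — writing out $\nabla_W L$, the induction, invoking~\eqref{e:A147} and~\eqref{e:A254} — is routine bookkeeping.
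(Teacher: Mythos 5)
Your argument is correct, and its skeleton --- write the gradient of a row as a partial contraction of the residual tensor $T^\HD_W-\delta_G$ against the two matching rows, then use orthogonality of the subspaces $R_\phi$ to show the result stays in, and depends only on, the block of its bsc --- is the same as the paper's. The one step where you genuinely diverge is the treatment of the $\delta_G$ term. You dispose of it by invoking the already-established fact that $\bscs^3(\delta_G)=\{(\phi,\phi,\phi):\phi\in\bscs(G)\}$ (Corollary~\ref{cor:g_or_h_power_1}, decomposition~\eqref{e:A147}), so that the entire residual lies in $\bigoplus_\phi R_\phi^{\otimes 3}$ and a single contraction-plus-orthogonality argument handles both terms of the gradient at once. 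The paper instead proves the standalone Lemma~\ref{prop:decomp_technical}: it computes the partial derivative in coordinates, kills the cross terms of the $T^\HD_W$ part by orthogonality of the rows, and shows directly that the data term $\bigl(\sum_{h\in G}b_{i,h}c_{i,gh}\bigr)_{g\in G}$ lies in $R$ by expanding $c_{i,gh}$ through the structure constants $v^k_{gh}=r^k_{ij}v^i_gv^k_h$ of the representation. Your route is shorter and reuses Section~\ref{s:4}; the paper's is self-contained and strictly more general, applying to an arbitrary self-conjugate $R$ (not only a bsc) and to a two-block partition of the rows into ``in $R$'' and ``orthogonal to $R$'', a generality that is then exploited in Remark~\ref{rmk:attractive} on local attractiveness. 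At bottom the two are the same computation, since Corollary~\ref{cor:g_or_h_power_1} is itself proved via those structure constants.
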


Remark~\ref{rmk:attractive} in the appendix shows that not only the decompositions into representations is stable under GD, it is also locally attractive, in the sense that under a GD step, a set of weights close enough to being decomposed into bscs, tends to become closer to such a decomposition.

\subsection{Empirical study}
\label{s:4.5.3}
\subsubsection{Single-bsc dynamics}
Decomposition~\eqref{e:A253} together with~\eqref{e:A254}, and the decoupling of the dynamics~\eqref{e:A53}, suggest that in order to understand the empirical evolution of the full network, one should study the latter when the weight space is restricted to single-bsc configurations. To this end, given a bsc $\phi \in \bscs(G)$, we ran the GD dynamics on the Hadamard model on the full dataset $\cD_G$, with the rows of matrices $A$,$B$ and $C$ randomly chosen (as discussed in~\ref{s:2.1.6}) from the subspace $R_\phi$, and with the target function being bsc-loss corresponding to $\phi$ in place of the full loss. Proposition~\ref{t:5.1} guarantees 
that under this initalization the rows of $A$,$B$ and $C$, will forever remain bsc-supported only on $R_\phi$ and thus the network effectively minimizes only the bsc-loss corresponding to that bsc.

\begin{figure}[!t]
\tiny
\renewcommand\theadfont{\tiny}
\begin{subfigure}{0.32\linewidth}
    \includegraphics[width=\textwidth]{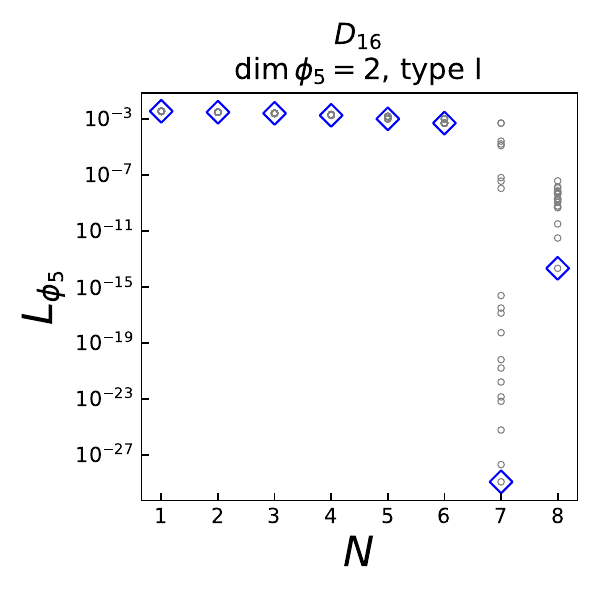}
\end{subfigure}
\begin{subfigure}{0.32\linewidth}
	\includegraphics[width=\textwidth]{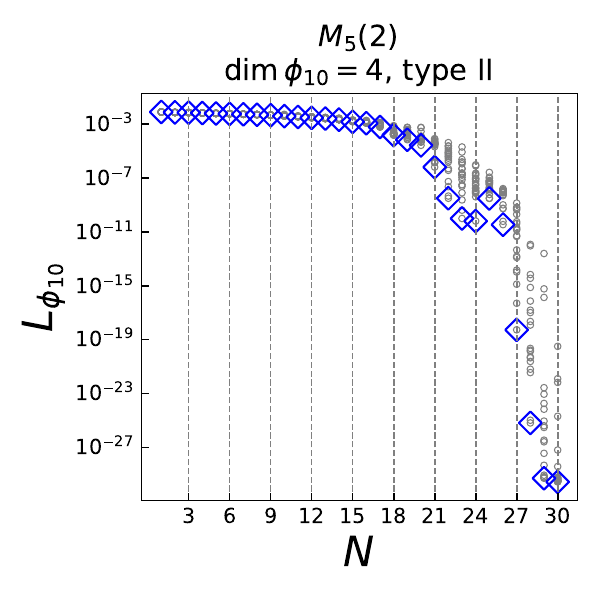} 
\end{subfigure}
\begin{subfigure}{0.32\linewidth}
    \includegraphics[width=\textwidth]{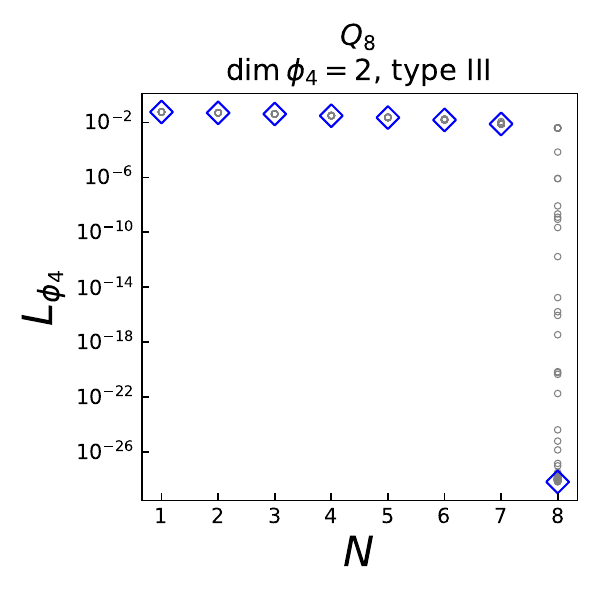} 
\end{subfigure}
\resizebox{\textwidth}{!}{
\begin{tabular}{|c|c|c|c|c|c|c|c|
}
\hline 
Group & \thead{bsc} & $\dim$ & Type 
& \thead{Theoretical\\ min rows.} & \thead{Min rows \\for loss $<10^{-6}$ }& Accuracy & bsc-loss\tabularnewline
\hline 
\hline 
$D_{16}$ & $\phi_{4}$ & 2 & I 
& 7 & 7 & 1 & $8.3\times10^{-26}$\tabularnewline
\hline 
$D_{16}$ & $\phi_{5}$ & 2 & I 
& 7 & 7 & 0.5 & $1.2\times10^{-29}$\tabularnewline
\hline 
$S_{4}$ & $\phi_{4}$ & 3 & I 
& $m_3$ & 23 & 1 & $3.2\times10^{-23}$\tabularnewline
\hline 
$\mathbb{Z}_{32}$ & $\phi_{2}$ & 2 & II 
& $\leqslant3$ & 3 & 1 & $3.4\times10^{-31}$\tabularnewline
\hline 
$(\mathbb{Z}_{4}\times\mathbb{Z}_{2})\rtimes\mathbb{Z}_{2}$ & $\phi_{7}$ & 2 & II 
& $\leqslant3$ & 3 & 0.25 & $4.3\times10^{-33}$\tabularnewline
\hline 
$M_{5}(2)$ & $\phi_{10}$ & 4 & II 
& $\leqslant21$ & 21 & 1 & $6.2\times10^{-7}$\tabularnewline
\hline 
$Q_{8}$ & $\phi_{4}$ & 2 & III 
& 8 & 8 & 1 & $6.4\times10^{-29}$\tabularnewline
\hline
\end{tabular}}
\captionsetup{width=\textwidth}
\caption{{\em Top:} Terminal bsc-loss in repeated (20-100) runs of the model for various groups and bscs as a function of the width of the network, with initial weights chosen randomly from $R_\phi$. The minimal loss is marked with a blue diamond. {\em Bottom:} Minimal number of rows needed in order to have at least one run (among 20-100 tried) with terminal bsc-loss $<10^{-6}$. Accuracy and bsc-loss are those at the end of one such run. The theoretical minimal number of rows for achieving noticeably low bsc-loss is computed using Proposition~\ref{p:1.5.1}}.
\label{f:2} 
\vspace{0.3cm}
\end{figure}
\ToVerTwo{I (Ran) removed the column of "faithful" since we no longer talk about it, since we saw that Nanda did it. If someone wants to brings back we need: define what it means + bring back the claim (to appendix)}
Figure~\ref{f:2} shows the terminal loss as a function of the width of the model, in repeated runs (20-100, depending on the group and bsc) of the above experiment for different choices of groups and bscs. As can be seen from the plots, as soon as the number of rows reaches the theoretical value given by the r.h.s. of~\eqref{e:A1044} in Proposition~\ref{p:1.5.1}, a run whose terminal loss is noticeably low was observed. A table summarizing the empirical minimal number of rows required to achieve a low bsc-loss per group and bsc is included in the figure as well. The results thug suggest:
\begin{gp}\label{gp:3}
$ $
\begin{enumerate}
	\item 
Starting from a single-bsc supported weight configuration, the HD model is able to implement the corresponding bsc$^3$ projected word tensor, thereby achieving zero bsc-loss for this bsc.
\item Moreover, the model is able to do so, as soon as the width is at least the the theoretical upper bound, given in Proposition~\ref{p:1.5.1}. 
\item In particular, the HD model finds Strassen-type low-rank representations for the bsc-projected word tensors of~\eqref{e:4.2}.
\end{enumerate}
\end{gp}

\begin{figure}[t!]
\setlength{\tabcolsep}{0pt}
\renewcommand{\arraystretch}{1}
\renewcommand\theadfont{\tiny}
	 
\begin{minipage}{.49\linewidth}
\tiny
\begin{tabular}{|c|c|c|c|c|c|c|c|}
\hline
 $N$ & \thead{A} & \thead{Total \\ loss} & \thead{ $\phi_{0}$\\I, $d = 1$}&\thead{ $\phi_{1}$\\I, $d = 1$}&\thead{ $\phi_{2}$\\I, $d = 1$}&\thead{ $\phi_{3}$\\I, $d = 1$}&\thead{ $\phi_{4}$\\III, $d = 2$}\tabularnewline 
\hline 
$8$ &\tiny{$1$} & \tiny{$0.04$} & \tiny{$1 \times 10^{-30}$} & \tiny{$1.1 \times 10^{-30}$} & \tiny{$9.3 \times 10^{-31}$} & \tiny{$1.1 \times 10^{-30}$} & \tiny{$0.031$}\tabularnewline
 & &  & \tiny{$1$} & \tiny{$1$} & \tiny{$1$} & \tiny{$1$} & \tiny{$4$}\tabularnewline 
\hline 
$12$ &\tiny{$1$} & \tiny{$6.5 \times 10^{-29}$} & \tiny{$9.5 \times 10^{-31}$} & \tiny{$9.6 \times 10^{-31}$} & \tiny{$9.8 \times 10^{-31}$} & \tiny{$9 \times 10^{-31}$} & \tiny{$6.2 \times 10^{-29}$}\tabularnewline
 & &  & \tiny{$1$} & \tiny{$1$} & \tiny{$1$} & \tiny{$1$} & \tiny{$8$}\tabularnewline 
\hline 
$16$ &\tiny{$1$} & \tiny{$1.5 \times 10^{-10}$} & \tiny{$1.9 \times 10^{-23}$} & \tiny{$4.2 \times 10^{-33}$} & \tiny{$1.9 \times 10^{-27}$} & \tiny{$1.8 \times 10^{-32}$} & \tiny{$7.6 \times 10^{-12}$}\tabularnewline
 & &  & \tiny{$1$} & \tiny{$1$} & \tiny{$1$} & \tiny{$1$} & \tiny{$11$}\tabularnewline 
\hline 
\end{tabular}
\end{minipage}\hfill
\begin{minipage}{0.51\linewidth}
    \includegraphics[width=1\textwidth]{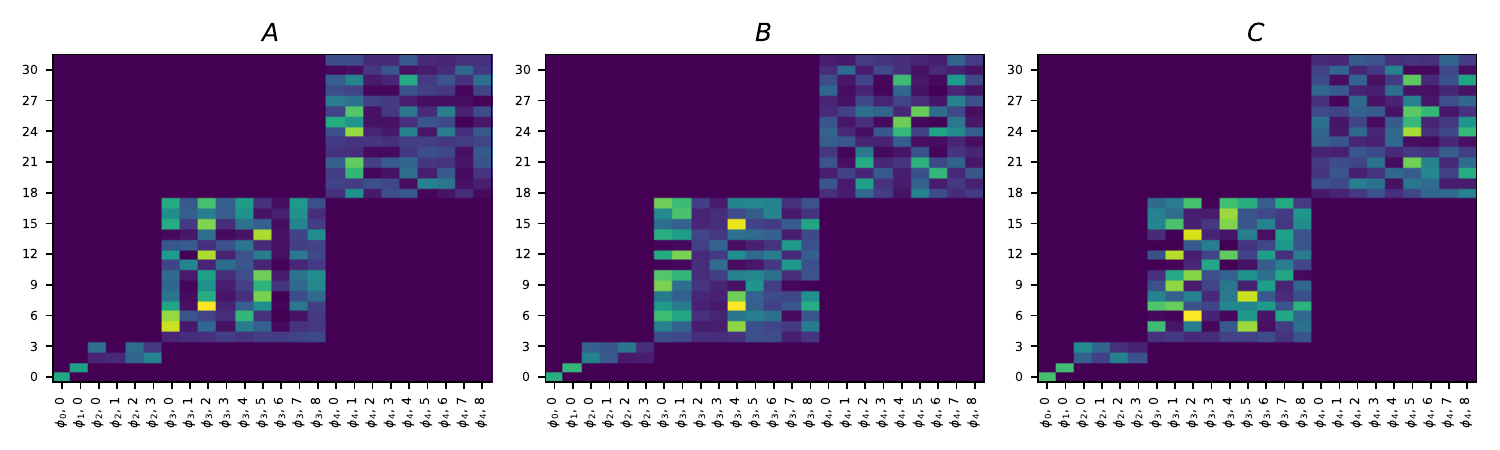}
\end{minipage}
\begin{minipage}{\linewidth}

\tiny
\begin{tabular}{|c|c|c|c|c|c|c|c|c|c|c|c|c|c|}
\hline
$N$ & \thead{A} & \thead{Total \\ loss} & \thead{ $\phi_{0}$\\I, $d = 1$}&\thead{ $\phi_{1}$\\I, $d = 1$}&\thead{ $\phi_{2}$\\I, $d = 1$}&\thead{ $\phi_{3}$\\I, $d = 1$}&\thead{ $\phi_{4}$\\I, $d = 2$}&\thead{ $\phi_{5}$\\I, $d = 2$}&\thead{ $\phi_{6}$\\I, $d = 2$}&\thead{ $\phi_{7}$\\I, $d = 2$}&\thead{ $\phi_{8}$\\I, $d = 2$}&\thead{ $\phi_{9}$\\I, $d = 2$}&\thead{ $\phi_{10}$\\I, $d = 2$}\tabularnewline 
\hline 
$32$ &\tiny{$1$} & \tiny{$0.013$} & \tiny{$3.5 \times 10^{-31}$} & \tiny{$3.6 \times 10^{-31}$} & \tiny{$3.8 \times 10^{-31}$} & \tiny{$3.5 \times 10^{-31}$} & \tiny{$0.0024$} & \tiny{$0.0018$} & \tiny{$0.002$} & \tiny{$0.0019$} & \tiny{$0.0018$} & \tiny{$0.0014$} & \tiny{$0.0012$}\tabularnewline
 & &  & \tiny{$1$} & \tiny{$1$} & \tiny{$1$} & \tiny{$1$} & \tiny{$3$} & \tiny{$4$} & \tiny{$4$} & \tiny{$4$} & \tiny{$4$} & \tiny{$4$} & \tiny{$5$}\tabularnewline 
\hline 
$53$ &\tiny{$1$} & \tiny{$0.002$} & \tiny{$3.7 \times 10^{-31}$} & \tiny{$3.7 \times 10^{-31}$} & \tiny{$3.8 \times 10^{-31}$} & \tiny{$3.8 \times 10^{-31}$} & \tiny{$8.3 \times 10^{-10}$} & \tiny{$1.6 \times 10^{-9}$} & \tiny{$2.1 \times 10^{-9}$} & \tiny{$0.00024$} & \tiny{$0.00049$} & \tiny{$1.3 \times 10^{-9}$} & \tiny{$2.3 \times 10^{-12}$}\tabularnewline
 & &  & \tiny{$1$} & \tiny{$1$} & \tiny{$1$} & \tiny{$1$} & \tiny{$7$} & \tiny{$7$} & \tiny{$7$} & \tiny{$7$} & \tiny{$6$} & \tiny{$7$} & \tiny{$7$}\tabularnewline 
\hline 
$72$ &\tiny{$1$} & \tiny{$7.2 \times 10^{-8}$} & \tiny{$1.1 \times 10^{-10}$} & \tiny{$3.5 \times 10^{-10}$} & \tiny{$2.4 \times 10^{-10}$} & \tiny{$2.4 \times 10^{-10}$} & \tiny{$6.3 \times 10^{-9}$} & \tiny{$7 \times 10^{-9}$} & \tiny{$1 \times 10^{-8}$} & \tiny{$7.6 \times 10^{-9}$} & \tiny{$7.9 \times 10^{-9}$} & \tiny{$9.8 \times 10^{-9}$} & \tiny{$9.8 \times 10^{-9}$}\tabularnewline
 & &  & \tiny{$1$} & \tiny{$1$} & \tiny{$1$} & \tiny{$1$} & \tiny{$10$} & \tiny{$9$} & \tiny{$9$} & \tiny{$9$} & \tiny{$9$} & \tiny{$9$} & \tiny{$10$}\tabularnewline 
\hline 
\end{tabular}
\end{minipage}	

\begin{minipage}{\linewidth}
\tiny
\begin{tabular}{|c|c|c|c|c|c|c|c|c|c|c|c|c|c|c|}
\hline
 $N$ & \thead{A} & \thead{Total \\ loss} & \thead{ $\phi_{0}$\\I, $d = 1$}&\thead{ $\phi_{1}$\\I, $d = 1$}&\thead{ $\phi_{2}$\\I, $d = 1$}&\thead{ $\phi_{3}$\\I, $d = 1$}&\thead{ $\phi_{4}$\\II, $d = 2$}&\thead{ $\phi_{5}$\\II, $d = 2$}&\thead{ $\phi_{6}$\\II, $d = 2$}&\thead{ $\phi_{7}$\\II, $d = 2$}&\thead{ $\phi_{8}$\\II, $d = 2$}&\thead{ $\phi_{9}$\\II, $d = 2$}&\thead{ $\phi_{10}$\\II, $d = 4$}&\thead{ $\phi_{11}$\\II, $d = 4$}\tabularnewline 
\hline 
$32$ &\tiny{$1$} & \tiny{$0.013$} & \tiny{$2.5 \times 10^{-33}$} & \tiny{$2.4 \times 10^{-33}$} & \tiny{$2.3 \times 10^{-33}$} & \tiny{$2.4 \times 10^{-33}$} & \tiny{$1.1 \times 10^{-32}$} & \tiny{$0.0005$} & \tiny{$0.0005$} & \tiny{$0.0005$} & \tiny{$0.00025$} & \tiny{$0.0005$} & \tiny{$0.0048$} & \tiny{$0.0055$}\tabularnewline
 & &  & \tiny{$1$} & \tiny{$1$} & \tiny{$1$} & \tiny{$1$} & \tiny{$3$} & \tiny{$2$} & \tiny{$2$} & \tiny{$2$} & \tiny{$2$} & \tiny{$2$} & \tiny{$8$} & \tiny{$6$}\tabularnewline 
\hline 
$64$ &\tiny{$1$} & \tiny{$0.00036$} & \tiny{$2.1 \times 10^{-33}$} & \tiny{$2.1 \times 10^{-33}$} & \tiny{$2.1 \times 10^{-33}$} & \tiny{$2.1 \times 10^{-33}$} & \tiny{$6.9 \times 10^{-33}$} & \tiny{$7.9 \times 10^{-33}$} & \tiny{$8.4 \times 10^{-33}$} & \tiny{$6.1 \times 10^{-33}$} & \tiny{$1.1 \times 10^{-32}$} & \tiny{$6.5 \times 10^{-33}$} & \tiny{$5.4 \times 10^{-5}$} & \tiny{$0.00018$}\tabularnewline
 & &  & \tiny{$1$} & \tiny{$1$} & \tiny{$1$} & \tiny{$1$} & \tiny{$3$} & \tiny{$3$} & \tiny{$3$} & \tiny{$3$} & \tiny{$3$} & \tiny{$3$} & \tiny{$21$} & \tiny{$19$}\tabularnewline 
\hline 
$80$ &\tiny{$1$} & \tiny{$4.4 \times 10^{-8}$} & \tiny{$1.5 \times 10^{-14}$} & \tiny{$8 \times 10^{-16}$} & \tiny{$6 \times 10^{-15}$} & \tiny{$5.8 \times 10^{-14}$} & \tiny{$9 \times 10^{-13}$} & \tiny{$3.8 \times 10^{-14}$} & \tiny{$2.4 \times 10^{-13}$} & \tiny{$1.2 \times 10^{-12}$} & \tiny{$1.8 \times 10^{-13}$} & \tiny{$4.1 \times 10^{-14}$} & \tiny{$6.2 \times 10^{-9}$} & \tiny{$1.6 \times 10^{-8}$}\tabularnewline
 & &  & \tiny{$1$} & \tiny{$1$} & \tiny{$1$} & \tiny{$1$} & \tiny{$4$} & \tiny{$4$} & \tiny{$4$} & \tiny{$3$} & \tiny{$4$} & \tiny{$4$} & \tiny{$26$} & \tiny{$25$}\tabularnewline 
\hline 
\end{tabular}
\end{minipage}
\captionsetup{width=0.98\linewidth}
\caption{\small {\em Tables:} Median terminal accuracy, total loss, bsc-loss and number of rows per bsc across 20 runs for various model widths and groups: 
$Q_8$ ({\em top}), $D_{16}$ ({\em middle}), $M_{5}(2)$ {\em (bottom)}.
{\em Plots:} Projection (in absolute value) of the terminal weights of the rows of matrices $A$, $B$ and $C$ (Y-axis) on the matrix entries of all bscs of the group $S_4$ as $\bbR^G$-vectors (X-axis). 
}%
\label{f:3}
\end{figure}

\subsubsection{The full dynamics}
Next we ran the HD model on the full dataset with the full total loss and with a standard random initialization (as described in Subsection~\ref{s:2.1.6}) which does not restrict them to a single bsc subspace. The results for different groups are summarized in Figure~\ref{f:3} and lead to:
\begin{gp}\label{gp:4}
$ $ 
\begin{enumerate}
	\item 
Under GD for the HD model with standard initialization, the weights eventually converges to a mono-bsc-aligned terminal weight configuration $W$ with bsc-support $\bscs(W)$, which is determined, somehow, according to the initial assignment of weights. 
\item Under this configuration, for each bsc $\phi$ in the bsc-support, the corresponding bsc-loss is the minimal possible using $|W_\phi|$-many rows, i.e. it is similar to the loss achieved by a model with as many rows, which is initialized from values chosen only from $R_\phi$.
\item In particular, if $|W_\phi|$ is larger or equal to the rank of the corresponding bsc-projected word tensor $\delta_{G,\phi^{\otimes 3}}$, then the bsc-loss corresponding to $\phi$ will be noticeably low and $T^\HD_{W_\phi}$ will be essentially equal to $\delta_{G,\phi^{\otimes 3}}$.
\item This becomes more likely the larger the width of the model is. In particular, for a very large model width, typically all bscs are sufficiently represented in the terminal weight, resulting in a zero terminal total loss.
\end{enumerate}
\end{gp}

\section{Related work}
\label{s:2}
\subsection{Learning discrete operations}
Bivariate polynomials over $\bbZ_p$ were already studied by Power et al~\cite{power2022grokking} who showed that some polynomials could be learned using the same transformer based architecture, while others could not. Using a more layers was shown to allow for learning general biivariate polynomials over the same field by Gromov et al~\cite{doshi2024grokking}, who used an MLP network, provided depth and width are tuned correctly.

\subsection{Efficient matrix multiplication}
There is a vast literature and on going study on the topic of efficient matrix multiplication and, more generally, bilinear function computation.~\cite{ottaviani2020tensor} is a good survey on the subject and lecture notes can be found, e.g., in~\cite{blaser2009complexity}. Using machine learning models to discover efficient matrix multiplication was pioneered in~\cite{fawzi2022discovering}, were the authors used the reinforcement learning model AlphaZero to discover efficient matrix multiplication for various matrix sizes and underlying fields. 

\subsection{Geometric deep learning}
Using models whose output is invariant or equivarient under the symmetries of the underlying dataset, as means to obtaining more efficient and accurate predictors, was shown to be a successful paradigm across many learning tasks. The latter include, but not limited to, image, sound and video processing, pattern recognition and graph algorithms. The use of mathematical group theory to design and study such models dates back, at least, to the 70s, with notable earlier works including~\cite{amari78, kanatani2012group,lenz1990group} for general ML models,~\cite{fukushima1980neocognitron, wood1996representation} for the case of neural networks, and~\cite{goller1996learning, sperduti1993encoding} for graph learning tasks. A recent survey on this subject can be found in the Book~\cite{bronstein2021geometric}.

\subsection{Grokking}
Following the discovery of Grokking in~\cite{power2022grokking}, there has been a surge of works by the community on this subject and shall not be able to survey all of them. One line of work on grokking involves reconstructing this phenomenon in new setups, i.e. in different models and for different learning tasks. Examples here include~\cite{fan2024deep}, where it is shown that grokking occurs for an MLP network of large depth used for classifying the MINST dataset, and~\cite{liu2022omnigrok}, where grokking is occurs for various ``real'' tasks involving images, text and molecules and under various ``real'' architectures such as LSTM and graph convolutional networks. Other synthetic datasets were treated by~\cite{barak2022hidden}, where the task is learning the parity of a sparse binary vector as a label, or~\cite{xu2023benign} where the data is XOR clustered.

Another direction of research focuses on explaining why grokking occurs. While a mathematically rigorous proof is only limited to the case treated in~\cite{xu2023benign}, the acceptable coarse picture (c.f.~\cite{kumar2023grokking,liu2022towards, nanda2023progress}), is that of a sharp transition between a lazy-learning phase to a feature learning phase. During the former, the model quickly finds a solution to fit to the training data, but this solution is not generalizable to the population dataset. Later in the dynamics, the model is able to learn a much lower rank (sparse feature) solution, which is able to fit the full dataset, and ultimately becomes the dominant component of the output of the model. The precise mechanism by which these two solutions are discovered, including the sharp transition between the ``memorization'' to the ``representation'' solutions, and the necessity for explicit regularization for this mechanism to work, are a subject of debate (see also~\cite{doshi2023grok, humayun2024deep, lyu2023dichotomy, mohamadi2024you, rubin2023grokking, thilak2022slingshot,  varma2309explaining}).

\section{Summary and Discussion}
\subsection{Summary}
In this work we showed that a simple two-layer network with standard activation functions can learn an arbitrary word operation in an arbitrary finite group $G$, provided enough width is available. Recasting the problem as that of learning a particular $\bbR^{|G| \otimes 3}$ tensor, we showed that this word-tensor is typically of low rank. A way to obtain low-rank implementations of the tensor, is by decomposing it along (the tensor product of the sub-spaces of) triplets of basic self-conjugate (real values analogs of the irreducible) representations of the group and then use the fusion structure of the group to rule out many of the components. 
Focusing mainly on a surrogate model (the Hadamard Model), which is easier to study, yet phenomenologically similar, we showed that the network finds (approximations of) such low-rank implementations, thereby able to use limited width to approximate the word-tensor in a generalizable way. In the case of the simple multiplication word, we further elucidate the form of these low-rank implementations, showing that the network effectively implements efficient matrix multiplication in the sense of Strassen~\cite{strassen1969gaussian} and also shed light on the mechanism by which the network reaches such solution under gradient descent.

\subsection{Global attractiveness of low-rank tensor implementations}
This work exposed a class of low-rank implementations for the word-tensor which the HD model can represent. The existence of such implementation is likely a necessary condition a model to able to represent a generalizing solution with limited width. This work did not address, however, the mechanism-by-which and reason-for-which such a solution is reached via gradient descent, for a general word. Mathematically, it is not clear why (approximations of) those low-rank sparse-bsc-support implementations of the word tensor appearing in Suggested General Principle~\ref{gp:1} should be globally (not just locally) attractive. A proof for this is missing even in the simple case of group multiplication on $\bbZ_p$. In the case of the multiplication word, the decoupling of the dynamics starting from a mono-bsc-aligned configuration reduces the analysis to the, seemingly tractable case of a single-bsc.

\subsection{Further study of The TLP model}
\label{s:7.3}
For the case of the TLP model, even the question of existence of a generalizing solution is only partially answered in this work. As remarked in Subsection~\ref{s:4.6}, by using low-degree polynomial approximation to the activation function, it seems that the TLP model is able to (approximately) represent tensors which belong to the subspace spanned by (the subspaces of) certain low-degree tensor products of the bscs of the group (the analog of the tensor product of a triplet of bscs, in the case of the HD model). It is thus plausible that the network will converge to (an approximation of) a low-rank implementation of (an approximation of) the word-tensor, which lies in that subspace. This leads to a reformulation of the notions of a box-cover and minimal box-cover from Section~\ref{s:4} using such low-degree tensor products instead of the original $3$-products, so that Suggested General Principle~\ref{gp:1} still holds. Preliminary results show that this is indeed the case (See Appendix~\ref{s:B.2}). Nevertheless, making this precise and statistically significant requires further work.

\subsection{Finding bsc (and irreducible) representations}
Lastly, we remark that a possible application of Suggested General Principles \ref{gp:3},~\ref{gp:4} and Proposition \ref{t:5.1}, it to numerically find the bsc representations of a given finite group (and thus the corresponding non-abelian Fourier Transform). Indeed, given a group $G$, in order to learn its representations one can run the Hadamard model until it converges to a terminal configuration $W_{\rm term}$, which would (approximately) be mono-bsc-aligned. Assuming sufficient width, the rows of $(W_{\rm term})_\phi$ (i.e. the rows of matrices $A$, $B$ and $C$) will span the subspace corresponding to $\phi$ for all of the bscs of $G$. The partition of the rows $W_{\rm term}$ according to different bscs can be obtained using the orthogonality of these latter subspaces. Moreover, thanks to Proposition \ref{t:5.1}, if we recover a few representations in full,
we can then run the algorithm with rows orthogonal to those representations, and recover the rest. This permits using less width and thus less computing power.
Using a complex-valued version of the problem (which empirically and analytically behave similarly), one can recover the irreducible representations of the group in the same way.
\subsection{Additional structure in terminal solutions}
Another interesting phenomenon which arises from examples, and can be seen in the heatmaps, is that there seem to be an additional structure in the terminal weight configuration, not explained by the combinatorial fusion data of the group and the covering of the support by boxes. For example, we can see that sometimes one or more components in the decomposition of a row of $A$, $B$ or $C$ along matrix entry vectors of a bsc in the bsc-support of the row, non-generically, vanish. We believe that this phenomenon is related to the multiplicities of bscs which appear in the matrix coefficient spaces, and that perhaps in many tensors the projections onto some of the (non-canonically defined) copies of the bscs vanish.
\bibliographystyle{abbrv}
\bibliography{Groups.bib}

\appendix

\section{Proofs of statements}
\label{s:AA}
\begin{proof}[Proof of Lemma \ref{obs:tensor_rank_naiv1_bound}]
Assume without loss of generality $a\leq b\leq c.$ Then $T$ can be written as $\sum_{i\in[a]}1_i\otimes T_i,$ where $1_i$ is the $i$th standard basis element of $\mathbb{R}^a$ and $T_i$ is a $2-$tensor in $\mathbb{R}^b\otimes \mathbb{R}^c,$ that is a $b\times c$ matrix. For matrices the notion of tensor rank agrees with the usual notion of rank. For a $b\times c$ matrix the rank is upper bounded by $\min(b,c)=b,$ and we can write $T_i = \sum_{j=1}^b v^i_j\otimes u^i_j.$ Thus,
\[T=\sum_{i=1}^a \sum_{j=1}^b 1_i\otimes v^i_j\otimes u^i_j.\]
\end{proof}

\subsection{Proofs for Section~\ref{s:4}}

\begin{proof}[Proof of Proposition~\ref{prop:fusion_rep_of_tensor_short}]
This proposition is an immediate consequence of the following proposition, whose proof is given below.
\begin{proposition}\label{prop:fusion_rep_of_tensor}
For every bsc $\phi$ of dimension $D=D_\phi=\dim(R_\phi)$ fix a real orthonormal basis $B$ for $R_\phi$\[B=B(\phi)=\{v^1=v^1(\phi),\ldots,v^D=v^D(\phi)\}.\]
Then there exists a unique explicit representation
    \[\delta_{w(g,h)} = \sum_{\phi,\psi,\zeta\in{\rm bscs}(G)}\sum_{(i,j,k)\in[D_\phi]\times[D_\psi]\times[D_\zeta]}U_{ijk}(\phi,\psi,\zeta)v^i(\phi)\otimes v^j(\psi) \otimes v^k(\zeta),\]
    where the coefficients $U_{ijk}(\phi,\psi,\zeta)$ are zero unless
    $\phi\in{\rm bscs}(\zeta^{\otimes n_a(w)})$ and $\psi\in{\rm bscs}(\zeta^{\otimes n_b(w)}).$
\end{proposition}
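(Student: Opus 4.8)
The plan is to exploit the orthogonal decomposition \eqref{e:A226} to turn the asserted expansion into a computation of inner products, and then to pin down, for the matrix coefficient $v^k(\zeta)$ evaluated along the word, which tensor-power subspace of $\zeta$ it falls into as a function of the two arguments. Uniqueness is immediate: the bases $B(\phi),B(\psi),B(\zeta)$ being orthonormal, the family $\{v^i(\phi)\otimes v^j(\psi)\otimes v^k(\zeta)\}$, over all $(\phi,\psi,\zeta)\in\bscs(G)^3$ and all $(i,j,k)\in[D_\phi]\times[D_\psi]\times[D_\zeta]$, is an orthonormal basis of $(\bbR^G)^{\otimes 3}$ by \eqref{e:A226} and \eqref{e:A223}. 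Hence $\delta_{G,w}$ has a unique expansion in it, with coefficients
\[
U_{ijk}(\phi,\psi,\zeta)=\big\langle\delta_{G,w},\,v^i(\phi)\otimes v^j(\psi)\otimes v^k(\zeta)\big\rangle=\sum_{a,b\in G}v^i(\phi)_a\,v^j(\psi)_b\,v^k(\zeta)_{w(a,b)},
\]
by \eqref{eq:delta_word} and \eqref{e:A1}. It remains to show this vanishes unless $\phi\in\bscs(\zeta^{\otimes n_a(w)})$ and $\psi\in\bscs(\zeta^{\otimes n_b(w)})$.

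The key step is to prove that the function $\Phi:=\big(v^k(\zeta)_{w(a,b)}\big)_{a,b\in G}$, seen as an element of $\bbR^G\otimes\bbR^G$, belongs to $R_{\zeta^{\otimes n_a(w)}}\otimes R_{\zeta^{\otimes n_b(w)}}$. Fix a unitary version of $\zeta$ (legitimate since $R_\zeta$ is isomorphism invariant) and write $w=\ell_1\cdots\ell_L$ with $\ell_m\in\{a,a^{-1},b,b^{-1}\}$, so $\zeta(w(a,b))=\zeta(\ell_1)\cdots\zeta(\ell_L)$ with $\zeta(a^{-1})=\zeta(a)^\ast$ and $\zeta(b^{-1})=\zeta(b)^\ast$. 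Expanding the matrix product entrywise, $\zeta(w(a,b))_{pq}$ is a sum over index-paths of products of $L$ scalars, each an entry of $\zeta(a)$, $\zeta(a)^\ast$, $\zeta(b)$ or $\zeta(b)^\ast$, and after taking real and imaginary parts $v^k(\zeta)_{w(a,b)}$ becomes a real linear combination of monomials $\prod_m x_m$ with $x_m$ a real or imaginary part of one such entry. By unitarity the real and imaginary parts of entries of $\zeta(a)^\ast$ coincide, up to sign, with those of $\zeta(a)$, so every $a$-factor lies in $R_\zeta$ as a function of $a$, and likewise every $b$-factor in $R_\zeta$ as a function of $b$. Grouping the $n_a(w)$ factors depending on $a$ and the $n_b(w)$ factors depending on $b$, each monomial is $g(a)\,h(b)$ with $g$ a pointwise product of $n_a(w)$ elements of $R_\zeta$ and $h$ a pointwise product of $n_b(w)$ elements of $R_\zeta$. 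Iterating \eqref{obs:product_gives_fusion} (applied with $\phi_1=\zeta^{\otimes(n-1)}$ and $\phi_2=\zeta$, using that tensor products of sc representations are sc) gives $g\in R_{\zeta^{\otimes n_a(w)}}$ and $h\in R_{\zeta^{\otimes n_b(w)}}$, hence $\prod_m x_m\in R_{\zeta^{\otimes n_a(w)}}\otimes R_{\zeta^{\otimes n_b(w)}}$; summing over monomials and paths yields the claim, with the convention $\zeta^{\otimes 0}={\rm Triv}$ if a letter is absent from $w$.

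The conclusion is then a short orthogonality argument: $U_{ijk}(\phi,\psi,\zeta)=\big\langle v^i(\phi)\otimes v^j(\psi),\,\Phi\big\rangle$ in $\bbR^G\otimes\bbR^G$, with $v^i(\phi)\otimes v^j(\psi)\in R_\phi\otimes R_\psi$ and $\Phi\in R_{\zeta^{\otimes n_a(w)}}\otimes R_{\zeta^{\otimes n_b(w)}}$. Since $R_{\zeta^{\otimes n_a(w)}}$ is the orthogonal sum of the subspaces $R_\chi$ over $\chi\in\bscs(\zeta^{\otimes n_a(w)})$ (cf. \eqref{e:7} and the discussion preceding it), and subspaces of distinct bscs are mutually orthogonal, we have $R_\phi\perp R_{\zeta^{\otimes n_a(w)}}$ whenever $\phi\notin\bscs(\zeta^{\otimes n_a(w)})$; writing $\Phi$ as a finite sum of pure tensors and using the product form \eqref{e:A223} of the inner product then forces $U_{ijk}(\phi,\psi,\zeta)=0$ in that case, and symmetrically if $\psi\notin\bscs(\zeta^{\otimes n_b(w)})$. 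The main obstacle I anticipate is purely bookkeeping inside the key step: carefully tracking how the interleaved $a$- and $b$-factors regroup after passing to real and imaginary parts, and verifying that membership in $R_{\zeta^{\otimes n}}$ is preserved under the pointwise products involved — in particular handling the $\zeta\leftrightarrow\bar\zeta$ identifications forced by the inverse letters. Everything else rests only on facts already recorded in Section~\ref{s:3}.
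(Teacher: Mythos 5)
Your proof is correct and follows the same overall skeleton as the paper's: expand $\delta_{G,w}$ in the orthonormal basis $\{v^i(\phi)\otimes v^j(\psi)\otimes v^k(\zeta)\}$ of $(\bbR^G)^{\otimes 3}$, show that the function $(a,b)\mapsto v^k(\zeta)_{w(a,b)}$ lies in $R_{\zeta^{\otimes n_a(w)}}\otimes R_{\zeta^{\otimes n_b(w)}}$, and conclude by orthogonality of the subspaces of distinct bscs. The one place you genuinely diverge is in how that key factorization is established. The paper proves it (Observation~\ref{obs:word_tensor_in_terms_of_g_h}) by induction on the word length, using the structure constants $r^k_{ij}$ of the chosen real basis together with a separate lemma (Observation~\ref{obs:inverse in the same rep}) showing that inversion preserves each $R_\phi$. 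You instead fix a unitary version of $\zeta$ and expand $\zeta(w(a,b))=\zeta(\ell_1)\cdots\zeta(\ell_L)$ entrywise over index paths, which handles the inverse letters for free via $\zeta(a^{-1})=\zeta(a)^\ast$ and makes the degree bookkeeping ($n_a(w)$ factors depending on $a$, $n_b(w)$ on $b$) immediate. Your route is more direct and bypasses both auxiliary observations, at the mild cost of invoking unitarizability and working with a specific version of $\zeta$ (harmless, since $R_\zeta$ is isomorphism-invariant); the paper's inductive version stays entirely within the chosen real orthonormal basis. Both arguments are complete; the only points needing care in yours --- which you do flag --- are that membership of a pointwise product of $n$ elements of $R_\zeta$ in $R_{\zeta^{\otimes n}}$ comes from iterating \eqref{obs:product_gives_fusion}, and that an empty product of factors lands in $R_{\rm Triv}$ when a letter is absent from $w$.
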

\end{proof}
\begin{proof}[Proof of Proposition \ref{prop:fusion_rep_of_tensor}]
As usual, denote by $1_g$ the standard unit vector whose non zero entry is at the $g$th position.
Isomorphism \eqref{e:7} allows us to write
\begin{equation}\label{eq:decomp_of_unit}
\forall g\in G,\qquad 1_g = \sum_{\phi\in {\rm bscs}(G)}\sum_{i=1}^{D_\phi}\langle v^i(\phi),1_g\rangle v^i(\phi)=\sum_{\phi\in {\rm bscs}(G)}\sum_{i=1}^{D_\phi}v^i_g(\phi) v^i(\phi).
\end{equation}
Using \eqref{eq:decomp_of_unit} we can write
\begin{align}\label{eq:trans_four_delta}
\notag\delta_{G,w}&=\sum_{g,h\in G}\sum_{\phi,\psi,\zeta\in{\rm bscs}(G)}\left(\sum_{i=1}^{D_\phi}v^i_g(\phi)v^i(\phi)\right)\left(\sum_{j=1}^{D_\psi}v^j_h(\psi)v^j(\psi)\right)\left(\sum_{k=1}^{D_\zeta}v^k_{w(g,h)}(\zeta)v^k(\zeta)\right)\\
&=\sum_{\phi,\psi,\zeta\in{\rm bscs}(G)}\sum_{(i,j,k)\in[D_\phi]\times[D_\psi]\times[D_\zeta]}v^i(\phi)\otimes v^j(\psi) \otimes v^k(\zeta)\sum_{g,h\in G}
v^i_g(\phi)v^j_h(\psi)v^k_{w(g,h)}(\zeta)
\end{align}
We now simplify the coefficient of $v^i(\phi)\otimes v^j(\psi) \otimes v^k(\zeta).$
\begin{obs}\label{obs:word_tensor_in_terms_of_g_h}
    For every bsc $\zeta$ and $k\in [D_\zeta]$ there exist $N=N_{w,\zeta,k}$ and explicit homogeneous polynomials $P^w_\ell(x^1,\ldots,x^{D_\zeta})=P^{w,k,\zeta}_\ell(x^1,\ldots,x^{D_\zeta}),$ $Q^w_\ell(y^1,\ldots,y^{D_\zeta})=Q^{w,k,\zeta}_\ell(y^1,\ldots,y^{D_\zeta}),$ for $\ell=1,\ldots,N,$ of degrees $n_a(w),n_b(w)$ respectively, such that
    \[v^k_{w(g,h)}(\zeta) = \sum_{\ell=1}^NP^w_\ell(v^1_g(\zeta),\ldots,v^{D_\zeta}_g(\zeta))Q^w_\ell(v^1_h(\zeta),\ldots,v^{D_\zeta}_h(\zeta)).\]
\end{obs}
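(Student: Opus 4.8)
The plan is to use nothing beyond the fact that $\zeta$ is a homomorphism, so that for any word $w$ the matrix $\zeta(w(g,h))$ is literally the ordered product of $n_a(w)+n_b(w)$ matrices, one for each literal of $w$: an occurrence of $a$ (resp.\ $a^{-1}$) contributes a factor $\zeta(g)$ (resp.\ $\zeta(g)^{-1}$), and an occurrence of $b$ (resp.\ $b^{-1}$) contributes $\zeta(h)$ (resp.\ $\zeta(h)^{-1}$). First I would fix a unitary version of $\zeta$, possible since $G$ is finite; then $\zeta(g)^{-1}=\zeta(g)^{*}$, so every entry of $\zeta(g)^{-1}$ is the complex conjugate of an entry of $\zeta(g)$, and in particular the functions $g\mapsto\Re(\zeta(g)^{-1})_{i,j}$ and $g\mapsto\Im(\zeta(g)^{-1})_{i,j}$ again lie in $R_\zeta$ (alternatively, these are, up to transpose, matrix coefficients of the contragredient $\zeta^\vee\cong\bar\zeta$, and $R_{\bar\zeta}=R_\zeta$ since $\zeta$ is self-conjugate). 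Since $B(\zeta)=\{v^1(\zeta),\dots,v^{D_\zeta}(\zeta)\}$ is an orthonormal basis of $R_\zeta$, it follows that for all $i,j$ the four functions $g\mapsto\Re\zeta(g)_{i,j}$, $\Im\zeta(g)_{i,j}$, $\Re(\zeta(g)^{-1})_{i,j}$, $\Im(\zeta(g)^{-1})_{i,j}$ are \emph{real-linear} combinations of $v^1_g(\zeta),\dots,v^{D_\zeta}_g(\zeta)$, with explicit coefficients given by the corresponding inner products in $R_\zeta$.

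Next I would expand the matrix product $\zeta(w(g,h))$ symbolically, \emph{without performing any simplification}, so that each entry $\zeta(w(g,h))_{i,j}$ is a finite sum of products, each product involving exactly one complex entry from each of the $n_a(w)+n_b(w)$ factors, hence exactly $n_a(w)$ entries drawn from copies of $\zeta(g)^{\pm1}$ and exactly $n_b(w)$ entries drawn from copies of $\zeta(h)^{\pm1}$. Taking real and imaginary parts and substituting the linear expressions from the previous step, $\Re\zeta(w(g,h))_{i,j}$ and $\Im\zeta(w(g,h))_{i,j}$ become finite real-linear combinations of terms of the form $\mu\big(v^1_g(\zeta),\dots,v^{D_\zeta}_g(\zeta)\big)\,\nu\big(v^1_h(\zeta),\dots,v^{D_\zeta}_h(\zeta)\big)$, where $\mu$ is a monomial of degree $n_a(w)$ in abstract variables $x^1,\dots,x^{D_\zeta}$ and $\nu$ a monomial of degree $n_b(w)$ in abstract variables $y^1,\dots,y^{D_\zeta}$. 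Grouping by the $x$-monomial $\mu$ yields an expression of the desired form $\sum_\ell P_\ell\,Q_\ell$ with $P_\ell$ a monomial, hence homogeneous of degree $n_a(w)$, and $Q_\ell$ homogeneous of degree $n_b(w)$.

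Finally, since $v^k(\zeta)\in R_\zeta$ it is a fixed real-linear combination of $\Re\zeta_{i,j}$ and $\Im\zeta_{i,j}$; hence $v^k_{w(g,h)}(\zeta)$ is the same real-linear combination of the $\Re\zeta(w(g,h))_{i,j}$ and $\Im\zeta(w(g,h))_{i,j}$. Applying the previous paragraph to each summand and collecting yields the asserted identity, with $N=N_{w,\zeta,k}$ the number of distinct degree-$n_a(w)$ monomials that occur and with all polynomials $P^{w,k,\zeta}_\ell,Q^{w,k,\zeta}_\ell$ produced explicitly from the data $(w,\zeta,k,B(\zeta))$. I do not expect a serious obstacle here; the only two points needing care are (i) invoking self-conjugacy of $\zeta$ — via the unitary version — to keep the entries of the inverse matrices inside $R_\zeta$ rather than in $R_{\bar\zeta}$, and (ii) resisting any simplification such as $\zeta(g)\zeta(g)^{-1}=I$, which would still leave a valid polynomial identity but would spoil the degree bookkeeping that makes the homogeneity degrees come out exactly as $n_a(w)$ and $n_b(w)$.
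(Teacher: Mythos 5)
Your proof is correct, but it takes a genuinely different route from the paper's. The paper proceeds by induction on the word length $l(w)=n_a(w)+n_b(w)$: it first records structure constants $r^k_{ij}$ with $v^k_{gh}=r^k_{ij}v^i_gv^j_h$ and constants $s^k_i$ with $v^k_{g^{-1}}=s^k_i v^i_g$ (the latter via a separate observation, proved by a fusion argument showing $\mathrm{Triv}\subseteq\phi\otimes\mathrm{inv}(\phi)$ forces $\mathrm{inv}(\phi)=\phi$), handles the four one-letter words as the base case, and then peels off the last letter of $w$ and multiplies the resulting polynomials, with degrees adding as required. You instead expand $\zeta(w(g,h))$ in one shot as an unsimplified product of $n_a(w)+n_b(w)$ matrices, pass to a unitary version so that the entries of $\zeta(g)^{-1}=\zeta(g)^{*}$ stay inside $R_\zeta$, convert real and imaginary parts of entries to the orthonormal basis $B(\zeta)$, and collect monomials. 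Your approach makes the degree bookkeeping completely transparent (each literal contributes exactly one factor, and refusing to cancel $\zeta(g)\zeta(g)^{-1}=I$ is exactly the right precaution) and avoids induction; what it costs is the choice of a unitary version and the passage back and forth between matrix coefficients and the abstract basis, whereas the paper's induction lives entirely at the level of the basis $B$ and its structure constants and handles inverses without unitarity. The two constructions produce essentially the same explicit polynomials — your direct expansion is what the paper's induction computes one letter at a time.
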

We will prove the observation below. Using the observation we can write
\begin{align*}U_{i,j,k}&=\sum_{g,h\in G}
v^i_g(\phi)v^j_h(\psi)v^k_{w(g,h)}(\zeta)=
\sum_{g,h\in G}
v^i_g(\phi)v^j_h(\psi)\sum_{\ell=1}^NP^w_\ell(v^1_g(\zeta),\ldots,v^{D_\zeta}_g(\zeta))Q^w_\ell(v^1_h(\zeta),\ldots,v^{D_\zeta}_h(\zeta))=\\
&=\sum_{\ell=1}^N\left(\sum_g P^w_\ell(v^1_g(\zeta),\ldots,v^{D_\zeta}_g(\zeta))v^i_g(\phi)\right)\left(\sum_h Q^w_\ell(v^1_h(\zeta),\ldots,v^{D_\zeta}_h(\zeta))v^j_h(\psi)\right)\\
&=\sum_{\ell=1}^N\langle P^w_\ell,v^i_g(\phi)\rangle\langle Q^w_\ell,v^j_h(\psi)\rangle,
\end{align*}
where $P^w_\ell$ is the vector whose $g$th entry is $P^w_\ell(v^1_g(\zeta),\ldots,v^{D_\zeta}_g(\zeta))$ and $Q^w_\ell$ is the vector whose $h$th entry is $Q^w_\ell(v^1_h(\zeta),\ldots,v^{D_\zeta}_h(\zeta))$. By  \eqref{obs:product_gives_fusion} $P^w_\ell$ belongs to $R_{\zeta^{\otimes n_a(w)}}$ and $Q^w_\ell$ belongs to $R_{\zeta^{\otimes n_b(w)}}.$ Since different bscs are orthogonal, it follows that the coefficients $U_{ijk}$ vanish unless the condition from the statement is satisfied. 
\end{proof}
In order to prove Observation \ref{obs:word_tensor_in_terms_of_g_h} we need the following observations.
\begin{obs}\label{obs:triv_in_fusion}If $\phi,\psi$ are bscs such that \text{Triv} is contained in $\phi\otimes\psi$ then $\phi=\psi.$
\end{obs}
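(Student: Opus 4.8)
The plan is to compute the multiplicity of $\text{Triv}$ in $\phi\otimes\psi$ using characters and then invoke the classification of bscs from Lemma~\ref{lem:triad}. The multiplicity of the trivial representation in any representation $\rho$ is $\langle\chi_\rho,\chi_{\text{Triv}}\rangle=\frac{1}{|G|}\sum_{g\in G}\chi_\rho(g)$, so the multiplicity of $\text{Triv}$ in $\phi\otimes\psi$ equals $\frac{1}{|G|}\sum_{g\in G}\chi_\phi(g)\chi_\psi(g)$. Since $\phi$ and $\psi$ are sc, their characters are real-valued, hence $\chi_\phi(g)\chi_\psi(g)=\overline{\chi_\phi(g)}\,\chi_\psi(g)$ and this multiplicity equals the Hermitian inner product $\langle\chi_\psi,\chi_\phi\rangle$ of class functions. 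So it suffices to show that for bscs $\phi,\psi$ one has $\langle\chi_\psi,\chi_\phi\rangle\neq 0$ only if $\phi\cong\psi$.

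By Lemma~\ref{lem:triad}, every bsc is either an irreducible self-conjugate representation (types I, III) or of the form $\rho\oplus\bar\rho$ with $\rho$ complex irreducible and $\rho\not\cong\bar\rho$ (type II); in either case its character is the sum of one or two irreducible characters, the constituents of a type II bsc being the non-self-conjugate $\rho,\bar\rho$, whereas the constituent(s) of a type I/III bsc are self-conjugate. I would then expand $\langle\chi_\psi,\chi_\phi\rangle$ into a sum of pairwise inner products of irreducible characters and use orthonormality of irreducible characters. A short case check on the types then finishes it: if $\phi,\psi$ have different types, or are both type I/III but non-isomorphic, the resulting sum is empty; if both are type II, say $\phi\cong\rho\oplus\bar\rho$ and $\psi\cong\sigma\oplus\bar\sigma$, the sum is nonzero only when $\{\sigma,\bar\sigma\}$ meets $\{\rho,\bar\rho\}$, which — since $\rho\not\cong\bar\rho$ — forces $\{\sigma,\bar\sigma\}=\{\rho,\bar\rho\}$ and hence $\psi\cong\phi$.

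As a remark I would note a character-free route: $\mathrm{Hom}_G(\text{Triv},\phi\otimes\psi)\cong\mathrm{Hom}_G(\phi^*,\psi)$, and $\phi^*\cong\bar\phi\cong\phi$ because $\phi$ is sc, so $\text{Triv}$ lies in $\phi\otimes\psi$ exactly when $\phi$ and $\psi$ share an irreducible constituent; two bscs sharing an irreducible constituent must coincide, either by the type analysis above or directly from the stated orthogonality of the matrix-coefficient subspaces $R_\phi$ attached to distinct bscs (a shared constituent would produce a common nonzero vector).

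The only step requiring genuine care — and hence the main, if minor, obstacle — is the type II case: there the bscs are reducible, so one cannot simply quote orthonormality of irreducible characters but must track the (at most two) constituents of each bsc and their behaviour under conjugation. Everything else follows immediately from the definitions and Lemma~\ref{lem:triad}.
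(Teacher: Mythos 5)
Your argument is correct, but it is not the route the paper takes. You compute the multiplicity of $\text{Triv}$ in $\phi\otimes\psi$ as $\frac{1}{|G|}\sum_g\chi_\phi(g)\chi_\psi(g)=\langle\chi_\psi,\chi_\phi\rangle$ (using realness of sc characters), then expand each bsc into its one or two irreducible constituents via Lemma~\ref{lem:triad} and invoke orthonormality of irreducible characters, with the type~II case handled by noting that the constituents $\rho,\bar\rho$ are non-self-conjugate and so can only be shared with another type~II bsc built from the same conjugate pair. That case analysis is exactly the delicate point you flag, and you handle it correctly. The paper instead argues entirely inside the real matrix-coefficient formalism, in two lines: if $\text{Triv}$ appears in $\phi\otimes\psi$ then, since $R_{\phi\otimes\psi}=\mathrm{span}\{u\odot v: u\in R_\phi,\ v\in R_\psi\}$, some $u\in R_\phi$, $v\in R_\psi$ satisfy $\langle u\odot v,\mathbf{1}\rangle\neq 0$; but $\langle u\odot v,\mathbf{1}\rangle=\sum_{g}u_g v_g=\langle u,v\rangle$, and the matrix-coefficient subspaces of distinct bscs are orthogonal, forcing $\phi=\psi$. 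The paper's proof is shorter, needs neither the type classification nor character orthonormality, and stays consistent with the real-vector-space framework used throughout; your proof is the standard character-theoretic one and yields slightly more (the exact multiplicity of $\text{Triv}$), at the cost of the type-by-type bookkeeping. Your closing remark about the character-free route via orthogonality of the $R_\phi$ is essentially the paper's argument, so you independently identified it as an alternative.
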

\begin{proof}
In this case there exist real vectors $v\in R_\phi,~u\in R_\psi$ with 
$\langle u\odot v,\mathbf{1}>\neq0,$
where $\mathbf{1}$ is the all-$1$ vector spanning \text{Triv}.
This implies
\[\sum_{g\in G}u_gv_g\neq0\Leftrightarrow\langle u,v\rangle\neq0,\]which implies, since different bscs are orthogonal, that $\phi=\psi.$
\end{proof}
\begin{obs}\label{obs:inverse in the same rep}
Let $G$ be a group. Then the map $\text{inv}:\mathbb{R}^G\to\mathbb{R}^G$ which takes the vector $v$ to the vector $u$ whose $g$th entry is $v_{g^{-1}}$
takes every $R_\phi$, $\phi\in bscs(G)$ to itself. 
\end{obs}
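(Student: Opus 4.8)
The plan is to pass to a unitary version of $\phi$ and then read the claim directly off its matrix entries. Every representation of a finite group is isomorphic to a unitary one, and, as noted right after its definition~\eqref{e:6}, the space $R_\phi$ depends only on the isomorphism class of $\phi$; hence we may assume $\phi:G\to{\rm GL}(\bbC^{d_\phi})$ is unitary, so that $\phi(g^{-1})=\phi(g)^{-1}=\phi(g)^{*}$ for all $g\in G$, where $\phi(g)^{*}$ denotes the conjugate transpose of $\phi(g)$. Taking the $(i,j)$-entry of this identity gives
\[
\phi(g^{-1})_{i,j}=\overline{\phi(g)_{j,i}}\qquad\text{for all }g\in G,\ i,j\in[d_\phi].
\]

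Next I would restate this in terms of $\text{inv}$. Identifying $\phi_{i,j}$ with the vector $(\phi(g)_{i,j}:g\in G)\in\bbC^G$, the displayed identity says precisely that $\text{inv}(\phi_{i,j})=\overline{\phi_{j,i}}$ in $\bbC^G$. Since $\text{inv}$ is a permutation of coordinates, it is $\bbC$-linear and commutes with entrywise complex conjugation, hence with taking real and imaginary parts; therefore $\text{inv}\bigl(\Re(\phi_{i,j})\bigr)=\Re(\phi_{j,i})$ and $\text{inv}\bigl(\Im(\phi_{i,j})\bigr)=-\Im(\phi_{j,i})$. Both right-hand sides lie in $R_\phi$, being (up to sign) among the generators listed in~\eqref{e:6}. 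As those generators span $R_\phi$ and $\text{inv}$ is linear, $\text{inv}(R_\phi)\subseteq R_\phi$; and since $\text{inv}$ is an involution ($g\mapsto(g^{-1})^{-1}=g$) it restricts to a bijection of the finite-dimensional space $R_\phi$ onto itself, which is the claim. Running this for every $\phi\in\bscs(G)$ and invoking~\eqref{e:7} moreover shows that $\text{inv}$ respects the whole bsc-decomposition of $\bbR^G$.

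The argument is essentially immediate, so there is no real obstacle; the only step warranting a remark is the reduction to a unitary version, which is legitimate precisely because $R_\phi$ is defined only up to isomorphism of $\phi$. A basis-free alternative would be to note that $\text{inv}$ carries the matrix-coefficient space of $\phi$ onto that of the contragredient $\phi^{\vee}$, $\phi^{\vee}(g):=\phi(g^{-1})^{T}$, which is isomorphic to $\bar\phi$, combined with the fact that $R_{\bar\phi}=R_\phi$ (conjugation only flips the signs of the imaginary parts of the entries); the direct computation above, however, is the shorter route.
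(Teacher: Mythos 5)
Your proof is correct, and it takes a genuinely different route from the paper's. The paper works with the contragredient: it defines $\mathrm{inv}(\phi)_g:=(\phi_{g^{-1}})^T$, observes that this is again a bsc whose coefficient space is $\mathrm{inv}(R_\phi)$, and then identifies $\mathrm{inv}(\phi)$ with $\phi$ by a fusion argument --- since $\phi(g)\,\mathrm{inv}(\phi)(g)^T$ is the identity, the trivial representation sits inside $\phi\otimes\mathrm{inv}(\phi)$ by \eqref{obs:product_gives_fusion}, and the paper's Observation~\ref{obs:triv_in_fusion} (that ${\rm Triv}\subseteq\phi\otimes\psi$ forces $\phi=\psi$) then gives $\phi=\mathrm{inv}(\phi)$, hence $R_\phi=\mathrm{inv}(R_\phi)$. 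You instead pass to a unitary version and check directly that $\mathrm{inv}$ permutes the spanning set $\{\Re(\phi_{i,j}),\Im(\phi_{i,j})\}$ of \eqref{e:6} up to sign; this is legitimate because, as the paper notes after \eqref{e:6}, $R_\phi$ is isomorphism-invariant, and your involution argument correctly upgrades $\mathrm{inv}(R_\phi)\subseteq R_\phi$ to equality. Your route is more elementary: it needs neither the orthogonality of distinct bscs nor the fusion observation, and in fact it never uses self-conjugacy or basicness of $\phi$, so it proves the invariance of the matrix-coefficient space of an arbitrary representation. It also sidesteps an infelicity in the paper's write-up, whose proof cites Observation~\ref{obs:word_tensor_in_terms_of_g_h} (which itself relies on the present observation) where Observation~\ref{obs:triv_in_fusion} is clearly intended. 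What the paper's structural argument buys in exchange is that it stays entirely inside the fusion formalism used throughout Section~\ref{s:4}, identifying $\mathrm{inv}$ at the level of representations rather than of individual coefficient vectors; your closing remark about $\phi^{\vee}\cong\bar\phi$ is essentially that argument in disguise.
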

\begin{proof}
Let $\phi$ be a bsc representation. Define 
\[\text{inv}(\phi)_g:=(\phi_{g^{-1}})^T.\]It is straightforward to see that $\text{inv}(\phi)$ is a representation, that $R_{\text{inv}(\phi)}=\text{inv}(R_\phi),$ and that also $\text{inv}(\phi)$ is a bsc. We must show that they are the same bsc. Since $\phi_g(\text{inv}(\phi)_{g})^T$ is the identity it follows from \eqref{obs:product_gives_fusion} that the trivial representation appears in the fusion product $\phi\otimes\text{inv}(\phi).$ From Observation \ref{obs:word_tensor_in_terms_of_g_h} this implies $\phi=\text{inv}(\phi).$
\end{proof}
\begin{proof}[Proof of Observation \ref{obs:word_tensor_in_terms_of_g_h}]
Let $\phi$ be a representation and $B=\{v^i=v^i(\phi),i=1\ldots,D_\phi\}$ an orthonormal basis. $\phi$ being a representation implies the existence of structure constants $r_{ij}^k=r_{ij}^k(B),$ which depend on $B$ and satisfy \begin{equation}\label{eq:prod_in_rep}v^k_{gh}=r^k_{ij}v^i_gv^j_h.\end{equation}Similarly, from Observation \ref{obs:inverse in the same rep}, there exist constants $s^k_i=s^k_i(B),$ depending on $B$ again, such that 
\begin{equation}\label{eq:inv_in_rep}v^k_{g^{-1}}=s^k_{i}v^i_g.\end{equation}
    We will prove by induction on the length $l(w)=n_a(w)+n_b(w)$ of the word.
    If $l(w)=1$ then $w$ is either $a,a^{-1},b,b^{-1}.$
    In this case $N=1$ and, using \eqref{eq:inv_in_rep}, it holds that
    \begin{equation}\label{eq:l_w_1}(P_1^{w,k,\zeta}(x^1,\ldots,x^{D_\zeta}),Q_1^{w,k,\zeta}(y^1,\ldots,y^{D_\zeta}))=\begin{cases}(x^k,1),&w=a\\
    (s^k_ix^i,1),&w=a^{-1}\\
    (1,y^k),&w=b\\
    (1,s^k_iy^i),&w=b^{-1}
    \end{cases}.\end{equation}
    Note that the degree constraints of the statement are met.
    If $l(w)>1$ then let $w'$ be the subword made of the first $l(w)-1$ letters, and $w''$ be the subword made of the last letter. Then by induction, for every $l\in [D_\zeta]$ there exist $N^l=N^{w',l,\zeta}$, and homogeneous polynomials $P^{w',l,\zeta}_\ell(x^1,\ldots,x^{D_\zeta}),$ $Q^{w',l,\zeta}_\ell(y^1,\ldots,y^{D_\zeta}),~\ell\in[N^l]$ of degrees $n_a(w'),n_b(w')$ respectively, such that
    \[v^l_{{w'}(g,h)} = \sum_{\ell=1}^{N^l}P^{w',l,\zeta}_\ell(v^1_g(\zeta),\ldots,v^{D_\zeta}_g(\zeta))Q^{w',l,\zeta}_\ell(v^1_h(\zeta),\ldots,v^{D_\zeta}_h(\zeta)).\] Also, by the induction base above, we have
    \[v^{m}_{{w''}(g,h)}=P^{w'',m,\zeta}(v^1_g(\zeta),\ldots,v^{D_\zeta}_g(\zeta))Q^{w'',m,\zeta}(v^1_h(\zeta),\ldots,v^{D_\zeta}_h(\zeta)),\] where the polynomials in the right hand side are given in \eqref{eq:l_w_1}. Using \eqref{eq:prod_in_rep} it holds that
    \begin{align*}v^k_{{w}(g,h)}&=\sum_{l,m=1}^{D_\zeta}r^k_{lm}v^l_{{w'}(g,h)}v^{m}_{{w''}(g,h)}\\
    &=\sum_{l,m=1}^{D_\zeta}r^k_{lm}\sum_{\ell=1}^{N^l}P^{w',l,\zeta}_\ell(v^1_g(\zeta),\ldots,v^{D_\zeta}_g(\zeta))Q^{w',l,\zeta}_\ell(v^1_h(\zeta),\ldots,v^{D_\zeta}_h(\zeta))\cdot \\&\qquad \quad\qquad \quad\qquad \cdot P^{w'',m,\zeta}(v^1_g(\zeta),\ldots,v^{D_\zeta}_g(\zeta))Q^{w'',m,\zeta}(v^1_h(\zeta),\ldots,v^{D_\zeta}_h(\zeta))\\
    &=\sum_{l,m=1}^{D_\zeta}\sum_{\ell=1}^{N^l}r^k_{lm}\left((P^{w',l,\zeta}_\ell P^{w'',m,\zeta})(v^1_g(\zeta),\ldots,v^{D_\zeta}_g(\zeta))\right)\left((Q^{w',l,\zeta}_\ell Q^{w'',m,\zeta})(v^1_h(\zeta),\ldots,v^{D_\zeta}_h(\zeta))\right).
    \end{align*}
    The sum in the last line consists of at most $|D_\zeta|^2\max_{l\in[D_\zeta]}N^l$ polynomials $P_\ell^{w',l,\zeta}P^{w'',m,\zeta}$ and $Q_\ell^{w',l,\zeta}Q^{w'',m,\zeta}$ satisfying the requirements. The induction follows.
\end{proof}

\ToVerTwo{\begin{proof}[Proof of Corollary \ref{cor:som1_sort_of_bound}]
We can cover the sets of triples which appear in the representation of the tensor given in Proposition \ref{prop:fusion_rep_of_tensor} by the boxes \[\{\zeta\}\times\mathrm{bscs}(\zeta^{\otimes n_a(w)})\times\mathrm{bscs}(\zeta^{\otimes n_b(w)}).\]The rank of such a box is upper bounded by $D_\zeta\min(D^\zeta_{n_a(w)},D^\zeta_{n_b(w)}).$
\end{proof}\RT{bring back if we bring back this corollary}}
\begin{proof}[Proof of Proposition~\ref{p:5.4}]
Let $\cB=\{B_1,\ldots,B_m\}$ be a box cover for $\mathrm{bscs}^3(\delta_{G,w}),$ and write $B_i=\Phi_i\times\Psi_i\times\Xi_i,~i=1,\ldots,m.$ Then we can write, e.g. using Proposition~\ref{prop:fusion_rep_of_tensor}, $\delta_{G,w}=\sum_{i=1}^mT_i,$ where each $T_i,~i=1,\ldots,m$ is a trilinear tensor in $\bbR^{\Phi_i}\otimes\bbR^{\Psi_i}\otimes\bbR^{\Xi_i}.$ By Lemma~\ref{obs:tensor_rank_naiv1_bound} the rank of $T_i$ is bounded by $rk(B_i),$ hence the rank of the whole tensor is bounded by the rank of the box set $\cB.$
\end{proof}
\begin{proof}[Proof of Corollary \ref{cor:decomp_into_triv_non_triv}]
Using Proposition \ref{prop:fusion_rep_of_tensor_short} and Corollary \ref{cor:trivial_rep_in_tensor} we can cover the representations appearing in each word tensor $\delta_{f=f_w(g,h)}$ using the two boxes
\[B_1=(\text{Triv},\text{Triv},\text{Triv}),~~B_2=(\mathrm{bscs}(G)\times \mathrm{bscs}(G)\times(\mathrm{bscs}(G)\setminus\{\text{Triv}\}).\] This also holds for linear combination of word tensors.
The rank of $B_1$ is $1$ and of $B_2$ is $|G|(|G|-1),$ which implies the result.
\end{proof}

\begin{proof}[Proof of Lemma~\ref{l:1.4.1}]
For $W=(A,B,C)\in\cW_G$ define $W'=(A',B',C')\in\cW_G$ by
\[A' = \frac{1}{2}\begin{pmatrix}
    A\\
    A
\end{pmatrix},~B' = \frac{1}{2}\begin{pmatrix}
    B\\
    -B
\end{pmatrix},~C' = \begin{pmatrix}
 C \\-C   
\end{pmatrix}.\]
Then for this choice of $W'$ it holds that
\[	f_{\rm TLP, sqr}(\cdot;W') = f_{\rm HD}(\cdot;W) \,.
\]Indeed, this is an immediate consequence of the identity consequence
\[\left(\frac{x+y}{2}\right)^2-\left(\frac{x-y}{2}\right)^2=xy.\]
\end{proof}

\subsection{Proofs for Section~\ref{s:5}}

\begin{proof}[Proof of Proposition \ref{p:5.1}]
The decomposition \eqref{e:A147} is well defined and uniquely determines the right hand side. \[\chi_\phi(abc^{-1})=Tr(\phi(abc^{-1})=\Tr\left(\phi(a)\phi(b)\phi^{-1}(c)\right),\]by the definition of characters and the defining property of representations.
\[\Tr\left(\phi(a)\phi(b)\phi^{-1}(c)\right) = \sum_{i,j,k \in [d_\phi]} \phi(a)_{i,j} 
\phi(b)_{j,k} (\phi(c)^{-1})_{k,i}. \]
Thus, the tensor $T\in (\bbR^{G})^{\otimes3}$ whose $(a,b,c)$ component is $\chi_{\phi}(abc^{-1})$ is
$\sum_{i,j,k \in [d_\phi]} \phi_{i,j} \otimes 
\phi_{j,k} \otimes (\phi^{-1})_{k,i}$ which clearly belongs to $R_\phi^{\otimes 3}.$ Thus, also the right hand side of \eqref{e:4.2} is in $R_\phi^{\otimes 3}.$

By uniqueness, in order to prove Proposition \ref{p:5.1} we just need to show that
\[\forall a,b,c\in G,~\sum_{\phi\in\bscs(G)}\frac{\dim(R_\phi)}{d_\phi|G|}\,\chi_{\phi}(abc^{-1}) =(\delta_{G,w})_{a,b,c},\]
or equivalently
\begin{equation}\label{eq:5.1_bsc}\forall g\in G,~\sum_{\phi\in\bscs(G)}\frac{\dim(R_\phi)}{d_\phi|G|}\,\chi_{\phi}(g) =\delta_{g=e},\end{equation}where $\delta_{g=e}$ is Kronecker's delta function.
Note that 
\[\frac{\dim(R_\phi)}{d_\phi|G|} =\begin{cases}
\frac{d_\phi}{|G|} & \phi\text{ of type I or III}\\
\frac{d_\phi}{2|G|} & \phi\text{ of type II}
\end{cases}.\]
If $\phi$ is a bsc of types I or III then it is irreducible, while if it is of type II then it is the sum of two conjugate irreducible representations $\psi,\bar\psi,$ and it holds that
\[\chi_\phi=\chi_\psi+\chi_{\bar\psi}.\]
Thus, we can equivalently write \eqref{eq:5.1_bsc} in terms of irreducible representations as
\begin{equation}\label{eq:5.1_irr}\forall g\in G,~\sum_{\phi~\text{is irreducible}}{d_\phi}\,\chi_{\phi}(g) =|G|\delta_{g=e}.\end{equation}
\eqref{eq:5.1_irr} is a standard fact in finite group representation theory, whose proof we now recall. 
The \emph{regular representation of $G$} is the representation $\phi_{\text{reg}}:G\to GL_{|G|}$ defined by
\[\phi_{\text{reg}}(g)1_h = 1_{gh}.\]
It is well known that
\[\phi_{\text{reg}} = \bigoplus_{{\phi~\text{is irreducible}}}d_\phi \cdot \phi,\]
that is, $\phi_{\text{reg}}$ is the sum of all irreducible representations $\phi$, each one appear with multiplicity $d_\phi.$
Thus
\[\chi_{\text{reg}}:=\chi_{\phi_{\text{reg}}}=\sum\sum_{\phi~\text{is irreducible}}d_\phi\chi_\phi.\]
On the other hand, \[\chi_{\text{reg}}(g)=|G|\delta_{g=e}.\]Indeed, if $g=e$ then $\chi_{\text{reg}}(e)$ is the $|G|\times|G|$ identity matrix, while if $g\neq e$ the matrix $\chi_{\text{reg}}(e)$ is a permutation matrix with zeroes on the diagonal, since for no $h\in G,~gh=h.$
Thus, \eqref{eq:5.1_irr}, hence also Proposition \ref{p:5.1}, follow. 
\end{proof}

\begin{rmk}\label{rmk:naiv1_decomp_quat}
     We can also write a representation of rank $2d_\phi^3$ for the tensor $\frac{1}{2}Tr\left(\phi(a)\phi(b)\phi(c^{-1})\right)$, when $\phi$ is of type III.  

Our starting point is the simple observation that every quaternionic representation is of even dimension $2|d$, and has a version of the form
\begin{equation}\label{eq:quaterninic_canonical_form}g\mapsto\phi(g)=\begin{pmatrix}
\alpha(g) &{\beta}(g)\\
-\overline{\beta(g)} &\overline{\alpha(g)}\end{pmatrix} \,,\end{equation} 
for some functions $\alpha, \beta: G \to \bbC^{d/2 \times d/2}$. 
Then
    \[\frac{Tr\left(\phi(a)\phi(b)\phi(c^{-1})\right)}{2}=\Re Tr\left(\alpha(a)\alpha(b)\alpha(c^{-1})-\beta(a)\overline{\beta(b)}\alpha(c^{-1})-\alpha(a)\beta(b)\overline{\beta(c^{-1})}-\beta(a)\overline{\alpha(b)}\overline{\beta(c^{-1})}\right)\]
    Write $\alpha=\alpha_1+\iota\alpha_2,~\beta=\beta_1+\iota\beta_2$ for their real and imaginary parts.
    We can expand the above expression in terms of $\alpha_1,\alpha_2,\beta_1,\beta_2$ to obtain
    \begin{align}\label{eq:long_exp}
&\alpha_1(a)\alpha_1(b)\alpha_1(c^{-1})-\alpha_1(a)\alpha_2(b)\alpha_2(c^{-1})-\alpha_2(a)\alpha_1(b)\alpha_2(c^{-1})-\alpha_2(a)\alpha_2(b)\alpha_1(c^{-1})+\\
\notag&-\beta_1(a)\beta_1(b)\alpha_1(c^{-1})-\beta_1(a)\beta_2(b)\alpha_2(c^{-1})+\beta_2(a)\beta_1(b)\alpha_2(c^{-1})-\beta_2(a)\beta_2(b)\alpha_1(c^{-1})+
\\\notag&
  -\alpha_1(a)\beta_1(b)\beta_1(c^{-1})-\alpha_1(a)\beta_2(b)\beta_2(c^{-1})-
  \alpha_2(a)\beta_1(b)\beta_2(c^{-1})
  +\alpha_2(a)\beta_2(b)\beta_1(c^{-1})+\\
  & \notag
  -  \beta_1(a)\alpha_1(b)\beta_1(c^{-1})+\beta_1(a)\alpha_2(b)\beta_2(c^{-1})-
  \beta_2(a)\alpha_1(b)\beta_2(c^{-1})
  -\beta_2(a)\alpha_2(b)\beta_1(c^{-1})
    \end{align}
    Realizing the trace of the above expression in the most na\"ive way as in \eqref{eq:tr_I_II} involves $16(d/2)^3=2d^3$ terms. 
\end{rmk}
\begin{rmk}\label{rmk:mamu}
    The trace tensor given in \eqref{eq:tr_I_II} can be identified with the matrix multiplication tensor.
We recall the reader that the matrix multiplication tensor for $d\times d$ matrices is the tensor $MaMu\in M_d^*\otimes M_d^*\otimes M_d,$ where $M_d$ is the $d^2$ dimensional vector space of $d\times d$ matrices, and $M_d^*$ is its dual space, given by
\begin{equation}\label{eq:Maumu}MaMu=\sum_{i,j,k\in[d]}E_{ij}^1\otimes E_{jk}^2\otimes E_{ik}^3,\end{equation}
where $E_{ij}^1$ is the functional on the first copy of $M_d$ defined by $E^1_{ij}(M)=M_{ij}.$ $E^2_{jk}$ is defined similarly. $E_{ik}^3$ is the $(i,k)$ elementary matrix in the third copy of $M_d.$ A rank $r$ representation of this tensor implies a rank $r$ representation of any tensor which has the form of the right hand side of \eqref{eq:Maumu}. The right hand side of \eqref{eq:tr_I_II} has the same form, if we take $d=d_\phi$, identify $E^1_{ij}$ with the functional which picks the $(i,j)$th
     entry of $\phi(a),~E^2_{jk}$ is the functional which picks the $(j,k)$th entry of $\phi(b),$ and $E^3_{ik}$ is replaced by the $(k,i)$th entry of $\phi^{-1}(c).$
     From this identification it follows that calculating the tensor of \eqref{eq:tr_I_II} is equivalent to calculating the matrix multiplication tensor on the space of matrices appearing in the representation $\phi.$
\end{rmk}
\begin{proof}[Proof of Proposition~\ref{p:1.5.1}]
We show that for the different types I,II,III the matrix multiplication tensors satisfy the prescribed bounds on ranks. By Remark \ref{rmk:mamu} this implies the same for the tensors of interest, which proves the proposition.

If $\phi$ is of type I, that is, real irreducible, then the claim is immediate, since the projected tensor realizes the matrix multiplication tensor of matrices with real entries.

Assume $\phi$ is of type II, that is, the bsc $\phi$ is isomorphic to $\psi\oplus\bar\psi,$ where $\psi$ is a complex irreducible representation and $\psi\neq\bar\psi.$ Thus, there is an invertible matrix $P$ with
\[\forall g\in G,~ \phi(g)=P^{-1}\begin{pmatrix}\psi(g)&0\\0&\bar{\psi}(g)\end{pmatrix}P\qquad\qquad\text{and }\phi(g)~\text{is a real matrix}.\]
Thus, there exists a \emph{real} invertible matrix $Q$ such that
\[\forall g\in G,~ \phi(g)=Q^{-1}\begin{pmatrix}1&1\\\iota&-\iota\end{pmatrix}^{-1}\begin{pmatrix}\psi(g)&0\\0&\bar{\psi}(g)\end{pmatrix}\begin{pmatrix}1&1\\\iota&-\iota\end{pmatrix}Q=Q^{-1}\frac{1}{2}\begin{pmatrix}\psi(g)+\bar{\psi}(g)&\iota(\psi(g)-\bar{\psi}(g))\\\iota(\bar\psi(g)-\psi(g))&\psi(g)+\bar{\psi}(g)\end{pmatrix}Q\]where $\iota=\sqrt{-1}.$
Note that the middle matrix in the right hand side is real. We may assume 
\[\forall g\in G,~ \phi(g)=\frac{1}{2}\begin{pmatrix}\psi(g)+\bar{\psi}(g)&\iota(\psi(g)-\bar{\psi}(g))\\\iota(\bar\psi(g)-\psi(g))&\psi(g)+\bar{\psi}(g)\end{pmatrix}\]since conjugating with $Q$ does not change the tensor rank.
We will show that in order to calculate $\phi(g)\phi(h)$ one needs only to apply three matrix multiplications of $\frac{d}{2}\times\frac{d}{2}$ matrices. 
To this end, note that
\begin{align*}
\phi(g)\phi(h)=
\frac{1}{4}\begin{pmatrix}\alpha_g\alpha_h-\beta_g\beta_h&-\alpha_g\beta_h-\beta_g\alpha_h\\\alpha_g\beta_h+\beta_g\alpha_h&\alpha_g\alpha_h-\beta_g\beta_h\end{pmatrix},~\text{where }\alpha_f=\psi(f)+\bar\psi(f),~\beta_f=\iota(\bar\psi(f)-\psi(f)).
\end{align*}We will show that we can calculate the two repeating (up to sign) entries $\alpha_g\alpha_h-\beta_g\beta_h$ and $\alpha_g\beta_h+\beta_g\alpha_h$ using matrix multiplications.
Indeed, 
\[\alpha_g\alpha_h-\beta_g\beta_h = \frac{\gamma_{g,h}+\delta_{g,h}}{2}-2\varepsilon_{g,h},\qquad\qquad\alpha_g\beta_h+\beta_g\alpha_h=\frac{\gamma_{g,h}-\delta_{g,h}}{2},\]
where 
\[\gamma_{g,h}=(\alpha_g+\beta_g)(\alpha_h+\beta_h),\qquad\qquad\delta_{g,h}=(\alpha_g-\beta_g)(\alpha_h-\beta_h),\qquad\qquad\varepsilon_{g,h}=\beta_g\beta_h.\]
Thus, we can write the product of $\phi(g)$ and $\phi(h)$ as linear combination of the three $\frac{d}{2}\times\frac{d}{2}$ matrix multiplications $\gamma_{g,h},\delta_{g,h},\varepsilon_{g,h},$ which easily yields a $3m_{\frac{d}{2}}$ representation for the matrix multiplication tensor restricted to $R_\phi\times R_\phi.$

The last case is that $\phi$ is of type III. In this case $d$ is even. We use the notations of Remark \ref{rmk:naiv1_decomp_quat}, and sketch the proof.
One can encode a quaternionic representation in terms of quaternions as follows
\[\phi(g)=\begin{pmatrix}\alpha_1(g)+\iota\alpha_2(g)&\beta_1(g)+\iota\beta_2(g)\\
-\beta_1(g)+\iota\beta_2(g)&\alpha_1(g)-\iota\alpha_2(g)
\end{pmatrix}\mapsto q(g):= \alpha_1(g)+i\alpha_2(g)+j\beta_1(g)+k\beta_2(g),\]
where the coefficients of $\alpha_1,\alpha_2,-\beta_1,-\beta_2$ in the above formal expression are $1,i,j,k\in Q_8,$ the quaterionic group mentioned above. Moreover, if we think of $q(g)$ as a quaternionic $\frac{d}{2}\times\frac{d}{2}$ then it is easy to see that $q(g)q(h)=q(gh),$ where for the product to make sense we make use of the quaternionic relations $i^2=j^2=k^2=ijk=-1$ (and $1$ commutes with $i,j,k$).
Naively multiplying $q(g),q(h)$ should use $16$ matrix multiplications. We will show how to perform it only using $8,$ using a well known analogous trick from multiplication of (standard) quaternions, see, e.g. \cite{4629959}. Then, if we realize each matrix multplication using a representation of tensor rank $m_{\frac{d}{2}}$, we obtain the claim.

Write
\begin{align*}
&m_1(g,h)= 2 \alpha_1(g)\alpha_1(h),\qquad\qquad m_2(g,h)=-2\beta_2(g)\beta_1(h)\\
&m_3(g,h)=-2\alpha_2(g)\beta_2(h),\qquad\qquad m_4(g,h)=-2\beta_1(g)\alpha_2(h)\\
&m_5(g,h)=\frac{1}{4}(\alpha_1(g)+\alpha_2(g)+\beta_1(g)+\beta_2(g))(\alpha_1(h)+\alpha_2(h)+\beta_1(h)+\beta_2(h))\\
&m_6(g,h)=\frac{1}{4}(\alpha_1(g)-\alpha_2(g)+\beta_1(g)-\beta_2(g))(\alpha_1(h)-\alpha_2(h)+\beta_1(h)-\beta_2(h))\\
&m_7(g,h)=\frac{1}{4}(\alpha_1(g)+\alpha_2(g)-\beta_1(g)-\beta_2(g))(\alpha_1(h)+\alpha_2(h)-\beta_1(h)-\beta_2(h))\\
&m_8(g,h)=\frac{1}{4}(\alpha_1(g)-\alpha_2(g)-\beta_1(g)+\beta_2(g))(\alpha_1(h)-\alpha_2(h)-\beta_1(h)+\beta_2(h))
\end{align*}
Then direct calculation shows that
\begin{align*}
&\alpha_1(gh)=m_1(g,h)-m_5(g,h)-m_6(g,h)-m_7(g,h)-m_8(g,h)\\
&\alpha_2(gh)=m_2(g,h)+m_5(g,h)-m_6(g,h)+m_7(g,h)-m_8(g,h)\\
&\beta_1(gh)=m_3(g,h)+m_5(g,h)+m_6(g,h)-m_7(g,h)-m_8(g,h)\\
&\beta_2(gh)=m_4(g,h)+m_5(g,h)-m_6(g,h)-m_7(g,h)+m_8(g,h).
\end{align*}
\end{proof}

\begin{proof}[Proof of Proposition~\ref{t:5.1}]	
The proposition is an immediate consequence of the stronger lemma:
\begin{lemma}\label{prop:decomp_technical}
    Let $R$ be the space of matrix coefficients of a sc representation of $G.$ Assume that there is a decomposition of the rows of the Hadamard model $\{1,\ldots, m\}=S_1\sqcup S_2$ such that for $i\in S_1$ the $i$th  rows of $A,B,C$, belong to $R$, and that the remaining rows are orthogonal to $R$. Then this property is preserved under the dynamics.
\end{lemma}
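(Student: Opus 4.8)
The plan is to show that one gradient-descent step preserves the splitting $\{1,\dots,m\}=S_1\sqcup S_2$ with the stated membership/orthogonality properties; iterating then gives the claim for all $t$. Write the current weights as $W=(A,B,C)$ with $A_{i,:},B_{i,:},C_{i,:}\in R$ for $i\in S_1$ and $A_{i,:},B_{i,:},C_{i,:}\in R^\perp$ for $i\in S_2$. Since the update is $W\mapsto W-\eta\nabla_W L$, it suffices to prove that the gradient $\nabla_W L$ has the same block structure, i.e. that $\partial L/\partial A_{i,:}\in R$ for $i\in S_1$ and $\in R^\perp$ for $i\in S_2$, and likewise for $B$ and $C$. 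The key algebraic input is that $R=R_\psi$ (for the relevant sc representation, here $\psi=\phi^{\otimes?}$ as needed) is closed under the Hadamard product with itself, $\nu_1\odot\nu_2\in R_{\psi\otimes\psi}$, and more precisely that $R$ is a subspace for which $\bbR^G=R\oplus R^\perp$ is an orthogonal decomposition; and that the word tensor $\delta_{G}$ (or $\delta_{G,w}$) has $\bscs^3$-support contained in boxes built from $\Psi$ and its complement in a way compatible with this splitting.

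First I would write out the gradient explicitly. From $f_{\rm HD}(x,y;W)=C^T(Ax\odot By)$ and $L=\frac{1}{|G|^3}\|T_W^{\rm HD}-\delta_G\|_2^2$, the gradient with respect to the $i$-th rows is, up to the global constant,
\begin{equation}
\frac{\partial L}{\partial C_{i,:}} \propto \sum_{a,b}\big(f_{\rm HD}(a,b;W)-1_{ab}\big)\,(A_{i,:}\cdot 1_a)(B_{i,:}\cdot 1_b)\,,
\end{equation}
and symmetrically $\partial L/\partial A_{i,:}\propto\sum_{a,b}(\cdots)\,(B_{i,:}\cdot 1_b)(C_{i,:}\cdot(f-1_{ab}))\,1_a$-type expressions; in tensor language, $\partial L/\partial C_{i,:}$ is the contraction of the residual tensor $T_W^{\rm HD}-\delta_G$ with $A_{i,:}\otimes B_{i,:}$ in the first two slots, and similarly for the other matrices. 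Now I would argue slot by slot: for $i\in S_1$, $A_{i,:}\otimes B_{i,:}\in R\otimes R$; the residual tensor decomposes into its $\bscs^3$-components, and contracting $R\otimes R$ against a component $R_\phi\otimes R_\psi\otimes R_\zeta$ in the first two slots gives zero unless $\phi,\psi\in\bscs(R)$, in which case the result lies in the third-slot space $R_\zeta$. One then needs that every $\zeta$ appearing with such $\phi,\psi$ in the $\bscs^3$-support of $T_W^{\rm HD}-\delta_G$ satisfies $R_\zeta\subseteq R$. For $\delta_G$ this is exactly Corollary~\ref{cor:g_or_h_power_1} (support is $\{(\phi,\phi,\phi)\}$); for $T_W^{\rm HD}$ it follows from the inductive hypothesis via~\eqref{e:A124} and the block structure of $W$, since each rank-one summand $A_{i,:}\otimes B_{i,:}\otimes C_{i,:}$ with $i\in S_1$ lives in $R^{\otimes 3}$ and each with $i\in S_2$ in $(R^\perp)^{\otimes 3}$. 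Hence $\partial L/\partial C_{i,:}\in R$ for $i\in S_1$; the dual computation with $R$ replaced by $R^\perp$ (using orthogonality of the two families and that $R^\perp$ is the sum of the remaining $R_\zeta$'s) gives $\partial L/\partial C_{i,:}\in R^\perp$ for $i\in S_2$. The arguments for $A$ and $B$ are identical after permuting the roles of the three tensor slots.

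The main obstacle I anticipate is bookkeeping the bsc-support of the \emph{model's} tensor $T_W^{\rm HD}$ as opposed to the target: one must be careful that, even though individual rows of $W$ lie in $R$ (resp.\ $R^\perp$), the tensor $T_W^{\rm HD}$ may a priori have $\bscs^3$-support larger than $\{(\phi,\phi,\phi)\}$ — but crucially it is still contained in the box $\Psi^{\times 3}\cup(\Psi^c)^{\times 3}$ where $\Psi=\bscs(R)$, and contraction against $R\otimes R$ (or $R^\perp\otimes R^\perp$) in two slots annihilates the ``off-box'' pieces and lands in $R$ (resp.\ $R^\perp$) in the remaining slot. Once this observation is isolated, the rest is routine linear algebra using the orthogonal decomposition~\eqref{e:7} and the Hadamard-product rule~\eqref{obs:product_gives_fusion}. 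Finally, I would note that the same computation shows the residual, and hence the gradient, respects the finer decomposition into $R_{\phi_1},\dots,R_{\phi_k}$ individually (not just $R$ versus $R^\perp$), which is what is actually needed to deduce the decoupling~\eqref{e:A53} in Proposition~\ref{t:5.1}; this requires only applying the lemma repeatedly, once for each sub-representation, or equivalently running the slot-by-slot argument with $R=R_{\phi_j}$ for each $j$.
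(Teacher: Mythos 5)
Your proposal is correct and follows essentially the same route as the paper: one gradient step is shown to preserve the block structure by splitting the gradient into the contraction of the model tensor $T^\HD_W$ with the rows $A_{i,:}\otimes B_{i,:}$ (handled by orthogonality of the $S_1$- and $S_2$-rows) and the contraction of the target tensor $\delta_G$ (handled by the representation structure of $R$). The only cosmetic difference is that where you invoke Corollary~\ref{cor:g_or_h_power_1} on the $\bscs^3$-support of $\delta_G$, the paper establishes the needed closure property directly from the structure constants $v^k_{gh}=r^k_{ij}v^i_g v^k_h$ of the matrix-coefficient space $R$; these are equivalent in the multiplication-word setting to which the lemma is applied.
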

\begin{proof}
The loss function is given by
\[\sum_{g,h,f\in G}||\sum_{i=1}^Na_{i,g}b_{i,h}{c}_{i,f}-\delta_{f=gh}||^2_2.\]
Consider the derivative w.r.t to $a_{i,g}$, for $i\in S_1$. The derivatives with respect to other matrix entries have a similar form.
\begin{align}\label{eq:partial_a_ig}\frac{\partial}{\partial a_{i,g}}:~&\sum_{h,f\in G}b_{i,h}{c}_{i,f}\sum_{j}{a}_{j,g}{b}_{j,h}{c}_{j,f}-\sum_{h\in G}b_{i,h}{c}_{i,gh}\\\notag&=\sum_{j}{a}_{j,g}\left(\sum_{h\in G}b_{i,h}{b}_{j,h}\right)\left(\sum_{f\in G}c_{i,f}{c}_{j,f}\right)-\sum_{h\in G}b_{i,h}{c}_{i,gh},\end{align}
by the assumption that the rows indexed by $S_1$ are perpendicular to those of indexed by $S_2$ we see that the coefficient of ${a}_{j,g}$ in the first term vanishes unless $j\in S_1.$
Since the vector 
$({a}_{i,g})_{g\in G}\in R$, the first term belongs to $R.$ 

We need to show that also the vector $(\sum_{h\in G}b_{i,h}{c}_{i,gh})_{g\in G}\in R.$
Choose an orthonormal basis of vectors $v^1,\ldots,v^D$ for $R$, where $D=\dim(R)$.
We can write \[(b_{i,g})_{g\in G}=\sum_{l=1}^D\beta_l {v}^l, ~({c}_{i,g})_{g\in G}=\sum_{l=1}^D\gamma_l {v^l}.\]
Being a basis for a matrix coefficients space of a representation implies the existence of constants $r^i_{jk}$ such that
\begin{equation}\label{eq:quad_description}v^i_{gh}=r^i_{jk}v^j_gv^k_h,\end{equation}where we use the Einstein's summation convention.
Thus,
\[{c}_{i,gh}=\gamma_l r^l_{jk}v^j_gv^k_h,\]
hence
\begin{align*}
\sum_{h\in G}b_{i,h}{c}_{i,gh}&
=\sum_{h\in G}\beta_{l'} {v}^{l'}_h \gamma_l r^l_{jk}v^j_gv^k_h=(\sum_{l,l'}\beta_{l'}r^l_{jl'}\gamma_l)v_g^j,
\end{align*}where the last passage used orthonormality of the vectors $v^1,\ldots, v^D.$
Thus,\[(\sum_{h\in G}b_{i,h}{c}_{i,gh})_{g\in G}=(\sum_{l,l'}\beta_{l'}r^l_{jl'}\gamma_l)v^j\in R,\] as claimed.

We have shown that the at a point $W=(A,B,C)$ satisfying our requirements, the gradient for the rows indexed by $S_1$ is in $R,$ hence the dynamics will leave these rows at $R.$

Note that we did not require $R$ to be bsc. Thus $R$ can be an arbitrary sc representation. Now, since the rows of $S_2$ are orthogonal to those of $S_1$ they are also contained in a sum of the self conjugate representations not contained in $R$, by Equation \eqref{e:7}.  Thus, applying the previous part of the proof to the rows indexed by $S_2,$ shows that also they are left in the sum of the latter representations under the dynamics.
\end{proof}
\end{proof}
It turns out that if a generic $W$ nearly decomposes into bscs, then the dynamics step tends to reduce the error. We sketch this idea in the following remark.
\begin{rmk}\label{rmk:attractive}
With the notations of Proposition \ref{prop:decomp_technical}, for matrices $\tilde A,\tilde B,\tilde C$, we refer to the matrices obtained from them by ortho-projecting the rows labelled by $S_1$ to $R^\perp,$ and the remaining rows to $R$, as the \emph{normal error} (with respect to $R$ and the decomposition $[m]=S_1\sqcup S_2$).
Let $A',B',C'$ be weight matrices such that the $i$th rows of $A',B',C'$ for $i\in S_1$ belong to $R^\perp,$ and the remaining rows belong to $R.$ Assume that $A,B,C$ are \emph{generic} in the sense that at least one of the set of vectors $\{A_i\otimes B_i\}_{i\in [m]},\{A_i\otimes C_i\}_{i\in [m]}\{B_i\otimes C_i\}_{i\in [m]} $ is linearly independent. 
This is indeed the generic case when the number of rows is much smaller than $|G|^2$, which is common to all cases studied in this work.\footnote{When the number of rows is of order $|G|^2$ linear algebraic reasoning allows other, not necessarily group theoretic, representations of the tensor.}
Then for small enough $\epsilon>0$ a gradient descent step applied to $(A+\epsilon A',B+\epsilon B',C+\epsilon C')$ reduces the normal error.
To see this, let $a'_{i,g},b'_{i,g},c'_{i,g}$
be the $g$th component of $A'_i,B'_i,C'_i$ respectively.
Then, similarly to \eqref{eq:partial_a_ig}, the partial derivative with respect to the $(i,g)$ component of $A+\epsilon A'$ is  \begin{align}\label{eq:partial_a_ig_1st_order}\sum_{j}({a}_{j,g}+\epsilon a'_{j,g})\left(\sum_{h\in G}(b_{i,h}+\epsilon b'_{i,h})({b}_{j,h}+\epsilon b'_{j,h})\right)&\left(\sum_{f\in G}(c_{i,f}+\epsilon c'_{i,f})({c}_{j,f}+\epsilon c'_{j,f})\right)\\\notag&-\sum_{h\in G}(b_{i,h}+\epsilon b'_{i,h})({c}_{i,gh}+\epsilon c'_{i,gh}).\end{align}
Suppose now that $i\in S_1.$ By Proposition \ref{prop:decomp_technical} the normal error of the zeroth order in $\epsilon$ vanishes.
For the first order, the same orthogonality arguments used in the proof of Proposition \ref{prop:decomp_technical} show that the first line of the above equation equals
\begin{align*}&\epsilon\sum_{j\in S_1} a'_{j,g}\left(\sum_{h\in G}b_{i,h}{b}_{j,h}\right)\left(\sum_{f\in G}c_{i,f}{c}_{j,f}\right)+
\\&+\epsilon\sum_{j\in S_2}{a}_{j,g}\left\{\left(\sum_{h\in G}(b_{i,h}b'_{j,h}+b'_{i,h}{b}_{j,h})\right)\left(\sum_{f\in G}c_{i,f}{c}_{j,f}\right)
+
\left(\sum_{h\in G}b_{i,h}{b}_{j,h}\right)\left(\sum_{f\in G}(c_{i,f}c'_{j,f}+ c'_{i,f}{c}_{j,f}) \right)
\right\},\end{align*}the ortho-projection to $R^\perp$ is thus \begin{equation}\label{eq:normal_grad}\epsilon\sum_{j\in S_1} a'_{j,g}\left(\sum_{h\in G}b_{i,h}{b}_{j,h}\right)\left(\sum_{f\in G}c_{i,f}{c}_{j,f}\right).\end{equation} The same analysis performed in the proof of Proposition \ref{prop:decomp_technical} shows that the second term of \eqref{eq:partial_a_ig_1st_order} has zero linear coefficient in $\epsilon.$
Thus, the $(i,g)$ component of gradient of the loss function in the normal direction is precisely the expression \eqref{eq:normal_grad}.

In order to show that a gradient descent step reduces the normal error, it is enough to show that the inner product of the gradient and the initial normal error is positive. Since the zeroth order of the gradient has zero normal error, it is enough to show this for the first order.
After dividing by $\epsilon^2m$ this inner product equals
\begin{align*}
\sum_{i,j}\langle A'_i ,A'_j\rangle\langle B_i ,B_j\rangle\langle C_i ,C_j\rangle&+\sum_{i,j}\langle A_i, A_j\rangle\langle B'_i ,B'_j\rangle\langle C_i ,C_j\rangle+\sum_{i,j}\langle A_i ,A_j\rangle\langle B_i, B_j\rangle\langle C'_i ,C'_j\rangle =\\
&=||\sum_{i}A'_i\otimes B_i\otimes C_i||^2+
||\sum_{i}A_i\otimes B'_i\otimes C_i||^2
+||\sum_{i}A_i\otimes B_i\otimes C'_i||^2.
\end{align*}
The genericity assumption guarantees that at least one of the expressions inside the square is non zero. 
\end{rmk}
\newpage

\section{Additional material for Section~\ref{s:4}}\label{sec:appB}
\subsection{Additional heatmaps for the terminal weights of the HD model and their analysis}
\label{s:B.1}
In this section we examine instances of Suggested General Principle \ref{gp:1}. We study heatmaps of different words and different groups, and show that, up to negligible noise that could be explained in various ways, the box covers of the resulting networks satisfy the suggested general principle. In eight out of nine the second, stronger, property of a minimal thin box cover holds, while in the cover is only thin, hence satisfies the first, weaker, property of the principle.
\begin{ex}[The words $w=a^2b,~w=aba$.] 
$S_4:$ Figures \ref{fig:s4ggh} and \ref{fig:s4ghg}
show the heatmaps of the words $a^2b$ and $aba$ respectively for the group $S_4.$
In both cases $\bscs^3_\square(W)$ are exactly the thin minimal box cover given in Table \ref{f:A7}, 
that is\[B_1=\{0,2,3,4\}\times\{3,4\}\times\{3,4\},~~B_2=\{0,1,2\}\times\{2\}\times\{2\},~~B_3=\{0\}\times\{0,1\}\times\{0,1\}.\] 
There were a few experiments in which $\bscs^3_\square(W)$ were other minimal box covers. In some of these experiments $B_1$ has been replaced by $\{0,2,3,4\}\times\{3\}\times\{3\}$ and $\{0,2,3,4\}\times\{4\}\times\{4\}$ and in some $B_3$ has been replaced by $\{0\}\times\{0\}\times\{0\}$ and $\{0\}\times\{1\}\times\{1\}.$
\begin{figure}[H]
\tiny
\renewcommand\theadfont{\tiny}
    \includegraphics[width=\textwidth]{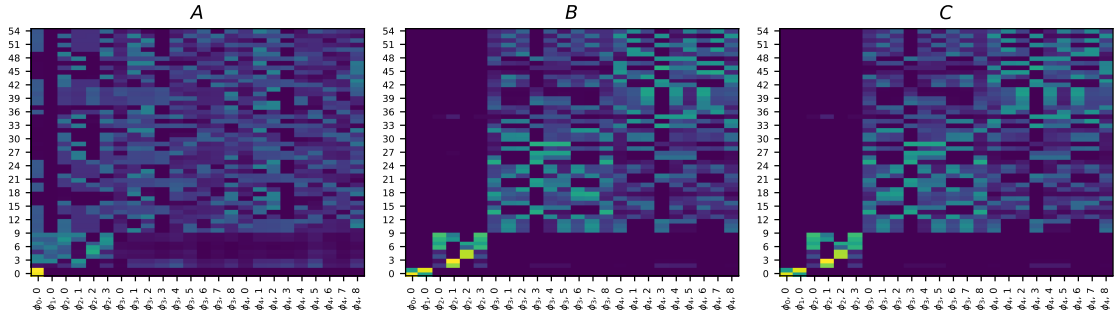}
\caption{$S_4$ and the word $aba$.}
\label{fig:s4ghg}
\end{figure}
\begin{figure}[H]
\tiny
\renewcommand\theadfont{\tiny}
    \includegraphics[width=\textwidth]{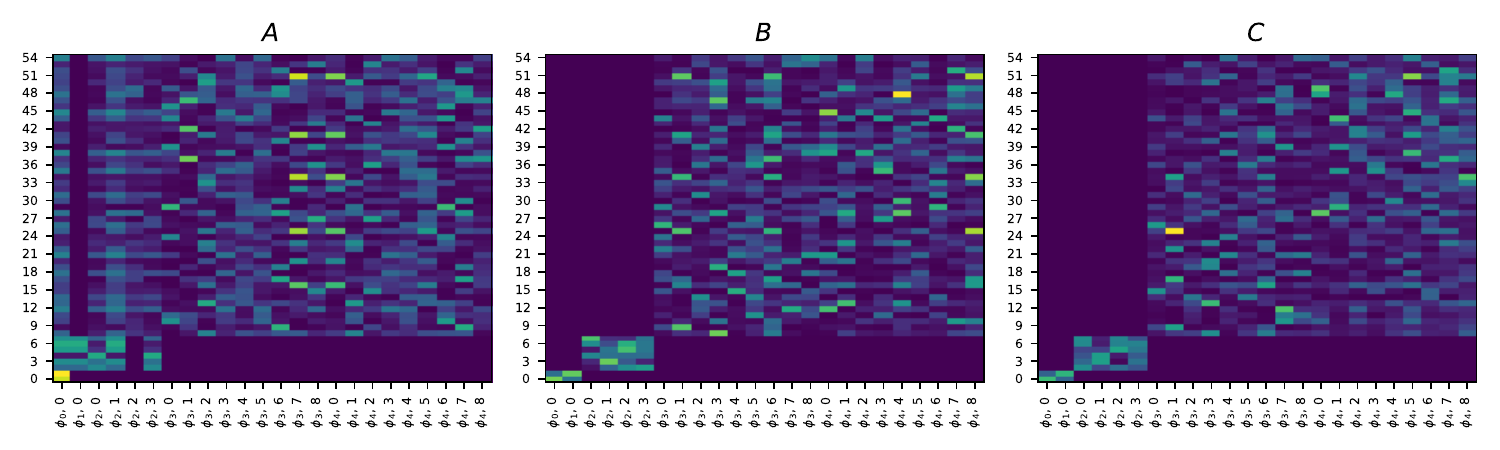}
\caption{$S_4$ and the word $a^2b$.}
\label{fig:s4ggh}
\end{figure}
\textbf{$D_8:$ }
Figures \ref{fig:d8ggh} and \ref{fig:d8ghg}
show the heatmaps of the words $aba$ and $a^2b$ for the group $D_8.$
Ignoring negligible noise 
$\bscs^3_\square(W)$ are readily seen to be dominated by the minimal box covers of 
Table \ref{f:A7}. For $w=aba$
it is
\[
\begin{split}
& B_1=\{0\}\times\{0,1,2,3\}\times\{0,1,2,3\},~~B_2=\{0,1,2,3\}\times\{5\}\times\{5\},~~B_3=\{0,1,5\}\times\{4\}\times\{4\},\\
& B_4=\{0,1,5\}\times\{6\}\times\{6\},
\end{split}
\]
while for $w=a^2b$ it is $B_1,B_2,$ and $B'_3=\{0,1,5\}\times\{4,6\}\times\{4,6\}.$ The cover $B_1,B_2,B_3$ has appeared in all our experiments for the word $aba,$ while for the word $a^2b$ we saw the two different covers in different experiments. For most of the rows the corresponding boxes actually agree with the boxes in the box cover, and are not just being dominated.
\begin{figure}[H]
\tiny
\renewcommand\theadfont{\tiny}
    \includegraphics[width=\textwidth]{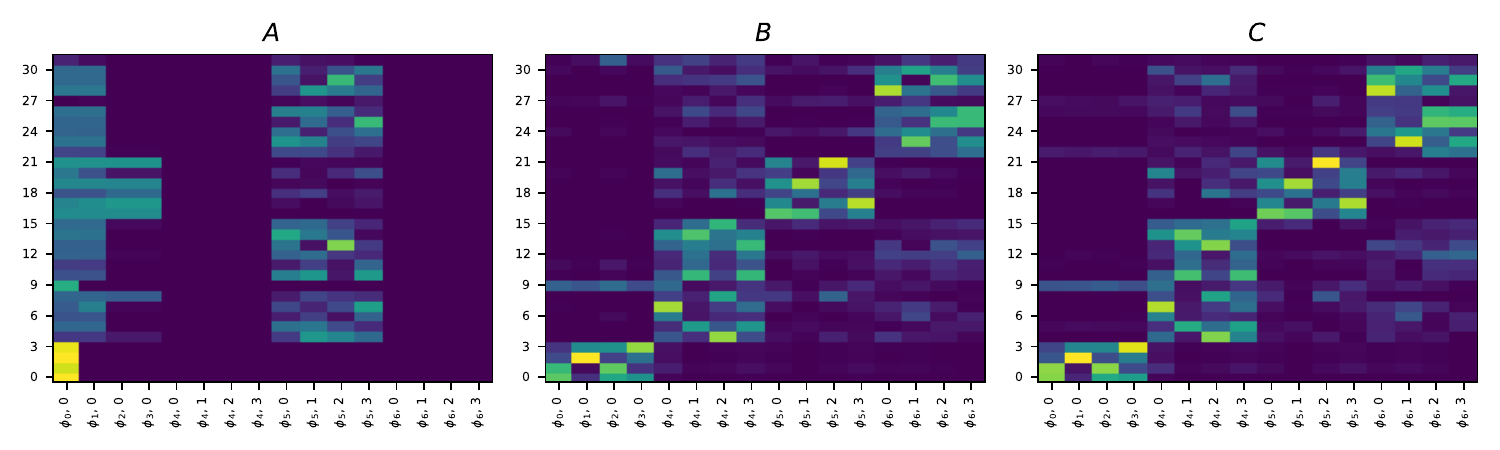}
\caption{$D_8$ and the word $aba$.}
\label{fig:d8ghg}
\end{figure}
\begin{figure}[H]
\tiny
\renewcommand\theadfont{\tiny}
    \includegraphics[width=\textwidth]{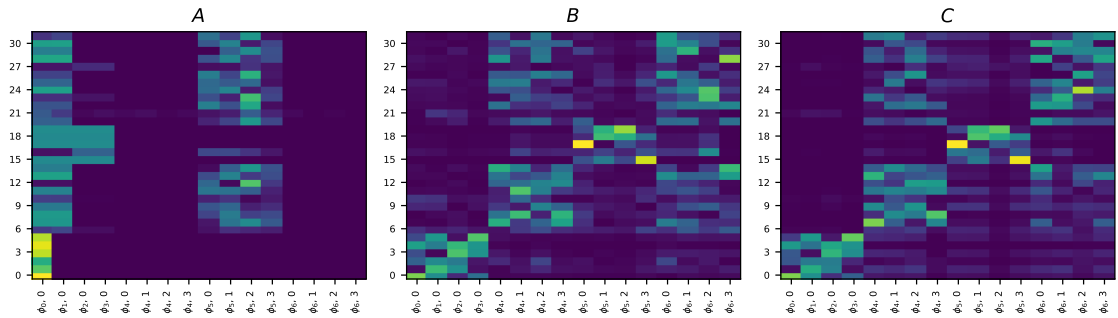}
\caption{$D_8$ and the word $a^2b$.}
\label{fig:d8ggh} 
\end{figure}

\textbf{$M_5(2):$ }
show the heatmaps of the words $aba$ and $a^2b$ for the group $M_5(2).$
$\bscs^3_\square(W)$ are dominated by the two minimal box cover of Table \ref{f:A7}. For $w=aba$
it is, in the notations of Table \ref{f:A7}, $B_1,B_2,B_3,B_4,$ and for $w=a^2b$ it is $B_1,B_2,B_3,B'_4,B'_5.$ Interestingly, unlike the $D_8$ case, here in all experiments we performed we saw the the first cover for the word $aba$ and the second for $a^2b.$ Again for most rows, in most experiments, the corresponding boxes agree with boxes in the box covers, and are not just being dominated by them.
Figures \ref{fig:m52ggh} and \ref{fig:m52ghg}
\begin{figure}[H]
\tiny
\renewcommand\theadfont{\tiny}
    \includegraphics[width=\textwidth]{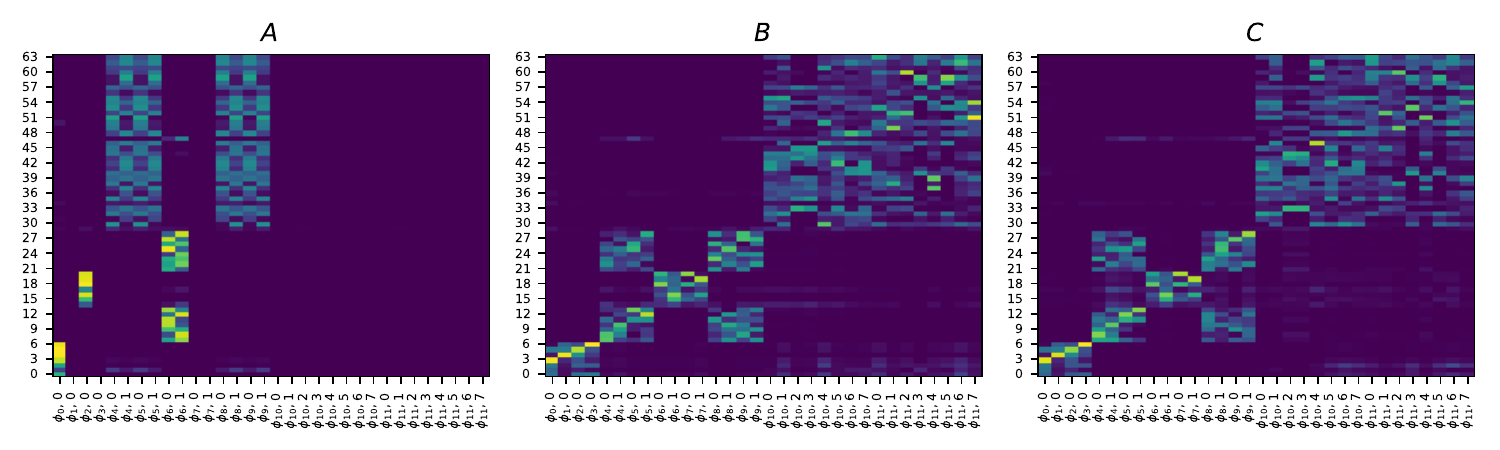}
\caption{$M_5(2)$ and the word $aba$.}
\label{fig:m52ghg}
\end{figure}
\begin{figure}[H]
\tiny
\renewcommand\theadfont{\tiny}
    \includegraphics[width=\textwidth]{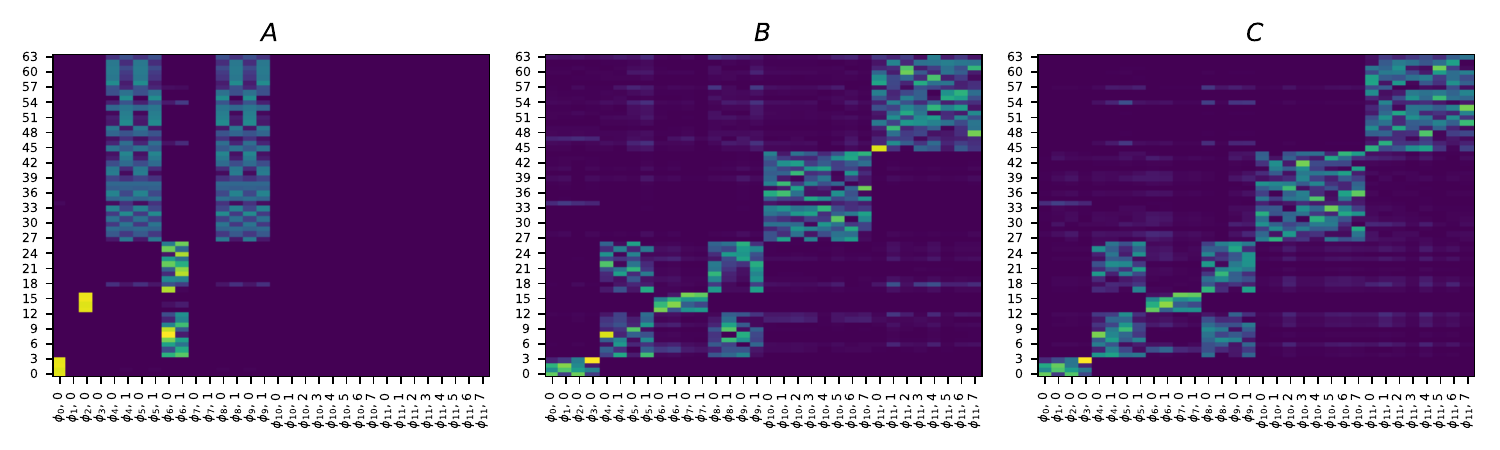}
\caption{$M_5(2)$ and the word $a^2b$.}
\label{fig:m52ggh}
\end{figure}
\end{ex}

\begin{ex}[The word $w=aba^{-1}ba^2b^3ab^{-1}$]
We now concentrate on $w=aba^{-1}ba^2b^3ab^{-1}$, and again test $S_4,D_8,M_5(2).$
\\$S_4:$ 
Figure \ref{fig:s4long_adamW} shows a heatmap for the Hadamard network which studies the word $w$ for the group $S_4.$
We can see in the heatmap rows which yield boxes $\{0,1,2\}\times\{0,1,2\}\times\{2,3,4\}$, $\{0,1,2,3,4\}\times\{0,1,2\}\times\{3,4\},$ $\{3,4\}\times\{0,1,2,3,4\}\times\{3,4\}$, $\{0,1\}\times\{0\}\times\{0,1,3,4\},$ $\{0,1,2,3,4\}\times\{3,4\}\times\{3,4\}$ and more.
$\bscs^3_\square(W)$ satisfies the weaker version of the general principle: it is
a thin box cover, and this cover is dominated by the box cover $\{0-2\}\times\{0-2\}\times\{0-4\}+\{0-4\}\times\{0-4\}\times\{3,4\}$, which is neither thin nor minimal. There is no single minimal box cover containing them. Moreover, the heatmap shows additional structure of vanishing that we cannot deduce immediately from the combinatorics of triplets of $\bscs$ which appear in Table \ref{f:A6}. Perhaps it is related to the multiplicities of $\bscs$ inside corresponding matrix coefficient spaces. Numerical experiments lead us to believe that either the true rank is much smaller than the box cover bounds in this case, or that there are extremely good low rank approximations. We leave the study of this hidden structure to future works.
\begin{figure}[H]
\tiny
\renewcommand\theadfont{\tiny}
    \includegraphics[width=\textwidth]
{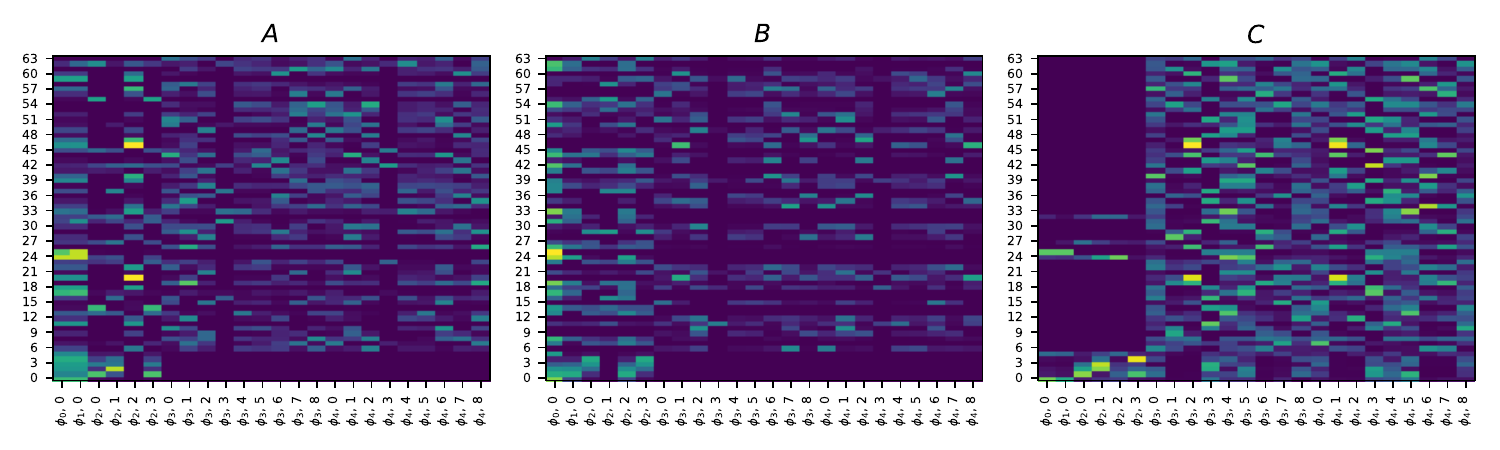}
\caption{$S_4$ and the word $aba^{-1}ba^2b^3ab^{-1}$.}
\label{fig:s4long_adamW}
\end{figure}

\textbf{$D_8:$ }
Figure \ref{fig:d8long}
shows the heatmap for $w$ and the group $D_8.$
Ignoring noise, $\bscs^3_\square(W)$ is dominated by the minimal thin box cover of Table \ref{f:A7}, which is
\[B_1=\{0,1,2,3,5\}\times\{0\}\times\{0,1,2,3,5\},~~B_2=\{4,6\}\times\{0,1,2,3\}\times\{4,6\}.\]As above, the boxes corresponding to most rows actually equal boxes from that cover.
\begin{figure}[H]
\tiny
\renewcommand\theadfont{\tiny}
    \includegraphics[width=\textwidth]{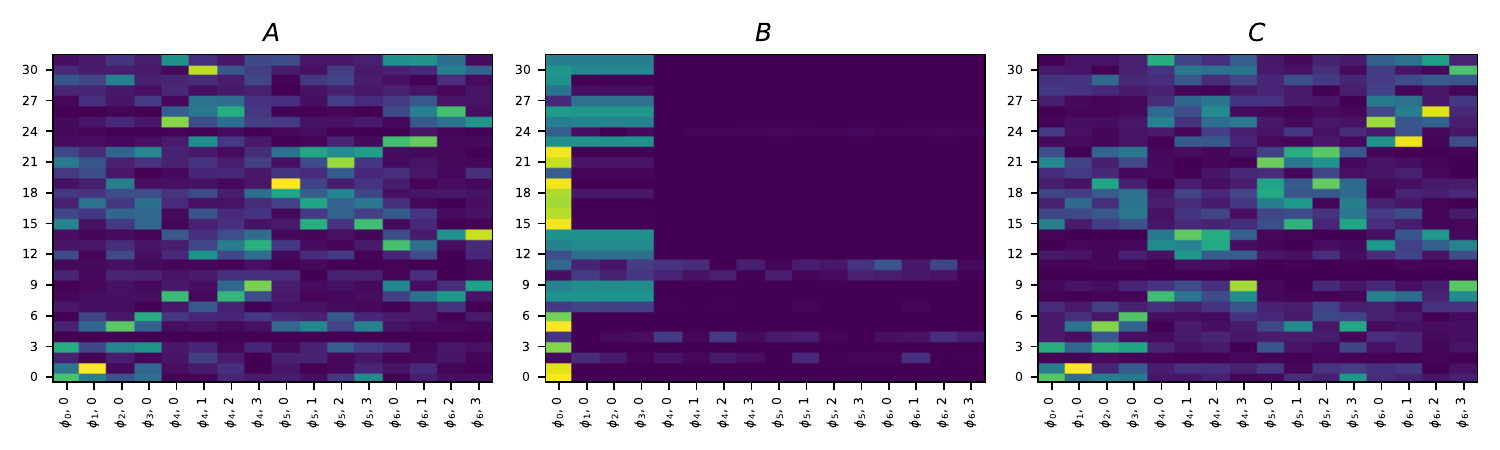}
\caption{$D_8$ and the word $aba^{-1}ba^2b^3ab^{-1}$.}
\label{fig:d8long} 
\end{figure}

\textbf{$M_5(2):$ }Figure \ref{fig:m52long}
shows the heatmap for $w$ and the group $M_5(2).$ Again $\bscs^3_\square(W)$ is dominated by the minimal box cover of of Table \ref{f:A7}, which is
\[
\begin{split}
& B_1=\{0,1,2,3,6,7\}\times\{0\}\times\{0,1,2,3,6,7\},~~B_2=\{4,5,8,9\}\times\{2\}\times\{4,5,8,9\},\\
& B_3=\{10,11\}\times\{6\}\times\{10,11\}.
\end{split}
\]
Again for most rows we have full agreement and not just domination.
\begin{figure}[H]
\tiny
\renewcommand\theadfont{\tiny}
    \includegraphics[width=\textwidth]{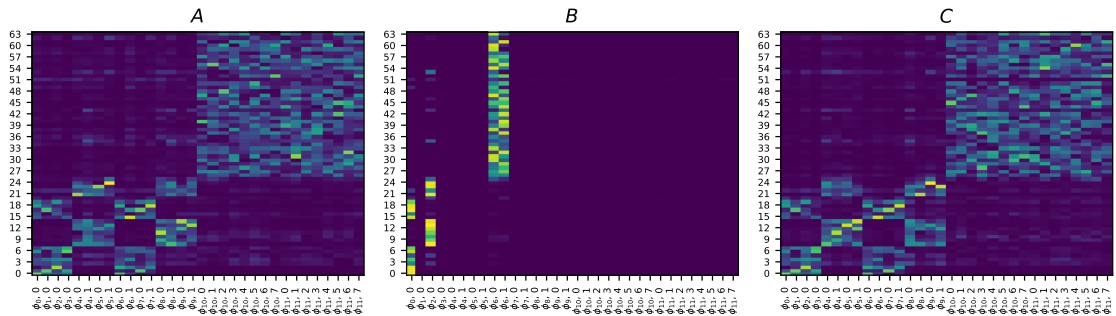}
\caption{$M_5(2)$ and the word $aba^{-1}ba^2b^3ab^{-1}$.}
\label{fig:m52long} 
\end{figure}
\end{ex}

\subsection{Additional train/test accuracy/loss evolutions for various words and groups}
\label{s:B.3}
The evolution of train/test loss/accuracy during training in one run under the HD model for various groups, words and fraction of training samples.
\begin{figure}[H]
\begin{minipage}{0.33\linewidth}
    \centering
    \includegraphics[width=1\textwidth]{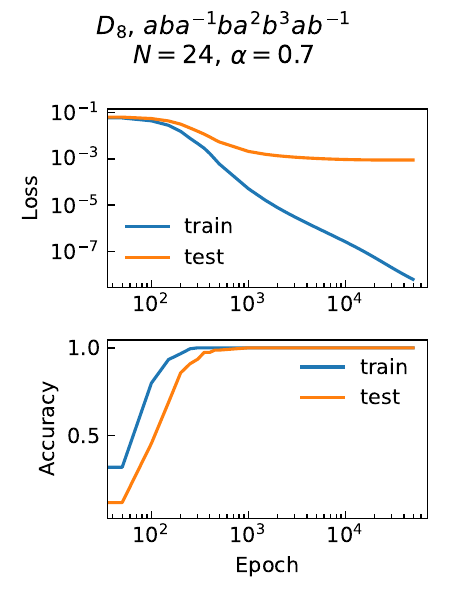} 
\end{minipage}
\begin{minipage}{0.32\linewidth}
    \centering
    \includegraphics[width=1.05\textwidth]{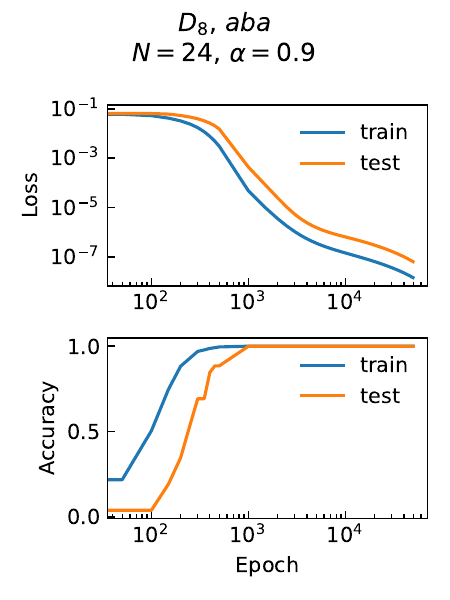} 
\end{minipage}
\begin{minipage}{0.33\linewidth}
    \centering
    \includegraphics[width=1\textwidth]{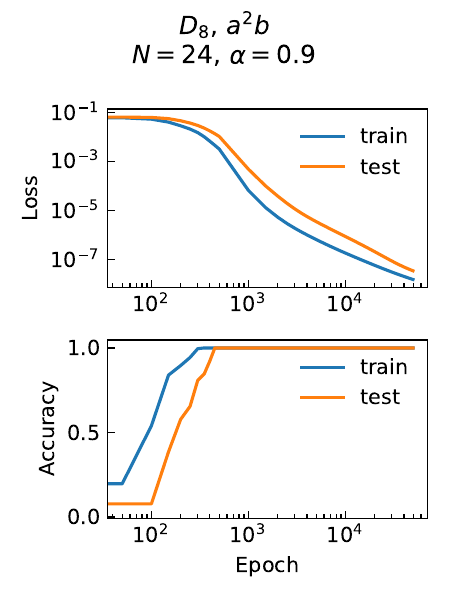} 
\end{minipage}

\begin{minipage}{0.33\linewidth}
    \centering
    \includegraphics[width=1\textwidth]{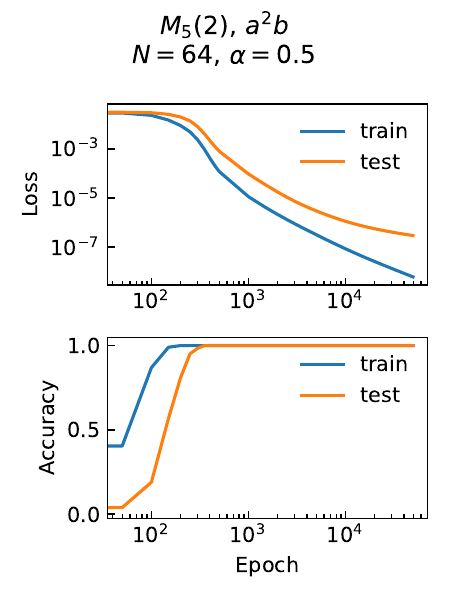} 
\end{minipage}
\begin{minipage}{0.32\linewidth}
    \centering
    \includegraphics[width=1.05\textwidth]{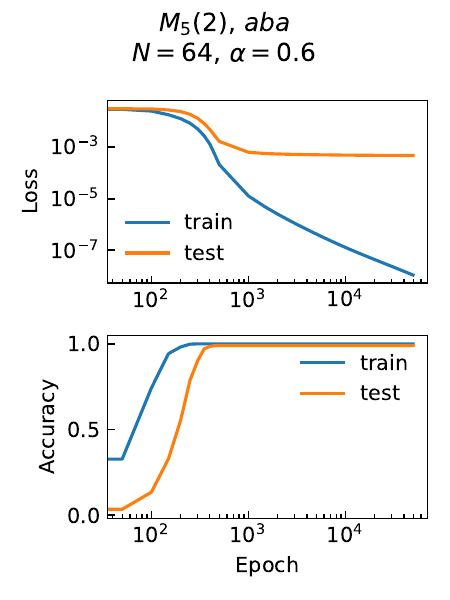} 
\end{minipage}

\begin{minipage}{0.33\linewidth}
    \centering
    \includegraphics[width=1\textwidth]{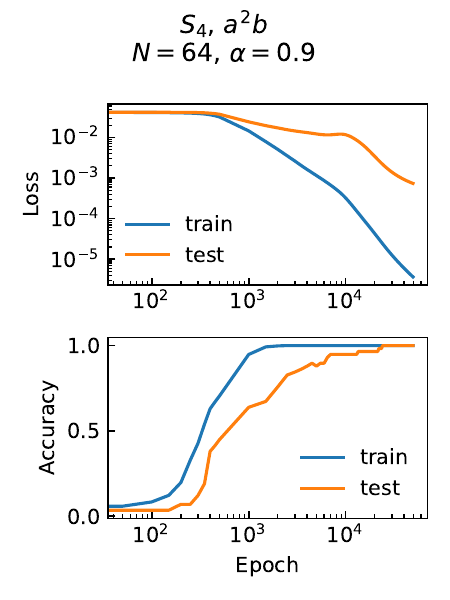} 
\end{minipage}
\begin{minipage}{0.33\linewidth}
    \centering
    \includegraphics[width=1\textwidth]{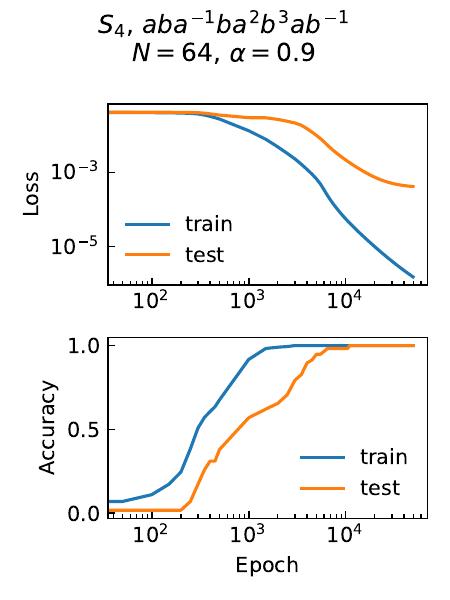} 
\end{minipage}
\end{figure}

\subsection{Additional heatmaps for the terminal weights of the TLP model}
\label{s:B.2}
Below are heatmaps of the rows of matrices $A$, $B$ and $C$ of the terminal weight configuration of one run of the TLP model with the ReLU activation function, for several groups and words. As in the case of the HD model, rows are projected onto the subspaces of the bscs of the group. The width of the model can be read off the maps. The non-trivial block structure is apparent, albeit with more noise compare to the heatmaps in the case of the HD model, as shiown in Subsection~\ref{s:B.1}. A careful examination of the bscs appearing in the support of the rows, suggest that an analogous principle to Suggested General Principle~\ref{gp:1} holds albeit under a reformulation of the notions of a box-cover and minimal box-cover from Section~\ref{s:4}. See also Subsection~\ref{s:7.3}.

\begin{figure}[H]
\tiny
\renewcommand\theadfont{\tiny}
    \includegraphics[width=\textwidth]{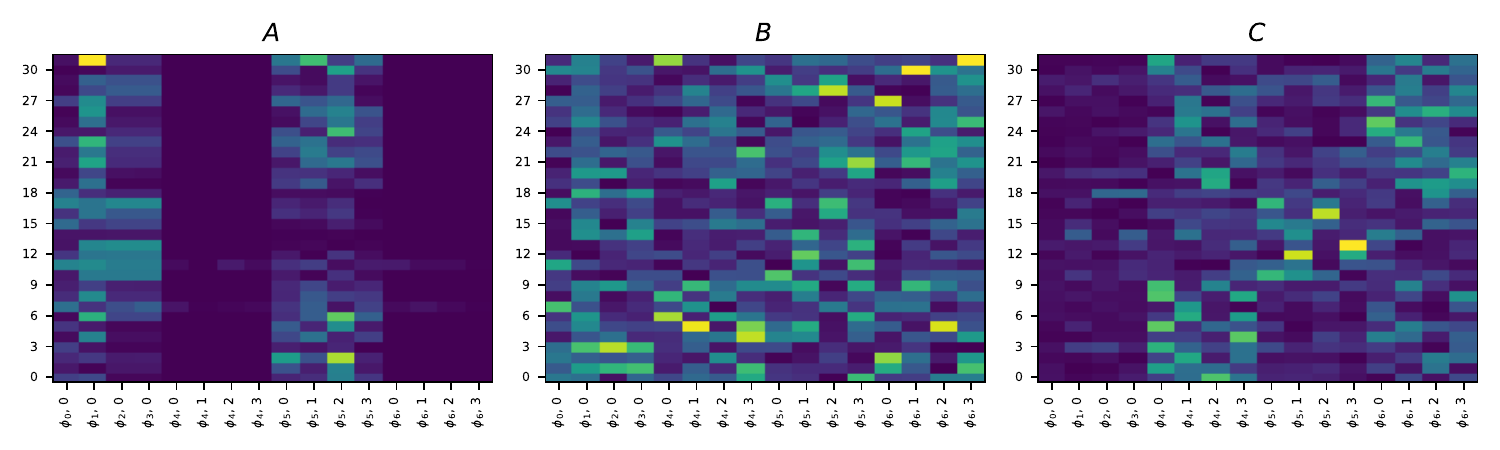}
\caption{$D_8$ and the word $a^2b$ with ReLU.}
\label{fig:d8_aab_relu} 
\end{figure}

\begin{figure}[H]
\tiny
\renewcommand\theadfont{\tiny}
    \includegraphics[width=\textwidth]{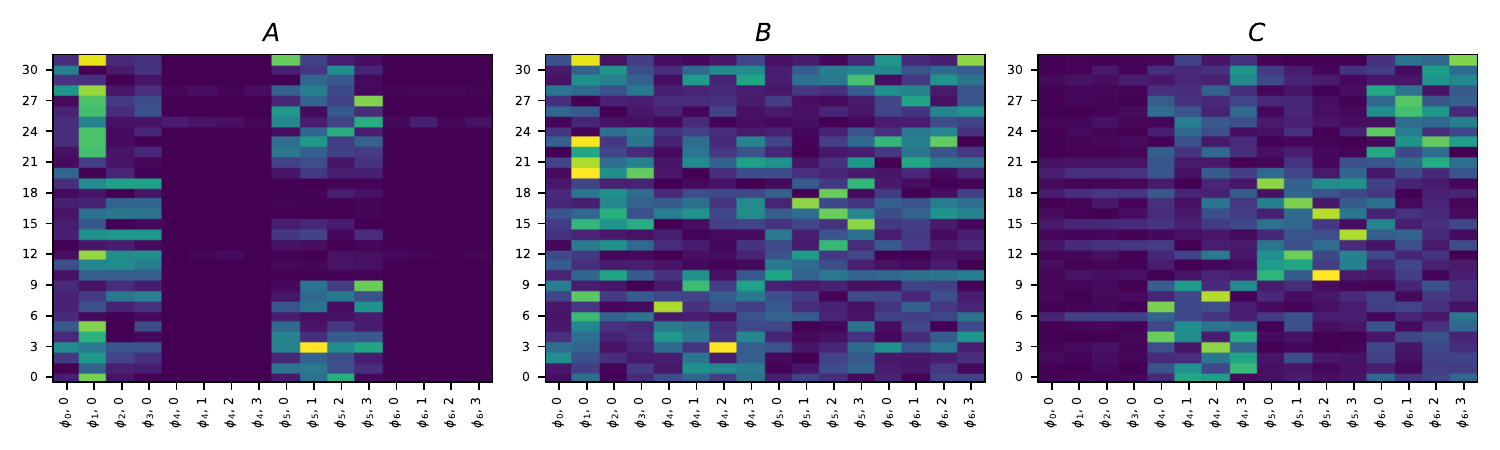}
\caption{$D_8$ and the word $aba$ with ReLU.}
\label{fig:d8_aba_relu} 
\end{figure}

\begin{figure}
\tiny
\renewcommand\theadfont{\tiny}
    \includegraphics[width=\textwidth]{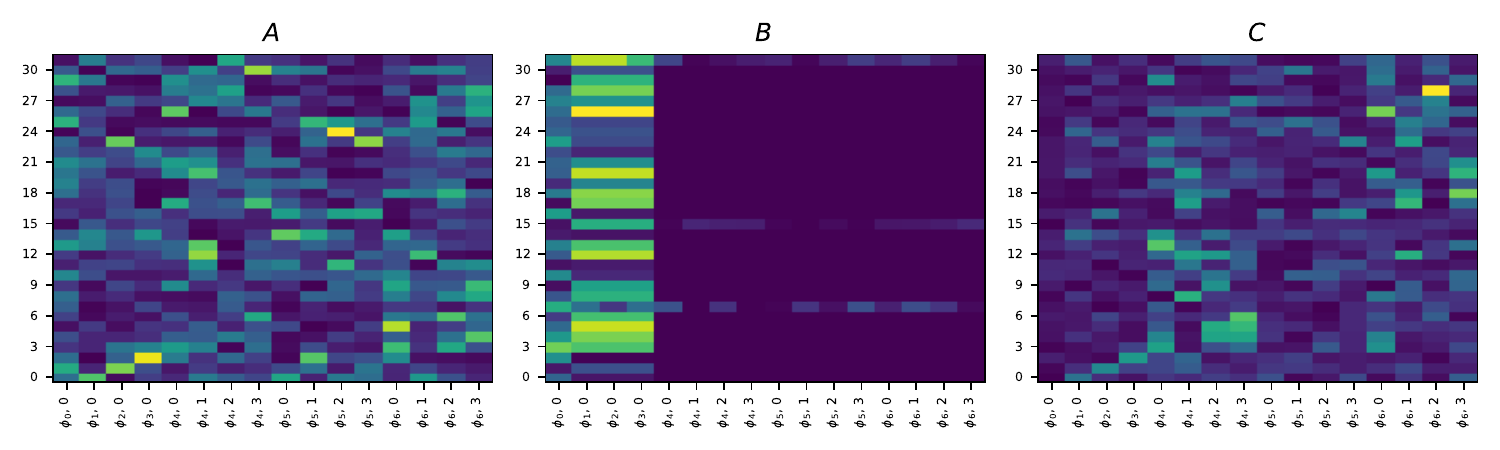}
\caption{$D_8$ and the word $aba^{-1}ba^2b^3ab^{-1}$ with ReLU.}
\label{fig:d8_long_relu} 
\end{figure}

\begin{figure}
\tiny
\renewcommand\theadfont{\tiny}
    \includegraphics[width=\textwidth]{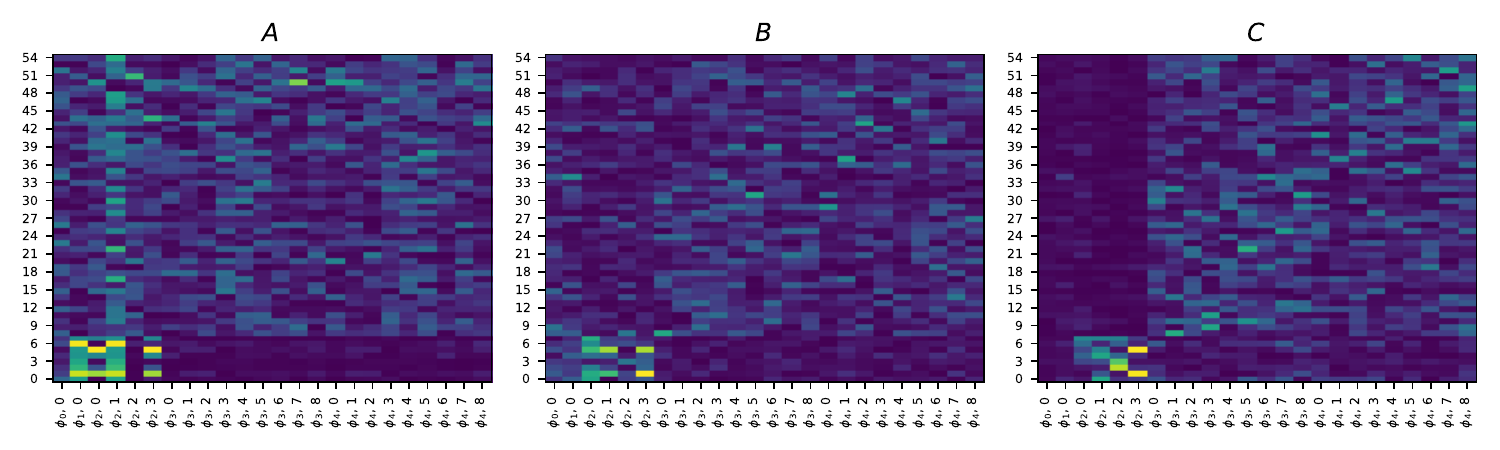}
\caption{$S_4$ and the word $a^2b$ with ReLU.}
\label{fig:s4_aab_relu} 
\end{figure}

\begin{figure}
\tiny
\renewcommand\theadfont{\tiny}
    \includegraphics[width=\textwidth]{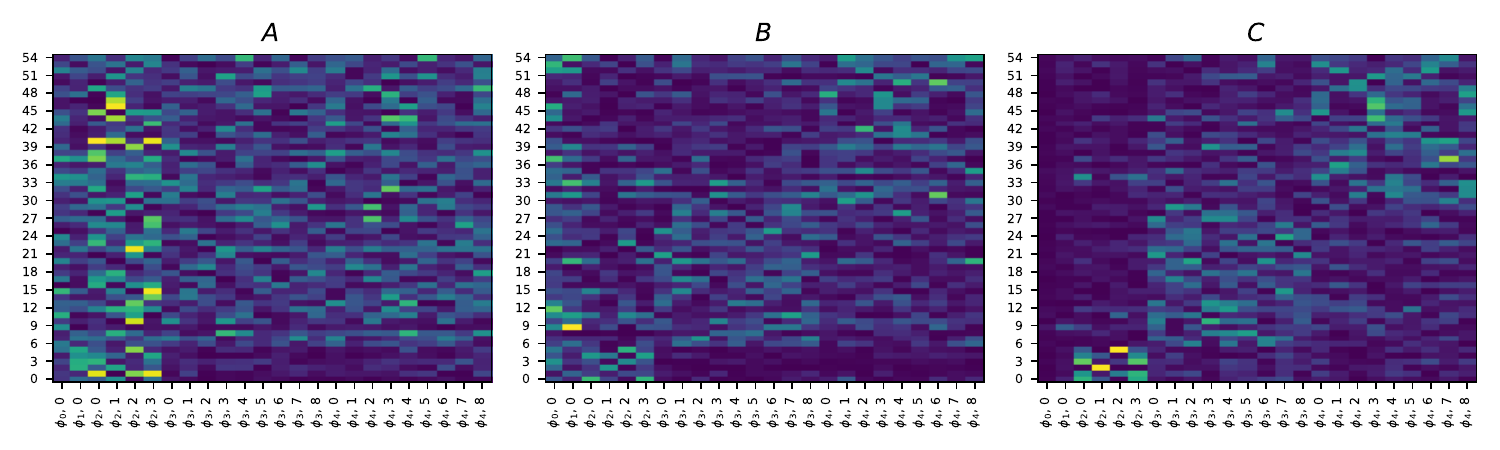}
\caption{$S_4$ and the word $aba$ with ReLU.}
\label{fig:s4_aba_relu} 
\end{figure}

\begin{figure}
\tiny
\renewcommand\theadfont{\tiny}
    \includegraphics[width=\textwidth]{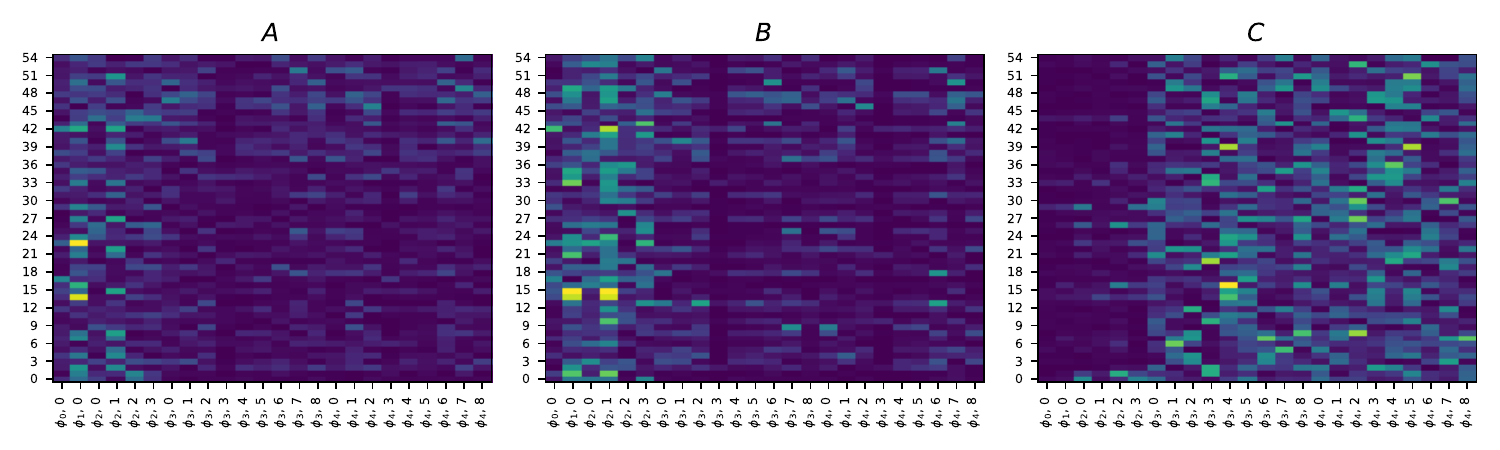}
\caption{$S_4$ and the word $aba^{-1}ba^2b^3ab^{-1}$ with ReLU.}
\label{fig:s4_long_relu} 
\end{figure}

\begin{figure}
\tiny
\renewcommand\theadfont{\tiny}
    \includegraphics[width=\textwidth]{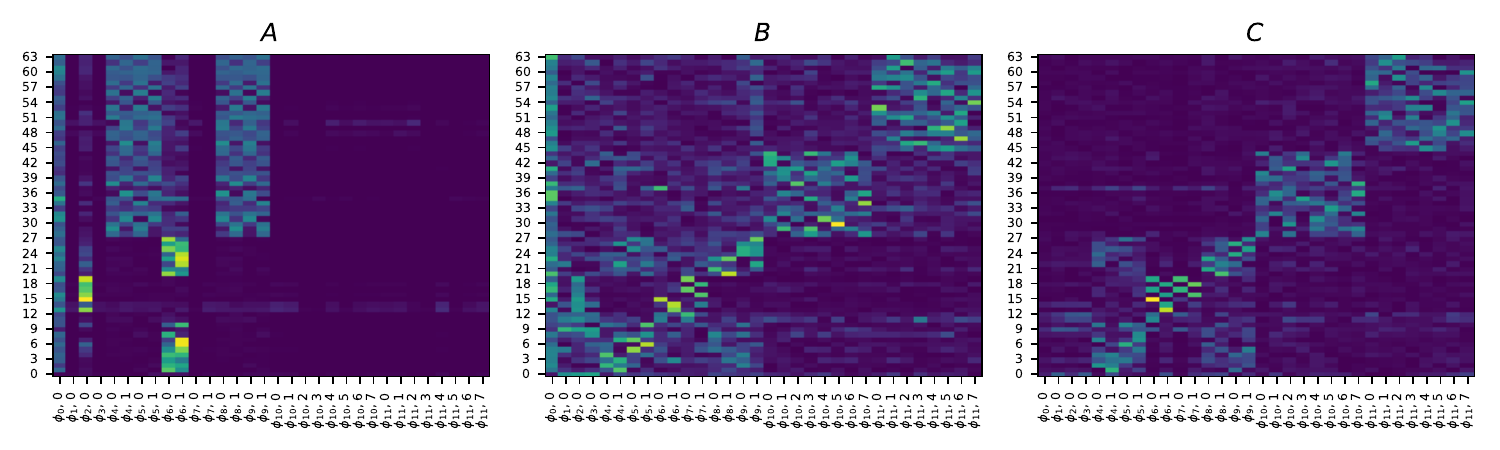}
\caption{$M_5(2)$ and the word $a^2b$ with ReLU.}
\label{fig:m52_aab_relu} 
\end{figure}

\begin{figure}
\tiny
\renewcommand\theadfont{\tiny}
    \includegraphics[width=\textwidth]{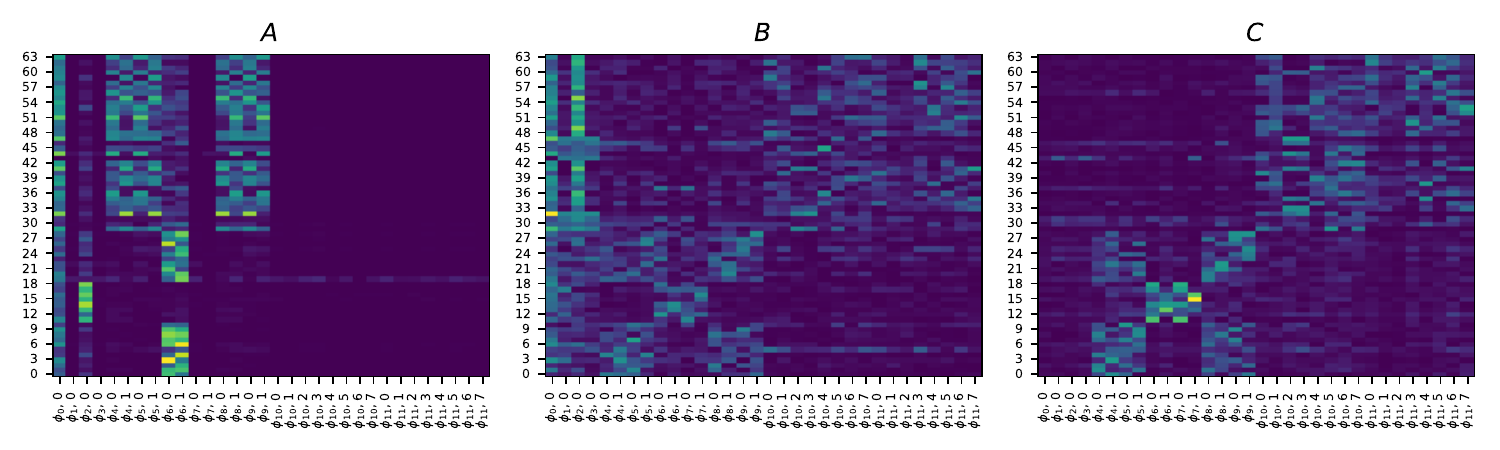}
\caption{$M_5(2)$ and the word $aba$ with ReLU.}
\label{fig:m52_aba_relu} 
\end{figure}

\begin{figure}
\tiny
\renewcommand\theadfont{\tiny}
    \includegraphics[width=\textwidth]{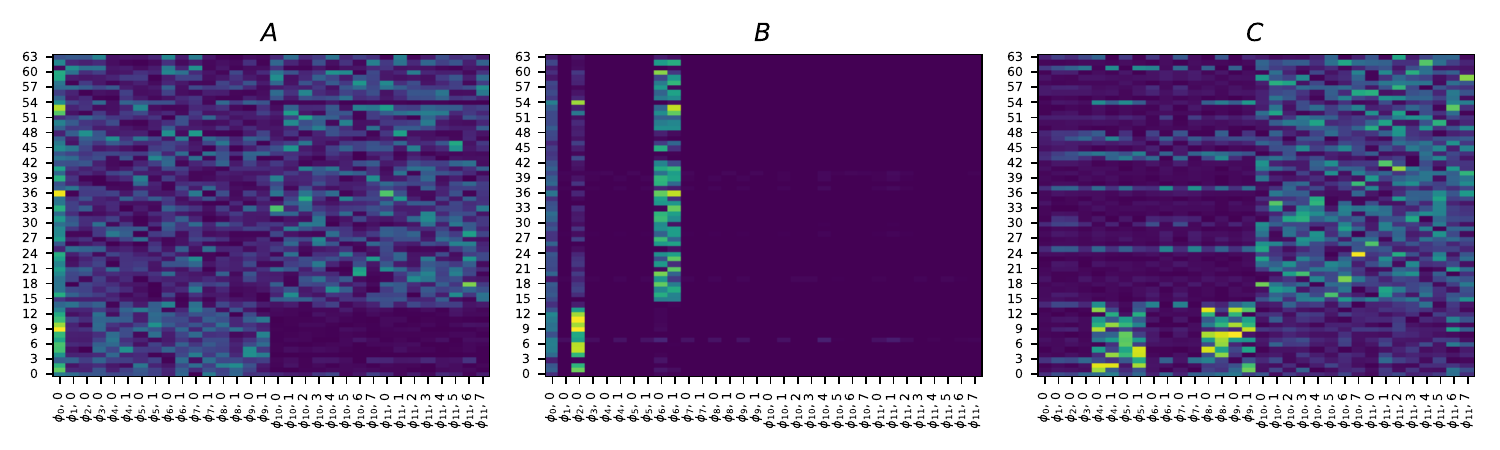}
\caption{$M_5(2)$ and the word $aba^{-1}ba^2b^3ab^{-1}$ with ReLU.}
\label{fig:m52_long_relu} 
\end{figure}

\newpage
\section{Additional material for Section~\ref{s:5}}
In this section of the appendix, we provide additional empirical results concerning the case of the simple multiplication word. 
\subsection{Terminal loss and accuracy under the TLP model with full datasets as train samples}
{\em Left:} Training results on various groups. 20 runs per group and activation function, using the AdamW optimizer without weight decay, batch size of 16, and 5000 epochs. 
{\em Right:} Projection (in absolute value) of the terminal weights of the rows of matrices $A$, $B$ and $C$ (Y-axis) on the matrix entries of all bscs of the group as $\bbR^G$-vectors (X-axis; entries of the same bsc are adjacent to each other) in one run of training for ($\bbZ_{56}^\times$, square), 
($S_4$, square), 
($S_4$, Relu) (top to bottom). The resulting blocks correspond exactly to the different bscs of the group.

\begin{figure}[H]
\begin{minipage}{0.5\linewidth}
\resizebox{\textwidth}{!}{
\begin{tabular}{|c|c|c|c|c|c|c|c|}
	\hline
	Group & $N$ & Activation & \thead{Learning \\ rate} &  \thead{Weight \\ init. STD} & \thead{Min. final \\ accuracy} & \thead{Median final\\accuracy} & \thead{Max. final \\ loss}  \tabularnewline 
	\hline
	& 24 & ReLU & 0.001 & 0.2 & 0.947917 & 0.998264 & 0.025\tabularnewline
	$\mathbb{Z}_{56}^{\times}$ & 24 & sigmoid & 0.005 & 0.2 & 0.972222 & 1 & 0.024\tabularnewline
	 & 24 & square & 0.001 & 0.2 & 1 & 1 & 0.013\tabularnewline
	\hline
	 & 48 & ReLU & 0.001 & 0.14 & 0.973765 & 0.98968 & 0.011\tabularnewline
	$\mathbb{Z}_{91}^{\times}$& 48 & sigmoid & 0.005 & 1 & 1 & 1 & 0.01\tabularnewline
	& 48 & square & 0.001 & 0.14 & 1 & 1 & 0.0085\tabularnewline
	\hline
	& 32 & ReLU & 0.001 & 0.18 & 0.972656 & 0.988281 & 0.022\tabularnewline
	$D_{16}$ & 32 & sigmoid & 0.005 & 0.18 & 0.990234 & 1 & 0.02\tabularnewline
	 & 32 & square & 0.001 & 0.18 & 1 & 1 & 0.014\tabularnewline
	\hline
	 & 32 & ReLU & 0.001 & 0.18 & 0.921875 & 0.983507 & 0.025\tabularnewline
	$S_{4}$ & 32 & sigmoid & 0.005 & 0.18 & 0.984375 & 0.998264 & 0.023\tabularnewline
	& 32 & square & 0.001 & 0.18 & 0.980903 & 1 & 0.02\tabularnewline
	\hline
	& 90 & ReLU & 0.001 & 0.11 & 0.996944 & 0.999583 & 0.011\tabularnewline
	$A_{5}$ & 90 & sigmoid & 0.005 & 1 & 0.998889 & 1 & 0.011\tabularnewline
	 & 90 & square & 0.001 & 0.11 & 0.999722 & 1 & 0.01\tabularnewline
	\hline
	 & 32 & ReLU & 0.001 & 0.18 & 1 & 1 & 0.013\tabularnewline
	$(\mathbb{Z}_{4}\times\mathbb{Z}_{2})$ & 32 & sigmoid & 0.005 & 0.18 & 1 & 1 & 0.016\tabularnewline
	 $\rtimes\mathbb{Z}_{2}$ & 32 & square & 0.001 & 0.18 & 1 & 1 & 1.3e-05\tabularnewline
	\hline
	 & 16 & ReLU & 0.005 & 0.25 & 0.96875 & 1 & 0.027\tabularnewline
	$Q_{8}$ & 16 & sigmoid & 0.005 & 0.25 & 1 & 1 & 0.017\tabularnewline
	 & 12 & square & 0.001 & 0.29 & 1 & 1 & 0.016\tabularnewline
	\hline
	 & 48 & ReLU & 0.001 & 0.14 & 0.992188 & 0.998047 & 0.016\tabularnewline
	$M_{5}(2)$ & 48 & sigmoid & 0.005 & 1 & 0.963867 & 0.998047 & 0.016\tabularnewline
	 & 48 & square & 0.001 & 0.14 & 1 & 1 & 0.0089\tabularnewline
	\hline
\end{tabular}}
\end{minipage}\hfill
\begin{minipage}{0.5\linewidth}
    \includegraphics[width=\textwidth]{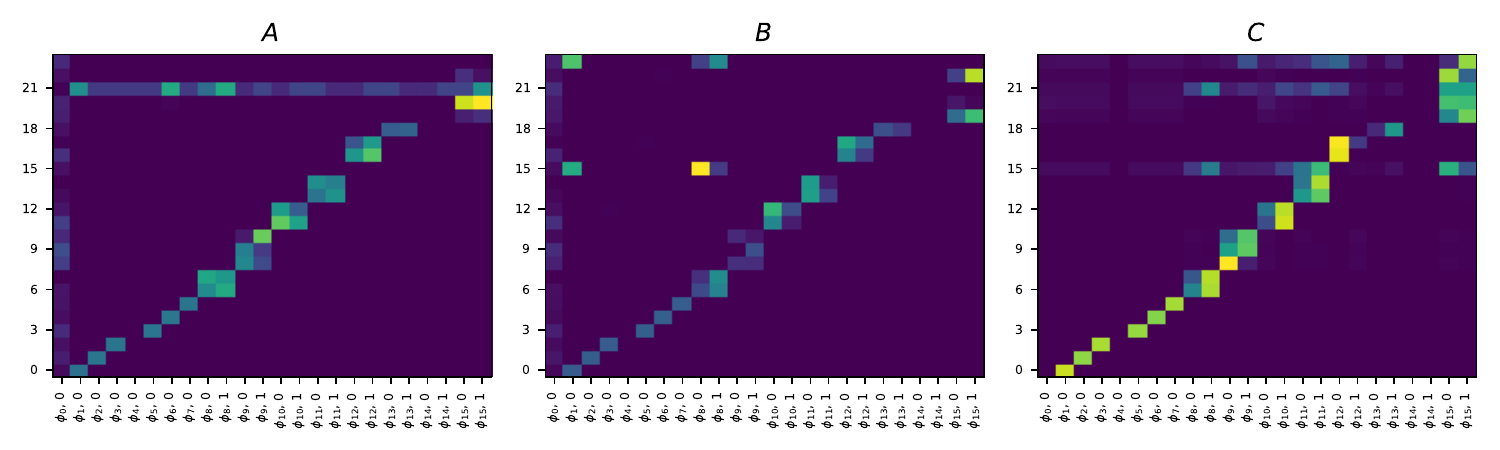} 
    \includegraphics[width=\textwidth]{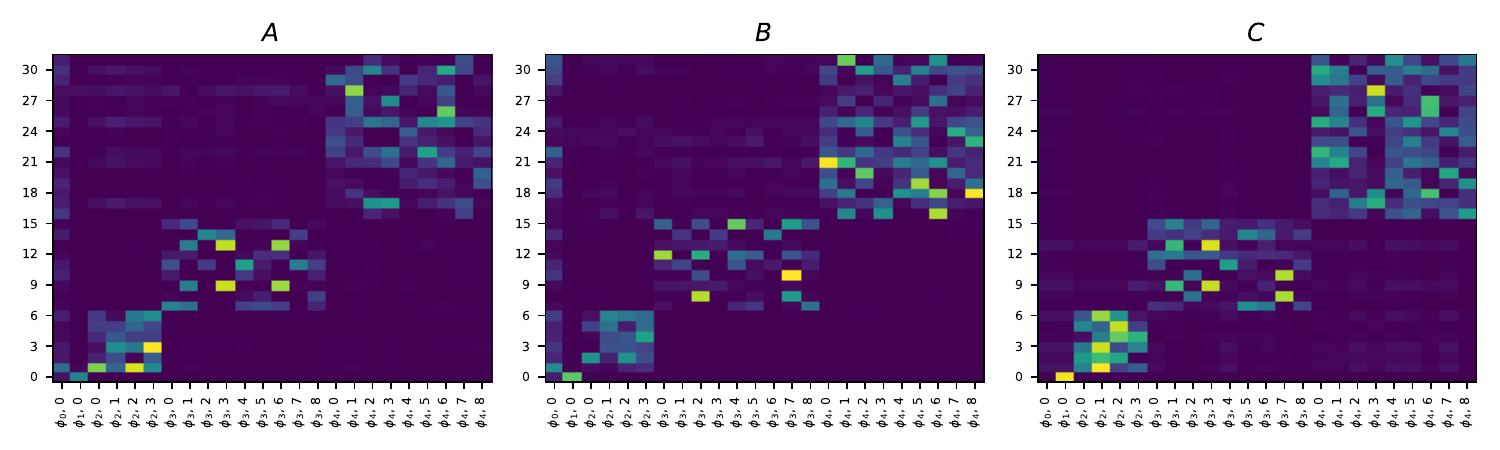} \\
    \includegraphics[width=\textwidth]{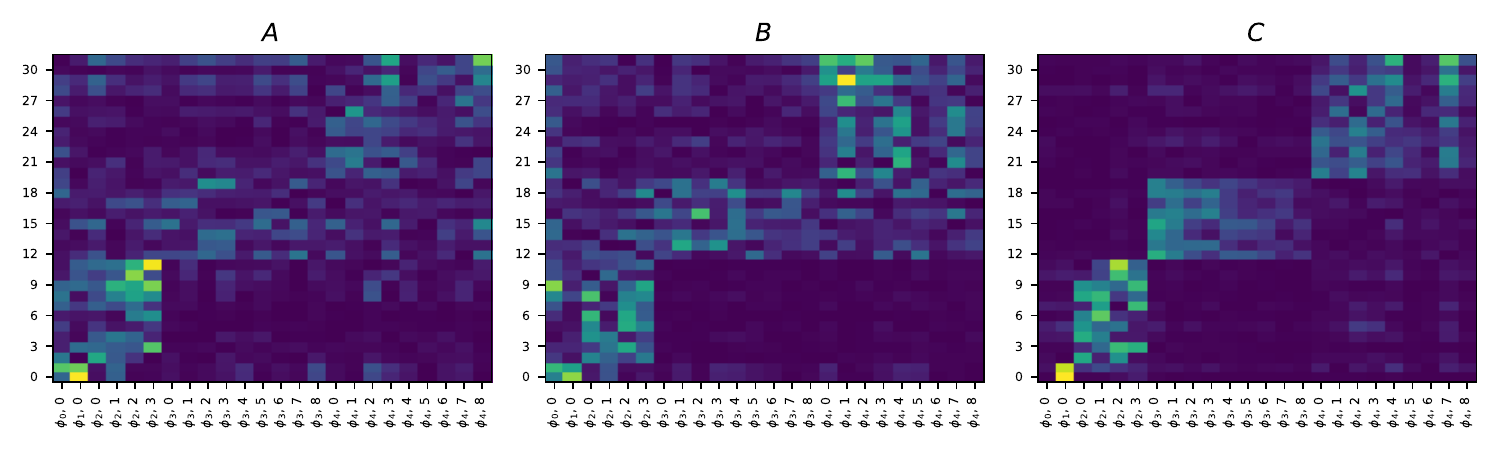} 
\end{minipage}	
\vspace{0.3cm}
\end{figure}

\subsection{Additional heatmaps for the terminal weights under the HD model}
Terminal weight configuration under the HD model for the simple multiplication word. Full dataset was used as train set. Rows are projected onto the subspaces of the bscs of the group. 
\begin{figure}[H]
    \centering
    \includegraphics[width=1\textwidth]{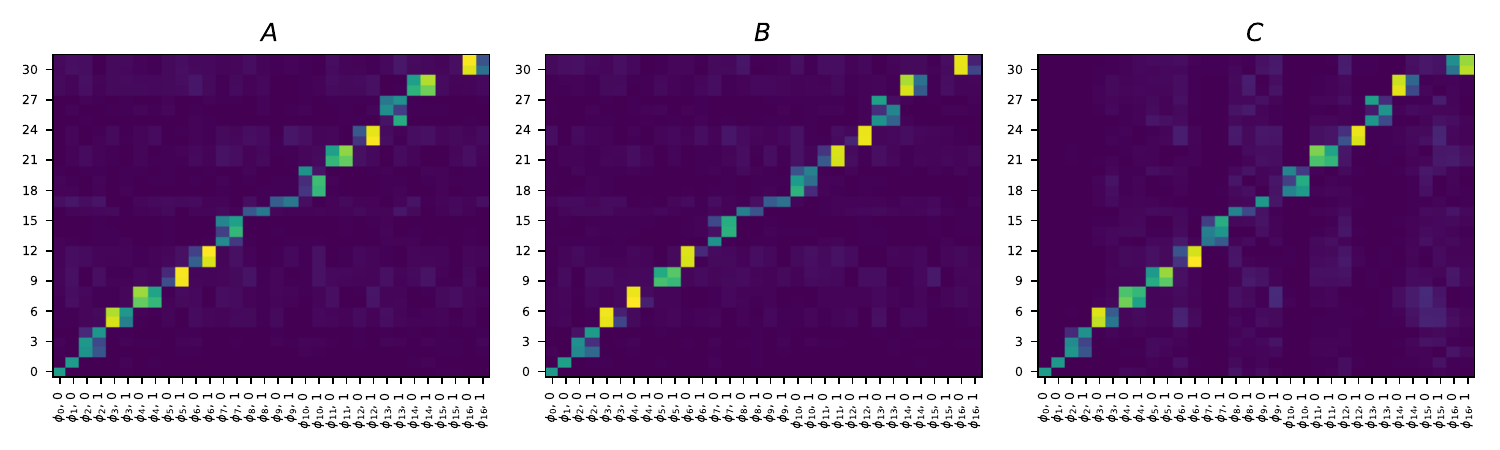}
    \caption{$\mathbb{Z}_{32}$.}
\end{figure}

\begin{figure}[H]
    \centering
    \includegraphics[width=1\textwidth]{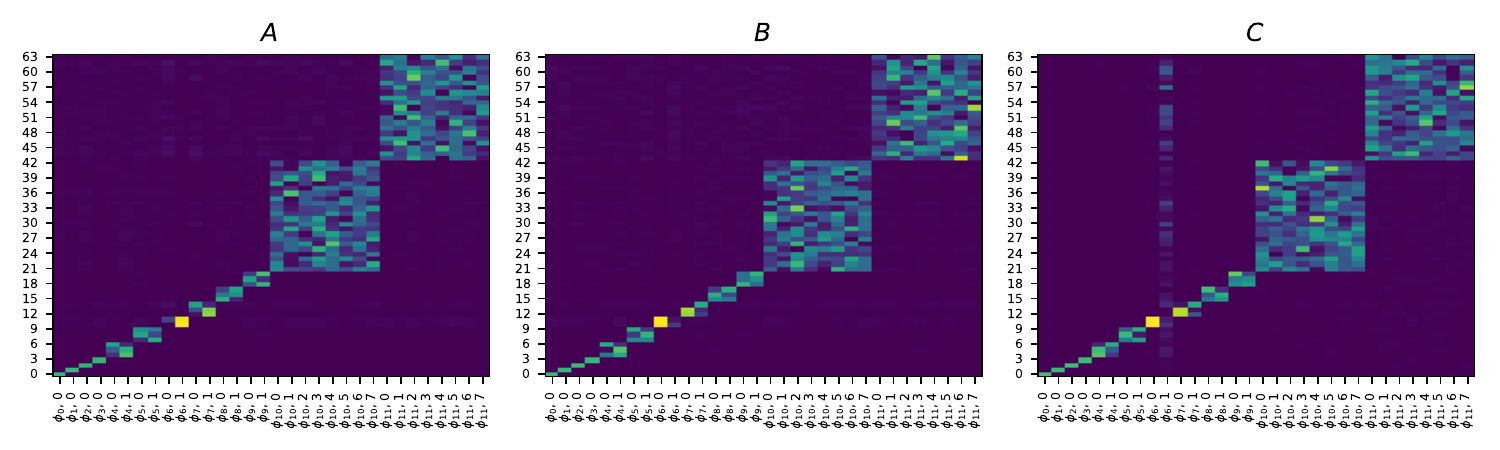} 
    \caption{$M_{5}(2)$.}
\end{figure}

\subsection{Terminal accuracy under the HD model with various widths and fraction of samples}
Final accuracy, for different groups, widths $N$ and train fractions, as average over $20$ runs of GD for the HD-model 
starting from a random initialization and using a random train-test split.
    Error bars mark one standard deviation.
\begin{figure}[H]
    \centering
    \includegraphics[width=1\textwidth]{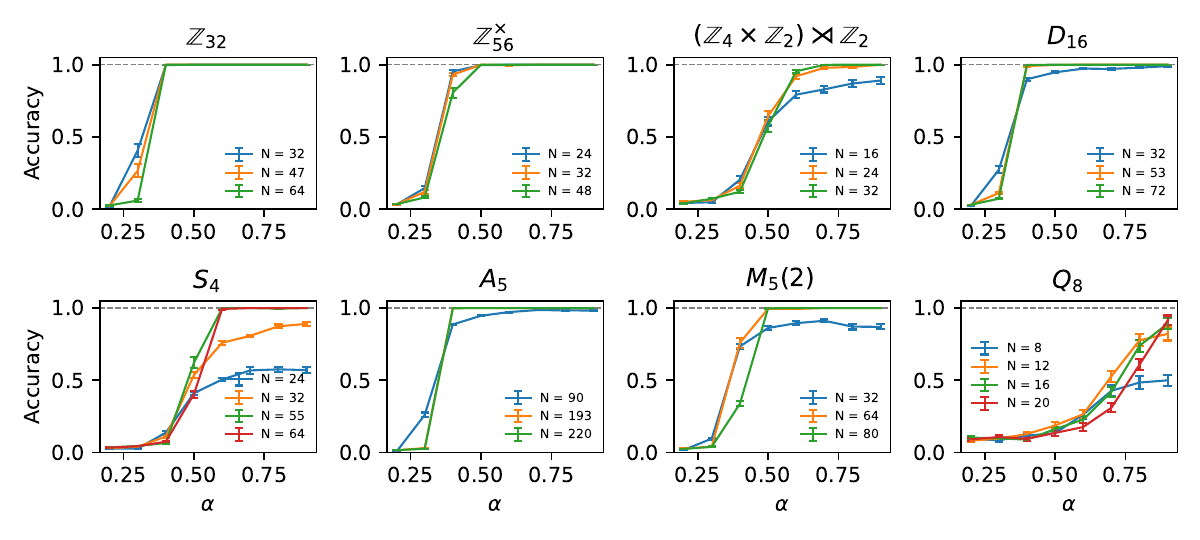} 
\end{figure}

\subsection{Additional train/test accuracy/loss evolution for various groups}
The evolution of train/test loss/accuracy during training in one run under the HD model for various groups and fraction of train samples.
\begin{figure}[H]
\begin{minipage}{0.33\linewidth}
    \centering
    \includegraphics[width=1\textwidth]{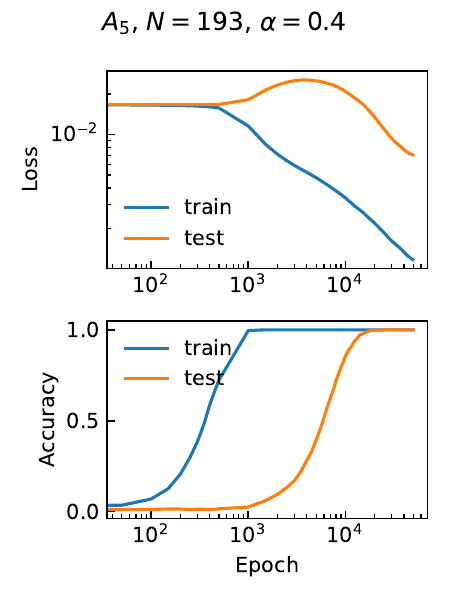} 
\end{minipage}
\begin{minipage}{0.32\linewidth}
    \centering
    \includegraphics[width=1.05\textwidth]{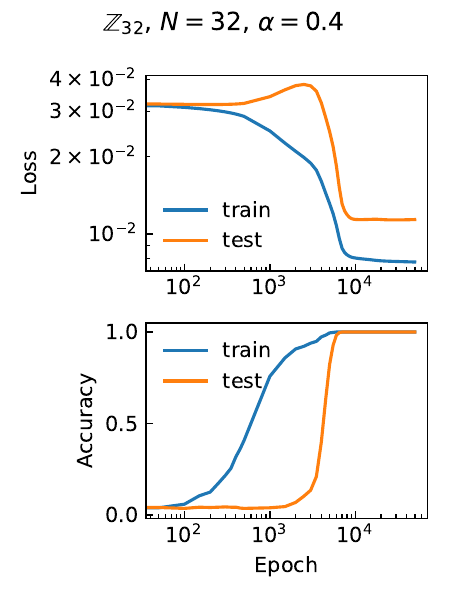} 
\end{minipage}
\begin{minipage}{0.33\linewidth}
    \centering
    \includegraphics[width=1\textwidth]{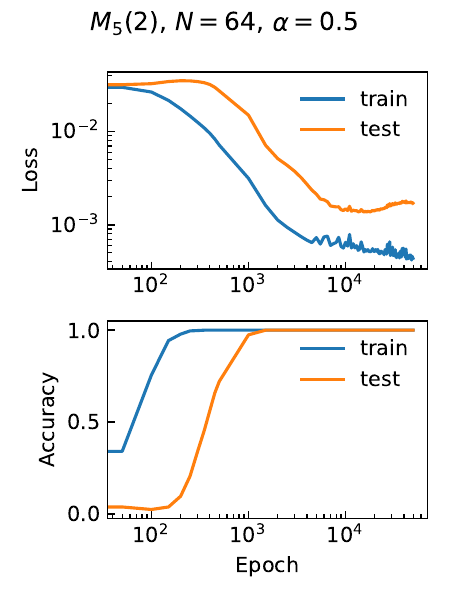} 
\end{minipage}
\end{figure}

\subsection{Loss decomposition for various groups and model widths}
Median terminal accuracy, total loss, bsc-loss and number of rows per bsc across 20 runs for the HD model with widths and groups.
\begin{table}[H]
\setlength\tabcolsep{6pt}
\begin{tabular}{|c|c|c|c|c|c|c|c|}
\hline
 $N$ & \thead{Final \\ accuracy} & \thead{Final \\ loss} & \thead{ $\phi_{0}$\\I, $d = 1$}&\thead{ $\phi_{1}$\\I, $d = 1$}&\thead{ $\phi_{2}$\\I, $d = 2$}&\thead{ $\phi_{3}$\\I, $d = 3$}&\thead{ $\phi_{4}$\\I, $d = 3$}\tabularnewline 
\hline 
$24$ &\small{$0.99045$} & \small{$0.023$} & \small{$1.3 \times 10^{-31}$} & \small{$1.2 \times 10^{-31}$} & \small{$0.0026$} & \small{$0.01$} & \small{$0.0093$}\tabularnewline
 & &  & \small{$1$} & \small{$1$} & \small{$5$} & \small{$9$} & \small{$10$}\tabularnewline 
\hline 
$32$ &\small{$1$} & \small{$0.016$} & \small{$1.2 \times 10^{-31}$} & \small{$1.2 \times 10^{-31}$} & \small{$0.00043$} & \small{$0.0072$} & \small{$0.0081$}\tabularnewline
 & &  & \small{$1$} & \small{$1$} & \small{$6$} & \small{$12$} & \small{$11$}\tabularnewline 
\hline 
$55$ &\small{$1$} & \small{$0.002$} & \small{$1.1 \times 10^{-31}$} & \small{$1.1 \times 10^{-31}$} & \small{$1.3 \times 10^{-9}$} & \small{$4.5 \times 10^{-6}$} & \small{$0.00088$}\tabularnewline
 & &  & \small{$1$} & \small{$1$} & \small{$8$} & \small{$23$} & \small{$22$}\tabularnewline 
\hline 
$64$ &\small{$1$} & \small{$3 \times 10^{-7}$} & \small{$5 \times 10^{-22}$} & \small{$5.2 \times 10^{-18}$} & \small{$8.9 \times 10^{-10}$} & \small{$1.6 \times 10^{-7}$} & \small{$1 \times 10^{-7}$}\tabularnewline
 & &  & \small{$1$} & \small{$1$} & \small{$9$} & \small{$27$} & \small{$26$}\tabularnewline 
\hline 

\end{tabular}\caption{$S_{4}$.}
\end{table}

\begin{table}[H]
\setlength\tabcolsep{6pt}
\begin{tabular}{|c|c|c|c|c|c|c|c|}
\hline
 $N$ & \thead{Final \\ accuracy} & \thead{Final \\ loss} & \thead{ $\phi_{0}$\\I, $d = 1$}&\thead{ $\phi_{1}$\\I, $d = 3$}&\thead{ $\phi_{2}$\\I, $d = 3$}&\thead{ $\phi_{3}$\\I, $d = 4$}&\thead{ $\phi_{4}$\\I, $d = 5$}\tabularnewline 
\hline 
$90$ &\small{$1$} & \small{$0.0093$} & \small{$2.5 \times 10^{-32}$} & \small{$0.00084$} & \small{$0.00083$} & \small{$0.0026$} & \small{$0.005$}\tabularnewline
 & &  & \small{$1$} & \small{$15$} & \small{$15$} & \small{$25$} & \small{$34$}\tabularnewline 
\hline 
$193$ &\small{$1$} & \small{$0.0017$} & \small{$2.4 \times 10^{-32}$} & \small{$1.8 \times 10^{-8}$} & \small{$1.9 \times 10^{-8}$} & \small{$7.6 \times 10^{-5}$} & \small{$0.0015$}\tabularnewline
 & &  & \small{$1$} & \small{$28$} & \small{$27$} & \small{$54$} & \small{$83$}\tabularnewline 
\hline 
$220$ &\small{$1$} & \small{$0.00054$} & \small{$2.5 \times 10^{-32}$} & \small{$2.2 \times 10^{-8}$} & \small{$2.3 \times 10^{-8}$} & \small{$1.5 \times 10^{-7}$} & \small{$0.00054$}\tabularnewline
 & &  & \small{$1$} & \small{$30$} & \small{$30$} & \small{$61$} & \small{$98$}\tabularnewline 
\hline 
$270$ &\small{$1$} & \small{$3.8 \times 10^{-7}$} & \small{$3.8 \times 10^{-15}$} & \small{$1.5 \times 10^{-8}$} & \small{$1.2 \times 10^{-8}$} & \small{$1.5 \times 10^{-7}$} & \small{$2 \times 10^{-7}$}\tabularnewline
 & &  & \small{$1$} & \small{$33$} & \small{$34$} & \small{$72$} & \small{$127$}\tabularnewline 
\hline 

\end{tabular}\caption{$A_{5}$.}
\end{table}

\begin{table}[H]
\resizebox{0.95\linewidth}{!}{
\setlength\tabcolsep{2pt}
\begin{tabular}{|c|c|c|c|c|c|c|c|c|c|c|c|c|c|c|c|c|c|c|}
\hline
 $N$ & \thead{Final \\ accuracy} & \thead{Final \\ loss} & \thead{ $\phi_{0}$\\I, $d = 1$}&\thead{ $\phi_{1}$\\I, $d = 1$}&\thead{ $\phi_{2}$\\I, $d = 1$}&\thead{ $\phi_{3}$\\I, $d = 1$}&\thead{ $\phi_{4}$\\I, $d = 1$}&\thead{ $\phi_{5}$\\I, $d = 1$}&\thead{ $\phi_{6}$\\I, $d = 1$}&\thead{ $\phi_{7}$\\I, $d = 1$}&\thead{ $\phi_{8}$\\II, $d = 2$}&\thead{ $\phi_{9}$\\II, $d = 2$}&\thead{ $\phi_{10}$\\II, $d = 2$}&\thead{ $\phi_{11}$\\II, $d = 2$}&\thead{ $\phi_{12}$\\II, $d = 2$}&\thead{ $\phi_{13}$\\II, $d = 2$}&\thead{ $\phi_{14}$\\II, $d = 2$}&\thead{ $\phi_{15}$\\II, $d = 2$}\tabularnewline 
\hline 
$24$ &\small{$1$} & \small{$0.01$} & \small{$1.4 \times 10^{-31}$} & \small{$1.5 \times 10^{-31}$} & \small{$1.4 \times 10^{-31}$} & \small{$1.7 \times 10^{-31}$} & \small{$1.4 \times 10^{-31}$} & \small{$1.9 \times 10^{-31}$} & \small{$1.4 \times 10^{-31}$} & \small{$1.5 \times 10^{-31}$} & \small{$0.00087$} & \small{$0.00087$} & \small{$0.00087$} & \small{$0.00087$} & \small{$0.00087$} & \small{$0.00087$} & \small{$0.00087$} & \small{$0.00087$}\tabularnewline
 & &  & \small{$1$} & \small{$1$} & \small{$1$} & \small{$1$} & \small{$1$} & \small{$1$} & \small{$1$} & \small{$1$} & \small{$2$} & \small{$2$} & \small{$2$} & \small{$2$} & \small{$2$} & \small{$2$} & \small{$2$} & \small{$2$}\tabularnewline 
\hline 
$32$ &\small{$1$} & \small{$0.0017$} & \small{$7.6 \times 10^{-23}$} & \small{$8.7 \times 10^{-20}$} & \small{$9.5 \times 10^{-23}$} & \small{$1.1 \times 10^{-24}$} & \small{$5.2 \times 10^{-25}$} & \small{$4.7 \times 10^{-23}$} & \small{$3.3 \times 10^{-23}$} & \small{$2.9 \times 10^{-23}$} & \small{$4.1 \times 10^{-19}$} & \small{$4.9 \times 10^{-17}$} & \small{$4.6 \times 10^{-13}$} & \small{$3.4 \times 10^{-21}$} & \small{$3.7 \times 10^{-14}$} & \small{$3.6 \times 10^{-12}$} & \small{$1.1 \times 10^{-14}$} & \small{$1.4 \times 10^{-12}$}\tabularnewline
 & &  & \small{$1$} & \small{$1$} & \small{$1$} & \small{$1$} & \small{$1$} & \small{$1$} & \small{$1$} & \small{$1$} & \small{$3$} & \small{$3$} & \small{$3$} & \small{$3$} & \small{$3$} & \small{$3$} & \small{$3$} & \small{$3$}\tabularnewline 
\hline 
$48$ &\small{$1$} & \small{$2.8 \times 10^{-8}$} & \small{$3.8 \times 10^{-10}$} & \small{$4.9 \times 10^{-10}$} & \small{$2.8 \times 10^{-10}$} & \small{$3.7 \times 10^{-10}$} & \small{$2.5 \times 10^{-10}$} & \small{$2.9 \times 10^{-10}$} & \small{$4.6 \times 10^{-10}$} & \small{$4.6 \times 10^{-10}$} & \small{$2.1 \times 10^{-9}$} & \small{$1.9 \times 10^{-9}$} & \small{$2.1 \times 10^{-9}$} & \small{$1.8 \times 10^{-9}$} & \small{$1.9 \times 10^{-9}$} & \small{$1.5 \times 10^{-9}$} & \small{$2.6 \times 10^{-9}$} & \small{$1.4 \times 10^{-9}$}\tabularnewline
 & &  & \small{$1$} & \small{$1$} & \small{$1$} & \small{$2$} & \small{$1$} & \small{$2$} & \small{$1$} & \small{$1$} & \small{$4$} & \small{$4$} & \small{$4$} & \small{$4$} & \small{$5$} & \small{$4$} & \small{$5$} & \small{$4$}\tabularnewline 
\hline 
\end{tabular}}
\caption{$\mathbb{Z}_{56}^{\times}\simeq\mathbb{Z}_{2}\times\mathbb{Z}_{2}\times\mathbb{Z}_{6}$.}
\end{table}

\begin{table}[H]
\setlength\tabcolsep{6pt}
\resizebox{0.95\linewidth}{!}{
\begin{tabular}{|c|c|c|c|c|c|c|c|c|c|c|c|c|c|c|c|c|c|c|c|}
\hline
 $N$ & \thead{Final \\ accuracy} & \thead{Final \\ loss} & \thead{ $\phi_{0}$\\I, $d = 1$}&\thead{ $\phi_{1}$\\I, $d = 1$}&\thead{ $\phi_{2}$\\II, $d = 2$}&\thead{ $\phi_{3}$\\II, $d = 2$}&\thead{ $\phi_{4}$\\II, $d = 2$}&\thead{ $\phi_{5}$\\II, $d = 2$}&\thead{ $\phi_{6}$\\II, $d = 2$}&\thead{ $\phi_{7}$\\II, $d = 2$}&\thead{ $\phi_{8}$\\II, $d = 2$}&\thead{ $\phi_{9}$\\II, $d = 2$}&\thead{ $\phi_{10}$\\II, $d = 2$}&\thead{ $\phi_{11}$\\II, $d = 2$}&\thead{ $\phi_{12}$\\II, $d = 2$}&\thead{ $\phi_{13}$\\II, $d = 2$}&\thead{ $\phi_{14}$\\II, $d = 2$}&\thead{ $\phi_{15}$\\II, $d = 2$}&\thead{ $\phi_{16}$\\II, $d = 2$}\tabularnewline 
\hline 
$32$ &\small{$1$} & \small{$0.0098$} & \small{$4.8 \times 10^{-31}$} & \small{$4.7 \times 10^{-31}$} & \small{$0.00049$} & \small{$0.00049$} & \small{$0.00049$} & \small{$0.00049$} & \small{$0.00049$} & \small{$0.00049$} & \small{$0.00049$} & \small{$0.00049$} & \small{$0.00049$} & \small{$0.00049$} & \small{$0.00049$} & \small{$0.00049$} & \small{$0.00049$} & \small{$0.00049$} & \small{$0.00049$}\tabularnewline
 & &  & \small{$1$} & \small{$1$} & \small{$2$} & \small{$2$} & \small{$2$} & \small{$2$} & \small{$2$} & \small{$2$} & \small{$2$} & \small{$2$} & \small{$2$} & \small{$2$} & \small{$2$} & \small{$2$} & \small{$2$} & \small{$2$} & \small{$2$}\tabularnewline 
\hline 
$47$ &\small{$1$} & \small{$0.0015$} & \small{$1 \times 10^{-19}$} & \small{$3.3 \times 10^{-18}$} & \small{$1.9 \times 10^{-16}$} & \small{$1.5 \times 10^{-14}$} & \small{$2 \times 10^{-12}$} & \small{$1.9 \times 10^{-17}$} & \small{$3.6 \times 10^{-16}$} & \small{$2 \times 10^{-15}$} & \small{$8.3 \times 10^{-15}$} & \small{$2.2 \times 10^{-14}$} & \small{$8 \times 10^{-18}$} & \small{$9.9 \times 10^{-19}$} & \small{$3 \times 10^{-16}$} & \small{$5.8 \times 10^{-14}$} & \small{$4.9 \times 10^{-17}$} & \small{$8.3 \times 10^{-16}$} & \small{$5.2 \times 10^{-17}$}\tabularnewline
 & &  & \small{$1$} & \small{$1$} & \small{$3$} & \small{$3$} & \small{$3$} & \small{$3$} & \small{$3$} & \small{$3$} & \small{$3$} & \small{$3$} & \small{$3$} & \small{$3$} & \small{$3$} & \small{$3$} & \small{$3$} & \small{$3$} & \small{$3$}\tabularnewline 
\hline 
$64$ &\small{$1$} & \small{$5 \times 10^{-8}$} & \small{$5.3 \times 10^{-10}$} & \small{$3.5 \times 10^{-10}$} & \small{$2.4 \times 10^{-9}$} & \small{$3.4 \times 10^{-9}$} & \small{$1.3 \times 10^{-9}$} & \small{$2.4 \times 10^{-9}$} & \small{$2 \times 10^{-9}$} & \small{$2.1 \times 10^{-9}$} & \small{$2.5 \times 10^{-9}$} & \small{$3.1 \times 10^{-9}$} & \small{$1.7 \times 10^{-9}$} & \small{$2.2 \times 10^{-9}$} & \small{$3.9 \times 10^{-9}$} & \small{$2.8 \times 10^{-9}$} & \small{$2.6 \times 10^{-9}$} & \small{$2.7 \times 10^{-9}$} & \small{$2.1 \times 10^{-9}$}\tabularnewline
 & &  & \small{$1$} & \small{$1$} & \small{$4$} & \small{$4$} & \small{$4$} & \small{$4$} & \small{$4$} & \small{$4$} & \small{$4$} & \small{$4$} & \small{$4$} & \small{$4$} & \small{$4$} & \small{$4$} & \small{$4$} & \small{$4$} & \small{$4$}\tabularnewline 
\hline 
\end{tabular}}
\caption{$\mathbb{Z}_{32}$.}
\end{table}

\begin{table}[H]
\setlength\tabcolsep{2pt}
\resizebox{0.95\textwidth}{!}{
\begin{tabular}{|c|c|c|c|c|c|c|c|c|c|c|}
\hline
 $N$ & \thead{Final \\ accuracy} & \thead{Final \\ loss} & \thead{ $\phi_{0}$\\I, $d = 1$}&\thead{ $\phi_{1}$\\I, $d = 1$}&\thead{ $\phi_{2}$\\I, $d = 1$}&\thead{ $\phi_{3}$\\I, $d = 1$}&\thead{ $\phi_{4}$\\I, $d = 2$}&\thead{ $\phi_{5}$\\I, $d = 2$}&\thead{ $\phi_{6}$\\II, $d = 2$}&\thead{ $\phi_{7}$\\II, $d = 2$}\tabularnewline 
\hline 
$16$ &\small{$1$} & \small{$0.02$} & \small{$2.9 \times 10^{-32}$} & \small{$3 \times 10^{-32}$} & \small{$3.1 \times 10^{-32}$} & \small{$3.5 \times 10^{-32}$} & \small{$0.0088$} & \small{$0.0043$} & \small{$0.002$} & \small{$0.002$}\tabularnewline
 & &  & \small{$1$} & \small{$1$} & \small{$1$} & \small{$1$} & \small{$3$} & \small{$5$} & \small{$2$} & \small{$2$}\tabularnewline 
\hline 
$24$ &\small{$1$} & \small{$0.002$} & \small{$3.3 \times 10^{-32}$} & \small{$3.2 \times 10^{-32}$} & \small{$3.1 \times 10^{-32}$} & \small{$3 \times 10^{-32}$} & \small{$3.5 \times 10^{-18}$} & \small{$1.2 \times 10^{-13}$} & \small{$2.8 \times 10^{-31}$} & \small{$5.1 \times 10^{-31}$}\tabularnewline
 & &  & \small{$1$} & \small{$1$} & \small{$1$} & \small{$1$} & \small{$7$} & \small{$7$} & \small{$3$} & \small{$3$}\tabularnewline 
\hline 
$32$ &\small{$1$} & \small{$6.3 \times 10^{-9}$} & \small{$1.5 \times 10^{-11}$} & \small{$1.5 \times 10^{-11}$} & \small{$2.1 \times 10^{-11}$} & \small{$1.4 \times 10^{-10}$} & \small{$1.2 \times 10^{-9}$} & \small{$1.7 \times 10^{-9}$} & \small{$3 \times 10^{-10}$} & \small{$3.2 \times 10^{-10}$}\tabularnewline
 & &  & \small{$1$} & \small{$1$} & \small{$1$} & \small{$1$} & \small{$9$} & \small{$9$} & \small{$4$} & \small{$3$}\tabularnewline 
\hline 
\end{tabular}
}
\caption{$(\mathbb{Z}_{4}\times\mathbb{Z}_{2})\rtimes\mathbb{Z}_{2}$. }
\end{table}

\subsection{Single-bsc dynamics}
Terminal bsc-loss in repeated (20-100) runs of the model for various groups and bscs as a function of the width of the network, with initial weights chosen randomly from $R_\phi$. The minimal loss is marked with a blue diamond. 
\begin{figure}[H]
\tiny
\renewcommand\theadfont{\tiny}
\begin{subfigure}{0.33\linewidth}
\includegraphics[width=\textwidth]{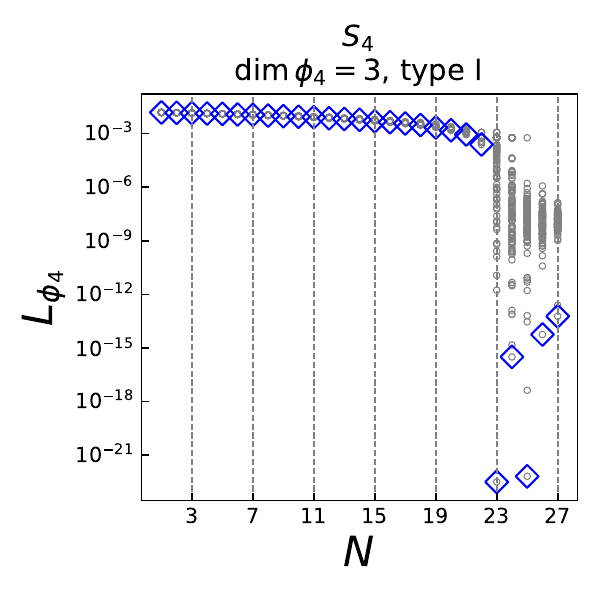} 
\end{subfigure}
\begin{subfigure}{0.33\linewidth}
\includegraphics[width=\textwidth]{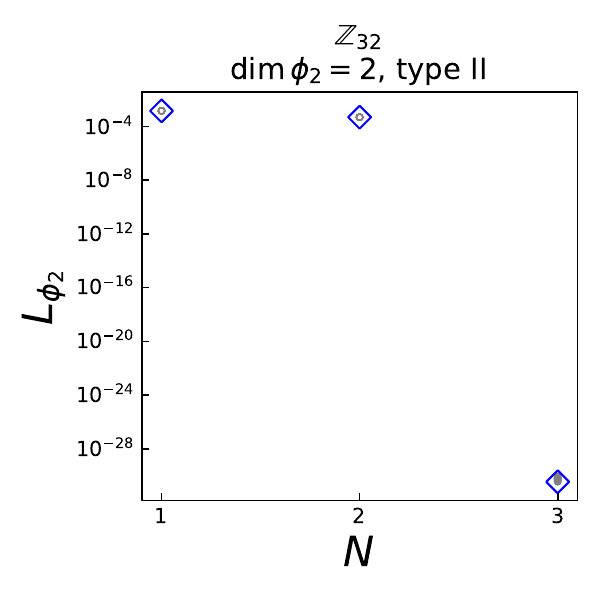} 
\end{subfigure}
\begin{subfigure}{0.33\linewidth}
\includegraphics[width=\textwidth]{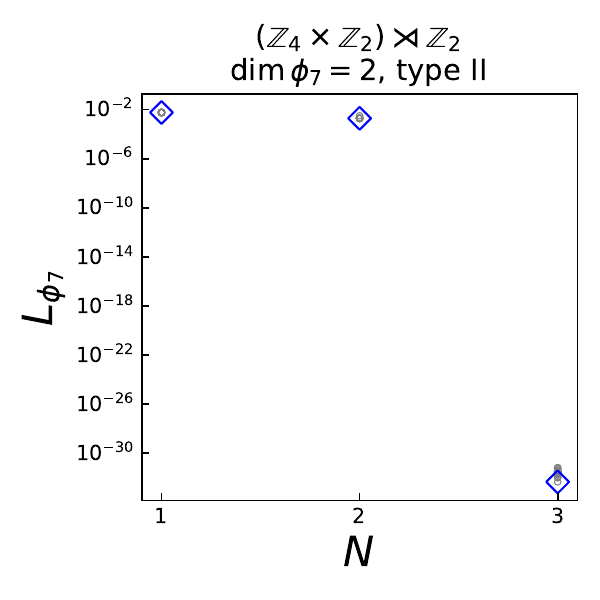} 
\end{subfigure}
\end{figure}

\end{document}